\theoremstyle{plain}
\newtheorem{theorem}{Theorem}[section]
\newtheorem{proposition}[theorem]{Proposition}
\newtheorem{lemma}[theorem]{Lemma}
\theoremstyle{definition}
\newtheorem{definition}[theorem]{Definition}
\newtheorem{assumption}[theorem]{Assumption}
\newtheorem{remark}[theorem]{Remark}
\newtheorem*{main generalization result}{Main Generalization Result}
\title{Generalization Error Bounds for Deep Neural Networks Trained by SGD}
\author{%
  Mingze Wang \\
  School of Mathematical Sciences \\
  Peking University\\
  Beijing, 100081, P.R. China \\
  \texttt{mingzewang@stu.pku.edu.cn} \\
  \And
  Chao Ma\\
  Department of Mathematics \\
  Stanford University\\
  Stanford, CA 94305 \\
  \texttt{chaoma@stanford.edu} \\
}
\begin{document}

\maketitle

\begin{abstract}
Generalization error bounds for deep neural networks trained by stochastic gradient descent (SGD) are derived by combining a dynamical control of an appropriate parameter norm and the Rademacher complexity estimate based on parameter norms. The bounds explicitly depend on the loss along the training trajectory, and work for a wide range of network architectures including multilayer perceptron (MLP) and convolutional neural networks (CNN). Compared with other algorithm-depending generalization estimates such as uniform stability-based bounds, our bounds do not require $L$-smoothness of the nonconvex loss function, and apply directly to SGD instead of Stochastic Langevin gradient descent (SGLD). Numerical results show that our bounds are non-vacuous and robust with the change of optimizer and network hyperparameters. 
\end{abstract}

\section{Introduction}
Deep neural networks (DNN) trained by optimization algorithms based on Stochastic Gradient Descent (SGD) have achieved great performance in various fields such as computer vision, natural language processing, and speech recognition~\citep{goodfellow2016deep}. Yet, theoretical understanding for the surprising generalization capability of DNNs still has a long way to go to explain the success under practical settings \citep{zhangrethinking2017}. Along this direction, the main hurdles are the over-parameterization and the strong algorithm dependency. On one hand, over-parameterized neural networks have super rich hypothesis spaces that can perfectly interpolation all training data, which hinder the application of traditional complexity theories such as the VC-dimension. On the other hand, the hypothesis explored by these networks depend sensitively and dynamically on the optimization algorithm and its hyperparameters, which makes it hard to isolate the models from algorithms in a generalization theory. Therefore, it is crucial to study how optimization algorithms narrow down the hypothesis space and benefit the generalization performance. 

Some techniques are developed to address one or two of the obstacles mentioned above, and provide non-vacuous bounds of generalization errors for deep neural networks. Such works include parameter norm-based \citep{golowich2018size, ma2018priori, weinan2021barron, bartlett2017spectrally} and uniform stability-based estimates \citep{rogers1978finite, hardt2016train, bousquet2020sharper, hoffer2017train}. However, the works either fall short to consider the algorithm dependency (like norm-based bounds), or need to impose strong conditions on the algorithm and loss function (like the isotropic noise and $L$-smooth loss function for stability-based bounds). In this paper, instead, we derive a class of generalization error bounds that are algorithm dependent, and work in much more realistic settings. Technically, we combine an analysis of SGD trajectory with the norm-based generalization error estimates, and the only assumption for our analysis is the boundedness of the network function. Our analysis can be applied to a wide range of network structures, such as fully-connected neural networks (FNNs) and convolutional neural networks (CNNs), and a wide range of optimization algorithms, including GD, SGD and SGLD. 

\subsection{Main Results}
An illustrative description of our main generalization bound is as follows:
\begin{main generalization result}[\bfseries Informal]\ \\
Consider deep FNNs or CNNs trained by algorithms such as GD and SGD with quadratic loss. Let $n$ be the number of training data and $L$ be the depth of networks. Assume the output of the neural network model is uniformly bounded. Then, with high probability we have:
\[
{\rm generalization\ error}\lesssim \mathcal{O}\Big(\frac{{\rm cumulative\ loss}^{L/2}}{\sqrt{n}}\Big).
\]
\end{main generalization result}

{The cumulative loss is a functional of the training loss trajectory. It is smaller when the loss decreases faster during training. This term is slightly different for different networks and algorithms.}

\textbf{Range of applicability.}\\
On the algorithm side, our bounds applies to the practically used version of SGD, going beyond the SGLD studied in most uniform stability-based works. {Compared with the anisotropic noise of SGD, SGLD takes an isotropic noise and has different behaviors.} Our analysis also works on full-batch GD. On the model side, our bounds hold for any layer-wise neural network models with homogeneous activation functions, such as FNNs, CNNs, and RNNs with ReLU or Leaky ReLU activation functions. On these models, our estimates are independent of the neural network's width, and all terms in the bounds are easy to calculate along the training process. More importantly, our bounds do no suffer from the curse of dimensionality. Lastly, while most previous works studying the algorithm-dependent generalization performance of neural networks are built on the seemingly reasonable $L$-smoothness assumption of the loss function (e.g. the uniform stability bounds), our analysis does not rely on this assumption and only need the network function to be bounded. Therefore, our bounds are not impaired by recent works questioning the $L$-smoothness of the loss function~\citep{cohen2021gradient}.

\textbf{Comparison with uniform stability results.}\\
Uniform stability is a representative technique to derive algorithm-dependent generalization bounds based on the algorithm's stability with respect to perturbations on training data~\citep{hoffer2017train}. One problem of the application of uniform stability is its dependency on the $L$-\textit{smoothness} assumption of the loss function, which is imposed in all such works treating nonconvex loss functions and SGD. Recently it is shown that gradient descent on DNNs cannot be analyzed using (even local) $L$-smoothness at any reasonable step size because the sharpness hovers just above $2/\eta_t$  \citep{cohen2021gradient,wu2018sgd}.

On the other side, in the non-$L$-smooth scenario, the uniform stability theory can be applied to analyze Stochastic Langevin Gradient Descent (SGLD) method \citep{bassily2020stability,raginsky2017non,welling2011bayesian,zhang2017hitting} rather than GD or SGD \citep{mou2018generalization}. Though, SGLD is just an approximation to SGD in theory, and it is unclear that whether the isotropic Gaussian noise of SGLD is an appropriate substitution to the anisotropic SGD noise in practice~\citep{zhu2018anisotropic}.

Compared with uniform stability-based results, our bounds works in a much more realistic setting---SGD/GD optimizing a nonconvex loss function without $L$-smoothness condition. A simple comparison on different cases is shown in Table~\ref{sample-table}. We conduct further detailed comparison and discussion in Section~\ref{section:versus}.

\begin{table}[ht]
\caption{Comparison with uniform stability results for deep CNNs or FNNs.}
\label{sample-table}
\vskip 0.1in
\begin{center}
\begin{small}
\begin{tabular}{c|c|c}
      \hline\hline 
      & Bounds & Assumptions \\  \hline
      Uniform Stability for SGLD &  \Checkmark 
	& Bounded and Lipschitz 
	\\\hline
	  Uniform Stability for GD/SGD
	  & \XSolidBrush &\XSolidBrush \\  \hline
	\textbf{Our Results} for (batch) \textbf{GD/SGD}
	 &  \Checkmark & \textbf{Only} Bounded
	\\
	\hline \hline			
\end{tabular}
\end{small}
\end{center}
\vskip -0.1in
\end{table}

\section{\bfseries Related Work}\label{section related work}
Classical statistical learning theories such as Vapnik-Chervonenkis (VC) dimension~\citep{vapnik1994measuring} fails to give effective generalization bounds for DNNs in the over-parameterized scenario~\citep{neyshabur2017exploring}. Researchers have proposed other theories to explain the generalization performance of deep neural networks. We list some such theories below.

\textbf{Norm-based complexity measure.}\ \ 
Norm-based generalization bounds are a class of representative results that use parameter norms to control the Rademacher complexity of the hypothesis space. These bounds do not explicitly depend on the number of parameters. Various parameter norms have been proposed and used, such as path norm \citep{ma2018priori, ma2019priori, li2020complexity, weinan2021barron}, $l^{p,q}$ norm \citep{golowich2018size}, spectral norm \citep{bartlett2017spectrally} and Fisher-Rao norm \citep{liang2019fisher, tu2019understanding}. 

\textbf{Uniform stability theory.}\ \ Uniform stability approach is also extensively used to derive generalization bounds
\citep{rogers1978finite,bousquet2020sharper}. These bounds, depending on the optimization trajectory, often take the ``Train faster, Generalize better'' form~\citep{hardt2016train}.
However, the applications of uniform stability theory often relies on smoothness assumptions of the loss function, while recent work argued that gradient descent on DNNs cannot be analyzed using (even local) $L$-smoothness at reasonable step size~\citep{cohen2021gradient}. 
While attempts are made to bypass the $L$-smoothness condition, so far the analysis can only be conduced on SGLD~\citep{raginsky2017non, welling2011bayesian, zhang2017hitting} rather than real GD or SGD \citep{mou2018generalization}.

\textbf{Other generalization theories.}\ \ Another notable line of works on generalization bounds employs the information theory \citep{shwartz2017opening}. In~\citep{kraskov2004estimating}, mutual information (MI) is used to measure the information transmission and information loss of deep learning models and algorithms. In~\citep{xu2017information, haghifam2020sharpened, bu2020tightening}, MI is used to derive algorithm-dependent generalization bounds. Chaining and conditional MI are also explored to derive more accurate bounds~\citep{asadi2018chaining, steinke2020reasoning}. 
Besides, other techniques and approaches used to bound generalization error include model compression~\citep{arora2018stronger}, margin theory~\citep{li2018tighter}, path length estimate~\citep{liu2022connecting} and linear stability of optimization algorithms~\citep{ma2021sobolev}.

Lastly, we particularly mention the work \citep{liu2022connecting}. This work is related to ours since we both consider the connection between optimization and generalization by path estimate. In~\citep{liu2022connecting}, the authors derive generalization bounds for the Gradient Flow (GF) on linear and nearly linear models. By comparison, we analyze GF, GD and SGD for non-linear deep neural networks (MLPs and CNNs).



\section{Preliminaries}\label{section preliminary}
\subsection{Notations}
We use capital letters to represent vectors or matrices and lowercase letters to represent scalars, e.g. $\mathbf{x}=(x_1,\cdots,x_d)^\top\in\mathbb{R}^d$ and $\mathbf{A}=(A_{ij})_{m_1\times m_2}\in\mathbb{R}^{m_1\times m_2}$. We use $\left<\cdot,\cdot\right>$ to denote the standard Euclidean inner product between
two vectors. $\left\|\cdot\right\|_2$, $\left\|\cdot\right\|_F$, and $\left\|\cdot\right\|_{p,q}$ are $l_2$ norm, Frobenius norm, and $(p,q)$ norm of matrices, respectively, where $\left\|\mathbf{A}\right\|_{p,q}=\max_{\mathbf{x}\ne \mathbf{0}}(\left\|\mathbf{Ax}\right\|_q/\left\|\mathbf{x}\right\|_p)$, and $\left\|\cdot\right\|_p$ is $p$ norm of vectors. We use $\lesssim$ to hide  absolute constants. Let ${\rm vec}(\mathbf{A})$ be the vectorization of a matrix $\mathbf{A}$ in column-first order.
Let $[n]=\{1,\cdots,n\}$. Denote by $\mathcal{N}(\mathbf{0},\mathbf{\Sigma})$ the high dimensional Gaussian distribution with mean $\mathbf{0}$ and covariance $\mathbf{\Sigma}$. 


\subsection{Problem Setup}
In this paper, we consider supervised learning problems. Let $\mu$ be a data distribution.
We are given $n$ training data $\{(\mathbf{x}_i,y_i)\}_{i=1}^n$ drawn i.i.d. from $\mu$. Without loss of generality, we assume $\left\|{\rm vec}(\mathbf{x})\right\|_2\leq1$ and $|y|\leq C_y\leq1$ for $(\mathbf{x},y)$.

In supervised learning, the population risk with quadratic loss and the corresponding empirical risk can be written as
{
\begin{align}
    &\mathcal{L}_{\mu}({\Theta})=\mathbb{E}_{(\mathbf{x},y)\sim\mu}[\ell(\mathbf{x},y;\Theta)],
    \label{population loss}
    \\
    &\mathcal{L}_n({\Theta})=\frac{1}{n}\sum\limits_{i=1}^n\ell(\mathbf{x}_i,y_i;\Theta)\label{empirical loss},
\end{align}}
where $\ell(\mathbf{x},y;\Theta) =\frac{1}{2}(f(\mathbf{x};\Theta)-y)^2$, $f(\mathbf{x},\Theta)$ is the model and $\Theta$ represents all parameters of the model.
The generalization error is defined as:
{
\begin{equation}\label{generalization error}
    \mathcal{E}_{\rm gen}(\Theta)=\mathcal{L}_{\mu}(\Theta)-\mathcal{L}_n(\Theta).
    \end{equation}
}

The learning problem is solved by minimizing the empirical risk~\eqref{empirical loss} using some optimization algorithms such as Gradient Descent Algorithm (GD) and Stochastic Gradient Descent Algorithm (SGD) starting from random initialization.

\subsection{Models}
We consider a general class of deep neural networks as our prediction model $f(\mathbf{x},\Theta)$. The class of models contains widely used deep FNNs and CNNs.
In our models, we consider a normalization factor $1/m^p$ $(p\geq0)$ at the output layer, which allows our following theoretical analysis to cover not only the regular case $(p=0)$, but also the NTK case $(p=1/2)$ \citep{jacot2018neural} and the mean-field case $(p=1)$ \citep{mei2018mean, ma2018priori}.


\noindent\textbf{Deep CNN or FNN.}\ 
We define a neural network with $L_C$ convolutional layers followed by $L_F$ fully-connected layers as follows:
\begin{equation}\label{deep CNN}
    \begin{aligned}
    f(\mathbf{x};\Theta)&=\frac{1}{m^p}\sum_{k=1}^{m}a_k{z}_k^{(L)},\\
	\mathbf{z}^{(l)}&=\sigma(\mathbf{A}^{(l)^{\top}}\mathbf{z}^{(l-1)}),\ l\in[L]-[L_C],\\
	\mathbf{z}^{(l)}&={\rm pool}(\mathbf{y}^{(l)}),\ l\in[L_C], \\
	\mathbf{y}^{(l)}& =\sigma(\mathbf{w}^{(l)^\top}\circledast\mathbf{z}^{(l-1)}), \ l\in[L_C],\\
    \mathbf{z}^{(0)}&=\mathbf{x},
    \end{aligned}
\end{equation}

where $\sigma(z)$ is the ReLU function $\max\{z,0\}$, $\circledast$ is the convolutional operation, pool$(\cdot)$ is the average/max pooling operation, $\mathbf{x}$ is the input, and $m_l$ is the dimension of ${\rm vec}(\mathbf{z}^{(l)})$. Considering the output layer, the depth of such network is $L_C+L_F+1$. When $L_C=0$, this is a fully-connected network. Let $L=L_C+L_F$. For output layer $l=L+1$, let $\Theta^{(L+1)}:=(a_1,\cdots,a_m)^\top\in\mathbb{R}^{m}$. For fully-connected layer $l\in[L]-[L_C]$, $\mathbf{A}^{(l)}\in\mathbb{R}^{m_l \times m_{l-1}}$ and we let $\Theta^{(l)}:={\rm vec}(\mathbf{A}^{(l)})$. For convolution layer $l\in[L_C]$, we consider the structure \texttt{Conv $\to$ ReLU $\to$ Pooling}, and $\Theta^{(l)}:={\rm vec}(\mathbf{w}^{(l)})\in\mathbb{R}^{s_l}$. Then $\Theta=(\Theta^{(1)^\top},\cdots,\Theta^{(L+1)^\top})^\top\in\mathbb{R}^{m+\sum_{l\in[L_C]}s_l+\sum_{l\in[L]-[L_C]}m_{l-1}m_l}$ represents all parameters. We denote the dimension of $\Theta^{(l)}$ as $q(l)$.

\subsection{Optimization Algorithms}
\label{subsection: algorithms}

\textbf{Random initialization.} We use the Gaussian random initialization for each layer:
{
\begin{equation}\label{random initialization}
 \Theta^{(l)}(0)\sim\mathcal{N}(\mathbf{0},\frac{\kappa^2}{q(l)}\mathbf{I}_{q(l)}),\ \forall l\in[L+1],
\end{equation}}
where $\kappa^2=\mathcal{O}(1)$ ($\kappa\neq0$) controls the scale of initialization, and $q(l)$ is the number of parameters in layer $l$. Similar initializations are standard practices in applications~\citep{glorot2010understanding, he2015delving}.

\textbf{Update rules.}
We mainly consider the mini-batch SGD
{
\begin{equation}
{\rm\textbf{SGD}:}\ 
\Theta{(t+1)}=\Theta{(t)}-\frac{\eta_t}{B}\sum\limits_{i=1}^B\nabla \ell(\mathbf{x}_{\gamma_i^t},y_{\gamma_i^t};\Theta{(t)}),
\label{disc SGD}
\end{equation}
}
where $\gamma^t=(\gamma_1^t,\cdots,\gamma_B^t)$ is a $B$-dimensional random variable uniformly distributed on the $B$-tuples
in $[n]$ and independent with $\Theta(t)$.
Our theory also applies to full batch GD, and even the continuous gradient flow (GF) which is the limit of GD as the step size tends to $0$:
\begin{align}
&{\rm\textbf{GF}:}
\ \ \ \ \frac{\mathrm{d}\Theta{(t)}}{\mathrm{d}t}=-\nabla \mathcal{L}(\Theta{(t)}), \label{conti GD}\\
&{\rm\textbf{GD}:}
\ \ \ \ 
\Theta{(t+1)}=\Theta{(t)}-\eta_t\nabla \mathcal{L}(\Theta{(t)}), \label{disc GD}
\end{align}

\section{Generalization Bounds}\label{section generalization bound}
During the analysis, we make the following boundedness assumption. 
Notably, except the boundedness assumption, we \textit{do not need} any other assumption such as Lipschitz continuity and $L$-smoothness of the loss function. 
\begin{assumption}\label{assumption bounded}
We use $\mathcal{F}$ to denote the hypothesis
space, i.e. the set of all output functions from the neural network trained by GD or SGD. 
We assume that there exists $C_f>0$ s.t. $\sup\limits_{f\in\mathcal{F}}\left|f\right|\leq C_f$.
\end{assumption}
The boundedness assumption is necessary for controlling Rademacher complexity and hence is widely used in previous studies on generalization performance \citep{mohri2018foundations, mou2018generalization}, even together with Gaussian initialization \citep{arora2019fine}. When we train neural networks, there are some regularization tricks like scaling outputs into some interval, which ensures the boundedness of hypothesis space. Hence, making theoretical analysis under the boundedness assumption makes sense.

Now, we can state our generalization bounds for deep FNNs or CNNs trained by the algorithms listed in Section \ref{subsection: algorithms}.

\begin{theorem}[\bfseries GF]\label{continuous GD thm}\ Let $\Theta{(t)}$ be trained by GF \eqref{conti GD} with random initialization \eqref{random initialization}. We define the continuous cumulative loss at $T$ as:
\[
{\rm\sf{CL}}(T)=\int_{0}^T2\sqrt{2\mathcal{L}_n(\Theta{(t)})}\Big(C_y-\sqrt{2\mathcal{L}_n(\Theta{(t)})}\Big)\mathrm{d}t.
\]
Then with probability at least $1-\delta$, we have:
\[
     \mathcal{E}_{\rm gen}(\Theta(T))
     \lesssim \frac{C_{L,d}}{m^p\sqrt{n}}\Big(\mathcal{O}(\kappa^2)+{\rm\sf{CL}}(T)\Big)^{\frac{L+1}{2}}+
    \sqrt{\frac{\log(1/\delta)}{n}},
\]
where $C_{L,d}$ is a constant defined in Lemma \ref{Estimation of Rademacher Complexity}.
\end{theorem}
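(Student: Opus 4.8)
The plan is to combine two ingredients: (i) a dynamical bound on an appropriate parameter norm of $\Theta(t)$ along the GF trajectory, expressed in terms of the training loss trajectory, and (ii) a norm-based Rademacher complexity estimate for the hypothesis class (Lemma~\ref{Estimation of Rademacher Complexity}), followed by the standard symmetrization/uniform-convergence argument. First I would introduce the layer-wise norm functional that appears in the Rademacher bound — presumably something like a product over layers $l$ of $\|\Theta^{(l)}(t)\|$ (Frobenius for FC layers, $\ell_2$ for convolutional kernels, $\ell_2$ for the output vector), or the associated ``path-norm''-type quantity. The key structural fact is that the network is positively homogeneous of degree $1$ in each layer's parameters (ReLU/Leaky-ReLU, average/max pooling are all homogeneous), so the relevant bound is a product of $L+1$ layer norms, which explains the exponent $(L+1)/2$ in the statement.

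The heart of the argument is step (i): controlling $\sum_l \frac{1}{2}\frac{d}{dt}\|\Theta^{(l)}(t)\|^2 = \sum_l \langle \Theta^{(l)}(t), \dot\Theta^{(l)}(t)\rangle = -\sum_l \langle \Theta^{(l)}(t), \nabla_{\Theta^{(l)}}\mathcal{L}_n(\Theta(t))\rangle$. By homogeneity and Euler's identity, $\sum_l \langle \Theta^{(l)}, \nabla_{\Theta^{(l)}} f(\mathbf{x};\Theta)\rangle = (L+1) f(\mathbf{x};\Theta)$ (counting the output layer), so the right-hand side becomes $-\frac{L+1}{n}\sum_i (f(\mathbf{x}_i;\Theta)-y_i) f(\mathbf{x}_i;\Theta)$. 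One then bounds this in terms of $\mathcal{L}_n$ via Cauchy--Schwarz: writing $r_i = f(\mathbf{x}_i;\Theta)-y_i$, we have $\sqrt{2\mathcal{L}_n} = \|(r_i)\|_2/\sqrt{n}$, and $-\frac1n\sum_i r_i f(\mathbf{x}_i) = -\frac1n\sum r_i(r_i + y_i) = -2\mathcal{L}_n - \frac1n\sum r_i y_i \le -2\mathcal{L}_n + \sqrt{2\mathcal{L}_n}\cdot C_y$, i.e. $\le 2\sqrt{2\mathcal{L}_n}(C_y - \sqrt{2\mathcal{L}_n})$ after rearranging. Integrating from $0$ to $T$ gives
\[
\sum_l \|\Theta^{(l)}(T)\|^2 \le \sum_l \|\Theta^{(l)}(0)\|^2 + {\rm\sf{CL}}(T),
\]
and $\sum_l\|\Theta^{(l)}(0)\|^2$ is $\mathcal{O}(\kappa^2)$ with high probability (concentration of a scaled chi-squared, since $\|\Theta^{(l)}(0)\|^2 \sim \frac{\kappa^2}{q(l)}\chi^2_{q(l)}$ concentrates at $\kappa^2$). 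Then the AM--GM inequality converts the bound on the sum of squared layer norms into a bound on the product $\prod_l \|\Theta^{(l)}(T)\| \le \big((\sum_l\|\Theta^{(l)}(T)\|^2)/(L+1)\big)^{(L+1)/2} \lesssim (\mathcal{O}(\kappa^2)+{\rm\sf{CL}}(T))^{(L+1)/2}$, absorbing the $1/m^p$ from the output normalization.

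For step (ii), I would invoke Lemma~\ref{Estimation of Rademacher Complexity} to get $\mathfrak{R}_n(\mathcal{F}_B) \lesssim \frac{C_{L,d}}{m^p\sqrt n}\cdot(\text{product of layer norms})$ for the class $\mathcal{F}_B$ of networks with bounded layer norms, plug in the trajectory bound, and then run the classical argument: boundedness of $f$ (Assumption~\ref{assumption bounded}) and $|y|\le C_y\le 1$ give a bounded, Lipschitz quadratic loss, so by the standard Rademacher generalization theorem, with probability $\ge 1-\delta$, $\mathcal{E}_{\rm gen}(\Theta(T)) \lesssim \mathfrak{R}_n + \sqrt{\log(1/\delta)/n}$ uniformly over $\mathcal{F}$; composing with the contraction inequality for $\ell$ costs only a constant. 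The main obstacle — and the place requiring care — is that the norm bound is a random, data-dependent quantity ${\rm\sf{CL}}(T)$, so the uniform-convergence step must be applied over a union of norm balls (a standard stratification/peeling over dyadic values of the norm, contributing only a $\log$ factor absorbed into the constants), rather than over a fixed hypothesis class; I would also need to confirm that the homogeneity/Euler identity is applied correctly at non-differentiable points of ReLU (using that ReLU is homogeneous and the chain rule holds a.e., or working with the Clarke subdifferential), and that the same computation survives the pooling layers.
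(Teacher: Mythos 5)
Your proposal is correct and follows essentially the same route as the paper: homogeneity-based control of the parameter-norm dynamics along the GF trajectory yielding the $\mathcal{O}(\kappa^2)+{\rm\sf CL}(T)$ bound, sub-exponential concentration of the Gaussian initialization (Lemma \ref{lemma: random initialization}), and the norm-based Rademacher estimate of Lemma \ref{Estimation of Rademacher Complexity} combined with Lemma \ref{lemma: gen and Rad}. The only difference is cosmetic: the paper applies the per-layer identity $\langle\Theta^{(l)},\partial f/\partial\Theta^{(l)}\rangle=f$ of Proposition \ref{multiplicative property} to bound each $\|\Theta^{(l)}(T)\|_2^2\le\mathcal{O}(\kappa^2)+{\rm\sf CL}(T)$ separately and then takes the product over layers, so no AM--GM step is needed and the factor $L+1$ that your summed Euler identity actually places in front of ${\rm\sf CL}(T)$ (you dropped it, but your division by $L+1$ in the AM--GM step would cancel it anyway) never appears.
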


\begin{theorem}[\bfseries GD]\label{thm disc GD 2NN}\
Let $\Theta{(t)}$ be trained by GD \eqref{disc GD} with random initialization \eqref{random initialization}. Let the learning rate $\eta_t=\eta/\lceil\frac{t+1}{T_0}\rceil^{\alpha}$ be satisfied of 
$\alpha\in(\frac{L+1}{L+2},1]$ and 
$\eta=\mathcal{O}\Big(\frac{m^p L^{\frac{L-1}{2}}\sqrt{\epsilon}}{(C_f+C_y)T_0^{\frac{1+\epsilon}{2}}\kappa^{L-1}},
    \frac{\kappa^2(1-\epsilon)}{C_y^2 T_0}
    \Big)$,
where $\epsilon\in(0,1)$. Define the discrete cumulative loss at $T$ as:
\begin{equation}\label{equ: disc CT}
{\rm\sf{CL}}(T) =\sum_{t=0}^{T-1}2\eta_t\sqrt{2\mathcal{L}_n(\Theta{(t)})}\Big(C_y-\sqrt{2\mathcal{L}_n(\Theta{(t)})}\Big).\end{equation}
Then with probability at least $1-\delta$, we have:
\[
\mathcal{E}_{\rm gen}(\Theta(T))
\lesssim
\frac{C_{L,d}}{m^p\sqrt{n}} \Big(
\mathcal{O}(\kappa^2)+{\rm\sf{CL}}(T)\Big)^\frac{L+1}{2}+\sqrt{\frac{\log(1/\delta)}{n}},
\]
where $C_{L,d}$ is defined in Lemma $\ref{Estimation of Rademacher Complexity}$.
\end{theorem}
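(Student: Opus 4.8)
\textbf{Proof proposal for Theorem \ref{thm disc GD 2NN} (GD).}
The plan is to reduce the generalization bound to a norm-based Rademacher complexity estimate (Lemma \ref{Estimation of Rademacher Complexity}) applied to the hypothesis subclass actually reached by GD, and then to control the relevant parameter norm by tracking it along the discrete trajectory. Concretely, for a homogeneous layer-wise network the quantity that enters $C_{L,d}$ is a product of layer-wise parameter norms (a path-norm-type quantity), so the whole argument hinges on bounding $\prod_{l=1}^{L+1}\|\Theta^{(l)}(T)\|$ (or the appropriate per-layer norm) by $\mathcal{O}(\kappa^2) + {\rm\sf{CL}}(T)$ raised to the power $(L+1)/2$. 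I would first invoke the standard symmetrization / Rademacher generalization bound: with probability $\ge 1-\delta$, $\mathcal{E}_{\rm gen}(\Theta(T)) \lesssim \mathfrak{R}_n(\mathcal{F}') + \sqrt{\log(1/\delta)/n}$, where $\mathcal{F}'$ is the (data-dependent) class of networks whose norms do not exceed those realized at time $T$; here Assumption \ref{assumption bounded} is used to pass from the quadratic loss class to the function class and to bound the range of the loss. Then Lemma \ref{Estimation of Rademacher Complexity} converts $\mathfrak{R}_n(\mathcal{F}')$ into $\frac{C_{L,d}}{m^p\sqrt n}$ times the product-of-norms raised to the stated power.

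The core of the proof is the trajectory analysis. I would set up a per-layer norm potential, e.g. $R_l(t) := \|\Theta^{(l)}(t)\|_2$ (using $\|\cdot\|_F$ for the weight matrices / convolution kernels), and differentiate/difference it along the GD update \eqref{disc GD}. Using the chain rule, $\nabla_{\Theta^{(l)}}\mathcal{L}_n$ factors through the residual $f(\mathbf{x}_i;\Theta)-y_i$, and by homogeneity of ReLU the gradient with respect to layer $l$ has a norm controlled by $\sqrt{2\mathcal{L}_n}$ times a product of the \emph{other} layers' norms (times $\|\mathbf{x}\|\le 1$ and $1/m^p$). The key algebraic identity for homogeneous nets — that $\langle \Theta^{(l)}, \nabla_{\Theta^{(l)}}\mathcal{L}_n\rangle$ is the same for every $l$ (balancedness of the gradient flow) — lets me argue that the layer norms stay comparable, so that tracking a single aggregate quantity like $\Phi(t) := \big(\prod_l R_l(t)^2\big)^{1/(L+1)}$ or $\sum_l R_l(t)^2$ suffices. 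For GF this gives an exact differential inequality $\frac{d}{dt}\Phi \le 2\sqrt{2\mathcal{L}_n}\,(C_y - \sqrt{2\mathcal{L}_n})\cdot(\text{something }\lesssim 1)$ after using the output bound $C_f$ and $|y|\le C_y$ to bound $|f - y| \le$ the factor appearing in ${\rm\sf{CL}}$; integrating yields $\Phi(T) \le \mathcal{O}(\kappa^2) + {\rm\sf{CL}}(T)$, and the initialization scale enters through $\Phi(0) = \mathcal{O}(\kappa^2)$ with high probability by Gaussian concentration on \eqref{random initialization}.

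For GD (as opposed to GF) the extra work is to control the discretization error: expanding $R_l(t+1)^2 = \|\Theta^{(l)}(t) - \eta_t \nabla_l \mathcal{L}_n\|^2 = R_l(t)^2 - 2\eta_t\langle\Theta^{(l)},\nabla_l\mathcal{L}_n\rangle + \eta_t^2\|\nabla_l\mathcal{L}_n\|^2$, the linear term reproduces the continuous drift (contributing the ${\rm\sf{CL}}(T)$ sum in \eqref{equ: disc CT}), while the quadratic term is the overhead that must be absorbed. This is exactly where the stated hypotheses on the learning rate come in: the conditions $\alpha \in (\frac{L+1}{L+2},1]$ and the two-part bound on $\eta$ are engineered so that $\sum_t \eta_t^2\|\nabla_l\mathcal{L}_n\|^2$ is summable and small compared to $\mathcal{O}(\kappa^2) + {\rm\sf{CL}}(T)$ — the $T_0^{(1+\epsilon)/2}$ and $\kappa^{L-1}$ factors track the worst-case gradient magnitude $\lesssim (C_f+C_y)\kappa^{L-1}/(m^p L^{(L-1)/2})$ coming from the product-of-norms bound, and the polynomially decaying step size makes $\sum \eta_t^2$ converge. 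I would handle this by an induction on $t$: assuming $\Phi(t) \le \mathcal{O}(\kappa^2) + {\rm\sf{CL}}(t) + (\text{small error})$, use it to bound all layer norms, hence the gradient norm, hence the quadratic correction at step $t$, and close the induction. The main obstacle I anticipate is precisely making this induction tight — keeping the accumulated quadratic-error term genuinely lower order uniformly in $t$ while the layer norms (and thus gradient norms) themselves grow — which is why the admissible $(\eta,\alpha,\epsilon)$ window is so delicately stated; the rest (symmetrization, plugging into Lemma \ref{Estimation of Rademacher Complexity}, and the Gaussian bound on $\Phi(0)$) is routine.
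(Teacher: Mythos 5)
Your proposal follows essentially the same route as the paper: expand $\left\|\Theta^{(l)}(t+1)\right\|_2^2$ into initial, cross, and quadratic terms, use the homogeneity identity to turn the cross term into the cumulative-loss sum ${\rm\sf{CL}}(T)$, absorb the $\eta_t^2\left\|\partial\mathcal{L}_n/\partial\Theta^{(l)}\right\|_2^2$ overhead by induction under the stated $(\eta,\alpha,\epsilon)$ conditions via the gradient upper bound, bound $\left\|\Theta^{(l)}(0)\right\|_2^2=\mathcal{O}(\kappa^2)$ by Gaussian concentration, and finish with Lemma \ref{Estimation of Rademacher Complexity} and the standard Rademacher generalization bound. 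The only cosmetic difference is that the paper bounds each layer norm separately—no balancedness or aggregate potential $\Phi$ is needed, since homogeneity gives the identical cross-term bound for every layer—and it closes the induction by carrying an explicitly decaying slack $\phi(t)$ that telescopes against the per-step quadratic correction, which is exactly the tightness issue you flagged.
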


\begin{theorem}[\bfseries SGD]\label{thm disc SGD 2NN}\ Let $\Theta{(t)}$ are trained by SGD \eqref{disc SGD} with random initialization \eqref{random initialization}. The learning rate $\eta_t$ is selected in the same way as Theorem \ref{thm disc GD 2NN}. And the discrete cumulative loss ${\rm\sf{CL}}(T)$ at $T$ is defined as \eqref{equ: disc CT}. Then with probability at least $\frac{\rho}{1+\rho}-\delta$, we have:
\[
\mathcal{E}_{\rm gen}(\Theta(T))\lesssim
\frac{C_{L,d}}{m^p\sqrt{n}} \Bigg(
\mathcal{O}\Big((1+\rho)\kappa^2\Big)+(1+\rho)\mathbb{E}_T\Big[{\rm\sf{CL}}(T)\Big]\Bigg)^\frac{L+1}{2}+\sqrt{\frac{\log(1/\delta)}{n}},
\]
where $\mathbb{E}_T=\mathbb{E}_{\gamma^1,\cdots,\gamma^{T-1}}$ and $C_{L,d}$ is defined in Lemma \ref{Estimation of Rademacher Complexity}.	
\end{theorem}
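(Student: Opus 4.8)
The plan is to reduce the SGD case to the GD case (Theorem \ref{thm disc GD 2NN}) by controlling the stochastic fluctuation of the parameter norm along the trajectory, and then feed the resulting norm bound into the Rademacher complexity estimate of Lemma \ref{Estimation of Rademacher Complexity}. The key object is an appropriate layer-wise parameter norm $\Phi(\Theta(t))$ (the same functional whose dynamics are controlled deterministically in the GF/GD proofs). In the deterministic analyses one shows, using homogeneity of ReLU and the quadratic-loss structure, that along one gradient step the growth of $\Phi$ is bounded by $\eta_t\,\bigl(\text{something like }2\sqrt{2\mathcal{L}_n}(C_y-\sqrt{2\mathcal{L}_n})\bigr)$ plus higher-order terms that the learning-rate constraint on $\eta$ and $\alpha$ absorbs; summing telescopically gives $\Phi(\Theta(T))\lesssim \mathcal{O}(\kappa^2)+{\rm\sf{CL}}(T)$. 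For SGD, the per-step increment of $\Phi$ is now a random variable whose conditional expectation (given the past) matches the GD increment — here one uses that the minibatch $\gamma^t$ is uniform and independent of $\Theta(t)$, so $\mathbb{E}[\tfrac{1}{B}\sum_i \nabla\ell(\mathbf{x}_{\gamma_i^t},y_{\gamma_i^t};\Theta(t))\mid\Theta(t)]=\nabla\mathcal{L}_n(\Theta(t))$ — so that $\mathbb{E}_T[\Phi(\Theta(T))]\lesssim \mathcal{O}(\kappa^2)+\mathbb{E}_T[{\rm\sf{CL}}(T)]$.

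First I would set up the random process $\Phi_t := \Phi(\Theta(t))$ and establish the one-step inequality in expectation, being careful that the ``higher-order'' remainder terms are also controlled in expectation by the same learning-rate conditions; the point is that the non-negative noise contributions must not blow up the bound, which is why the final statement carries the extra $(1+\rho)$ factors rather than a clean deterministic bound. Second, I would convert the expectation control into a high-probability statement: by Markov's inequality applied to the non-negative quantity $\Phi(\Theta(T))-\mathcal{O}(\kappa^2)$ (or to ${\rm\sf{CL}}(T)$ itself), with probability at least $\tfrac{\rho}{1+\rho}$ one has $\Phi(\Theta(T))\lesssim \mathcal{O}((1+\rho)\kappa^2)+(1+\rho)\mathbb{E}_T[{\rm\sf{CL}}(T)]$. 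Third, on this event the output function $f(\cdot;\Theta(T))$ lies in a norm ball of the corresponding radius, so Lemma \ref{Estimation of Rademacher Complexity} bounds the empirical Rademacher complexity of that ball by $\tfrac{C_{L,d}}{m^p\sqrt n}\bigl(\mathcal{O}((1+\rho)\kappa^2)+(1+\rho)\mathbb{E}_T[{\rm\sf{CL}}(T)]\bigr)^{(L+1)/2}$. Finally, combining this with the standard symmetrization / bounded-differences generalization bound (valid because $\ell$ is bounded under Assumption \ref{assumption bounded}), which contributes the $\sqrt{\log(1/\delta)/n}$ term with probability $1-\delta$, and taking a union bound over the two events gives the claimed bound with probability at least $\tfrac{\rho}{1+\rho}-\delta$.

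I expect the main obstacle to be the first step: proving the one-step expected-increment inequality for $\Phi$ under SGD without any $L$-smoothness assumption. Unlike GD, one cannot cancel the second-order Taylor term by a smoothness constant; instead one must exploit the homogeneous structure of the network and the explicit form of $\nabla\ell$ for the quadratic loss to bound the second-order term directly, and then show the minibatch sampling does not destroy this — in particular that $\mathbb{E}\|\tfrac1B\sum_i\nabla\ell_{\gamma_i^t}\|^2$ can be controlled by $\mathbb{E}\,\mathcal{L}_n$-type quantities along the trajectory. The learning-rate window $\alpha\in(\tfrac{L+1}{L+2},1]$ and the two-sided constraint on $\eta$ are exactly what is needed to make the telescoped remainder summable; getting the constants to line up so that the surviving bound is still $\mathcal{O}(\kappa^2)+{\rm\sf{CL}}(T)$ (up to the $(1+\rho)$ inflation) is the delicate bookkeeping. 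Everything after the norm bound is essentially the same as in Theorems \ref{continuous GD thm} and \ref{thm disc GD 2NN}.
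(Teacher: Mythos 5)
Your proposal follows essentially the same route as the paper's proof: bound the expected layer-wise squared norm by writing the one-step SGD increment, use the homogeneity property on individual samples together with the tower property of conditional expectation to reduce the cross term to the full-batch cumulative-loss term, absorb the second-order (squared-gradient) term via the gradient upper bound and the learning-rate schedule exactly as in the GD induction, then apply Markov's inequality to get the $(1+\rho)$-inflated high-probability norm bound, and finish with the Rademacher complexity estimate and the standard generalization lemma. The only differences are cosmetic (e.g.\ the paper bounds the stochastic gradient norm by $(C_f+C_y)^2\sup_{\mathbf{x}}\|\partial f/\partial\Theta^{(l)}\|_2^2$ via boundedness rather than by loss-type quantities, and applies Markov to the averaged layer norms), so the proposal is correct and matches the paper's argument.
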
 

Theorem \ref{continuous GD thm}, 
\ref{thm disc GD 2NN} and 
\ref{thm disc SGD 2NN} provide a novel class of generalization error bounds depending on the cumulative loss. Our bounds are non-vacuous, and grow slowly during most of the time (see numerical results in Section~\ref{section: exp}). Moreover, our bounds have wide range of applicability on models and algorithms, which only need the network function to be bounded without any other assumptions like $L-$smooth.

\section{Proof Sketch}\label{section: proof sketch}
In this section, we will discuss the proof sketch of our generalization theorems. 

\subsection{Outline}
First, we dissect two important properties for deep neural network that we will use extensively---the homogeneity property (Prop \ref{multiplicative property}) and the parameter-based upper bounds for networks and gradients (Prop \ref{property: high order upper bound}). 
Second, we derive new width-independent Rademacher complexity estimate (Lemma~\ref{Estimation of Rademacher Complexity}) based on the parameter norm for deep CNNs.
Finally, combining the estimate of Rademacher complexity and the two properties for deep networks, we dynamically control parameter norm along the trajectory of optimization algorithms. For GF, we obtain our generalization bound depending on the cumulative loss by direct application of the homogeneity property. For GD, in addition to the analysis for GF, we need to bound a sum of quadratic terms using a more fine grained analysis with the two properties. To extend the analysis to SGD, which is the optimizer we are most interested in, we apply the homogeneity property on individual data and control the parameter norm in high probability sense. 
Please refer to all detailed proof in appendix \ref{section: proof intrinsic properties}, \ref{section: proof Rademacher complexity}, \ref{section: proof GF}, \ref{section: proof GD} and \ref{section: proof SGD}.

\subsection{Import Properties for deep neural networks}\label{section structure and complexity}
In this section, we introduce two important results for deep neural networks (with ReLU-like activation functions) that we will use in our proof. 
\begin{proposition}[\bfseries Homogeneity Property]\label{multiplicative property}
	For deep CNNs or FNNs \eqref{deep CNN}, we have:
	\begin{equation}
		\left<\Theta^{(l)},\frac{\partial f(\mathbf{x};\Theta)}{\partial \Theta^{(l)}}\right>=f(\mathbf{x};\Theta),\ l\in[L+1];
		\quad
	\left<\Theta,\nabla_{\Theta}f(\mathbf{x};\Theta)\right>=(L+1)f(\mathbf{x};\Theta)\color{black}.
    \end{equation}
\end{proposition}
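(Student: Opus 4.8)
The plan is to prove the homogeneity identity $\left<\Theta^{(l)},\partial f/\partial\Theta^{(l)}\right>=f$ for each layer $l$, and then sum over $l\in[L+1]$ to get the second identity. The key structural fact is that the network function $f(\mathbf{x};\Theta)$ is \emph{positively homogeneous of degree $1$} in the parameters of each single layer: because $\sigma$ is ReLU (so $\sigma(cz)=c\sigma(z)$ for $c>0$), scaling $\Theta^{(l)}\mapsto c\Theta^{(l)}$ while keeping all other layers fixed scales the pre-activation at layer $l$ by $c$, and this factor propagates linearly through every subsequent operation — the remaining ReLU layers (by positive homogeneity), the convolutions (which are linear), the pooling (average pooling is linear; max pooling commutes with multiplication by $c>0$), and the linear output readout — so $f(\mathbf{x};\ldots,c\Theta^{(l)},\ldots)=c\,f(\mathbf{x};\ldots,\Theta^{(l)},\ldots)$. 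Once this degree-$1$ homogeneity in $\Theta^{(l)}$ is established, Euler's homogeneous function theorem gives $\left<\Theta^{(l)},\partial f/\partial\Theta^{(l)}\right>=f$ directly (differentiate $g(c):=f(c\Theta^{(l)})=c f(\Theta^{(l)})$ at $c=1$).

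First I would fix $\mathbf{x}$ and a layer index $l$ and write $f$ as a composition $f=\phi_{L+1}\circ\cdots\circ\phi_{l}\circ(\text{stuff before }l)$, where $\phi_l$ is the map implemented by layer $l$ (conv/ReLU/pool or the fully-connected ReLU map, or the linear readout for $l=L+1$). I would check the scaling behavior layer by layer: (i) $\phi_l$ applied to a fixed input with parameter $c\Theta^{(l)}$ outputs $c$ times the original output, using linearity of convolution/matrix multiplication in the weights together with $\sigma(cz)=c\sigma(z)$ for $c\ge 0$ and the fact that pooling (average or max) is positively homogeneous of degree $1$; (ii) each later layer $\phi_{l'}$, $l'>l$, is itself positively homogeneous of degree $1$ \emph{in its input} (again ReLU, linear conv/FC, pooling), so the factor $c$ passes straight through; (iii) the output readout $z\mapsto \frac{1}{m^p}\sum_k a_k z_k$ is linear in $z$. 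Composing these gives $f(c\Theta^{(l)})=c\,f(\Theta^{(l)})$ for all $c\ge 0$, which extends to a neighborhood of $c=1$.

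Next I would invoke Euler's theorem. Since $c\mapsto f(\mathbf{x};\Theta^{(1)},\dots,c\Theta^{(l)},\dots,\Theta^{(L+1)})=c\,f(\mathbf{x};\Theta)$ is affine-linear in $c$ near $c=1$, it is differentiable there (the ReLU nondifferentiability set has measure zero and, more carefully, the restriction to the ray is genuinely linear so differentiability along the ray is automatic), and differentiating both sides at $c=1$ via the chain rule yields $\left<\Theta^{(l)},\frac{\partial f}{\partial\Theta^{(l)}}\right>=f(\mathbf{x};\Theta)$. This is the first claimed identity. Summing over $l=1,\dots,L+1$ and using $\nabla_\Theta f=(\partial f/\partial\Theta^{(1)},\dots,\partial f/\partial\Theta^{(L+1)})$ gives $\left<\Theta,\nabla_\Theta f\right>=\sum_{l=1}^{L+1}\left<\Theta^{(l)},\partial f/\partial\Theta^{(l)}\right>=(L+1)f(\mathbf{x};\Theta)$.

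The main obstacle is handling the non-smoothness of ReLU (and max-pooling) cleanly: $f$ is only piecewise differentiable in $\Theta$, so I must be careful that the ``derivative'' $\partial f/\partial\Theta^{(l)}$ used in the statement is the one consistent with the chain-rule/backpropagation convention, and that the scaling argument is applied along the one-dimensional ray $c\mapsto c\Theta^{(l)}$ where the map is exactly linear (so no subgradient subtleties arise). A minor related point is to confirm max-pooling satisfies $\mathrm{pool}(c\,\mathbf{y})=c\,\mathrm{pool}(\mathbf{y})$ for $c\ge0$, which holds since $\max_i(c y_i)=c\max_i y_i$; average pooling is linear so this is immediate. Once these conventions are pinned down, the computation itself is short.
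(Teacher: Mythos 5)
Your proposal is correct and follows essentially the same route as the paper: the paper likewise observes that $f$ is positively homogeneous of degree $1$ in each layer's parameters, differentiates $f(\mathbf{x};\Theta^{(1)},\dots,c\Theta^{(l)},\dots,\Theta^{(L+1)})=c\,f(\mathbf{x};\Theta)$ at $c=1$, and sums over $l\in[L+1]$. Your layer-by-layer verification of the scaling and the remark on ReLU/pooling non-smoothness simply spell out details the paper leaves implicit.
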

Besides, we derive the following parameter norm based upper bounds for neural networks' output and gradients. 
\begin{proposition}[\bfseries network and gradient upper bounds]\label{property: high order upper bound}
   	For deep CNNs or FNNs \eqref{deep CNN}, we have:
	\begin{align}
	&|f(\mathbf{x};\Theta)|\leq\frac{1}{m^p}\prod_{l=1}^{L+1}\left\|\Theta^{(l)}\right\|_2	\leq\frac{1}{m^p(L+1)^{\frac{L+1}{2}}}\left\|\Theta\right\|_2^{L+1};
	\\		
	&\left\|\frac{\partial f(\mathbf{x};\Theta)}{\partial\Theta^{(l)}}\right\|_2
	\leq\frac{1}{m^p}\prod_{i\neq l}\left\|\Theta^{(i)}\right\|_2	\leq\frac{1}{m^p L^{\frac{L}{2}}}\left\|\Theta\right\|_2^{L},\ \forall l\in[L+1].
	\end{align}
\end{proposition}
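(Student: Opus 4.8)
\textbf{Proof proposal for Proposition~\ref{property: high order upper bound}.}
The plan is to prove the two inequalities in each display separately, handling first the layer-wise product bound and then the collapse to a power of $\|\Theta\|_2$ via AM--GM. I would begin with the output bound $|f(\mathbf{x};\Theta)|\leq \frac{1}{m^p}\prod_{l=1}^{L+1}\|\Theta^{(l)}\|_2$. The natural route is an induction on the layers of~\eqref{deep CNN}, propagating the bound $\|{\rm vec}(\mathbf{z}^{(l)})\|_2\leq \prod_{i=1}^{l}\|\Theta^{(i)}\|_2$ from $\mathbf{z}^{(0)}=\mathbf{x}$ with $\|{\rm vec}(\mathbf{x})\|_2\leq 1$. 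At a fully-connected layer $\mathbf{z}^{(l)}=\sigma(\mathbf{A}^{(l)\top}\mathbf{z}^{(l-1)})$, since $\sigma$ is $1$-Lipschitz with $\sigma(0)=0$ we get $\|\mathbf{z}^{(l)}\|_2\leq \|\mathbf{A}^{(l)\top}\mathbf{z}^{(l-1)}\|_2\leq \|\mathbf{A}^{(l)}\|_2\,\|\mathbf{z}^{(l-1)}\|_2\leq \|{\rm vec}(\mathbf{A}^{(l)})\|_2\,\|\mathbf{z}^{(l-1)}\|_2$, using that the spectral norm is dominated by the Frobenius norm. For a convolutional layer one writes the convolution $\mathbf{w}^{(l)\top}\circledast\mathbf{z}^{(l-1)}$ as a linear map whose matrix has entries that are copies of the entries of $\mathbf{w}^{(l)}$; a short combinatorial estimate (each output coordinate is an inner product of a patch of $\mathbf{z}^{(l-1)}$ with $\mathbf{w}^{(l)}$, and the patches overlap with bounded multiplicity absorbed into the $\lesssim$/constants, or more cleanly via $\|\mathbf{y}^{(l)}\|_2\leq \|{\rm vec}(\mathbf{w}^{(l)})\|_2\|{\rm vec}(\mathbf{z}^{(l-1)})\|_2$ when stride/padding are such that each input entry appears once per output channel) gives $\|{\rm vec}(\mathbf{y}^{(l)})\|_2\leq \|{\rm vec}(\mathbf{w}^{(l)})\|_2\|{\rm vec}(\mathbf{z}^{(l-1)})\|_2$, and average/max pooling is $1$-Lipschitz in $\ell_2$ and so does not increase the norm, yielding $\|{\rm vec}(\mathbf{z}^{(l)})\|_2\leq \|{\rm vec}(\mathbf{z}^{(l-1)})\|_2\prod$-factor as needed. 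Finally $|f(\mathbf{x};\Theta)|=\frac{1}{m^p}|\sum_k a_k z_k^{(L)}|\leq \frac{1}{m^p}\|\Theta^{(L+1)}\|_2\,\|{\rm vec}(\mathbf{z}^{(L)})\|_2\leq \frac{1}{m^p}\prod_{l=1}^{L+1}\|\Theta^{(l)}\|_2$ by Cauchy--Schwarz.

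For the second inequality in the first display, I would apply AM--GM to the $L+1$ nonnegative numbers $\|\Theta^{(l)}\|_2$: their geometric mean is at most their quadratic mean, so $\prod_{l=1}^{L+1}\|\Theta^{(l)}\|_2\leq \big(\frac{1}{L+1}\sum_{l=1}^{L+1}\|\Theta^{(l)}\|_2^2\big)^{(L+1)/2}=\frac{1}{(L+1)^{(L+1)/2}}\|\Theta\|_2^{L+1}$, where the last equality uses that $\|\Theta\|_2^2=\sum_{l=1}^{L+1}\|\Theta^{(l)}\|_2^2$ since $\Theta$ is the concatenation of the $\Theta^{(l)}$. This gives the claimed output bound.

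The gradient bound is proved the same way, now differentiating through the composition. The key observation, which is really Proposition~\ref{multiplicative property} in disguise, is that $f$ is multilinear in the blocks $(\Theta^{(1)},\dots,\Theta^{(L+1)})$ up to the piecewise-constant ReLU activation pattern: for fixed $\mathbf{x}$, on any region where the activation pattern is constant, $\partial f/\partial\Theta^{(l)}$ is obtained from $f$ by deleting the dependence on $\Theta^{(l)}$ and keeping one copy of the other blocks. Concretely, $\frac{\partial f}{\partial\Theta^{(l)}}$ factors as a product of (sub-)Jacobians of the layers above $l$ and below $l$ evaluated at the current activation pattern; each such layer Jacobian has operator norm at most $\|\Theta^{(i)}\|_2$ by the same $1$-Lipschitz-activation / spectral$\le$Frobenius / pooling-is-nonexpansive arguments as above (with the diagonal ReLU-derivative factor having norm $\le 1$), so composing gives $\|\partial f/\partial\Theta^{(l)}\|_2\leq \frac{1}{m^p}\prod_{i\neq l}\|\Theta^{(i)}\|_2$. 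Then AM--GM on the $L$ numbers $\{\|\Theta^{(i)}\|_2\}_{i\neq l}$ yields $\prod_{i\neq l}\|\Theta^{(i)}\|_2\leq \big(\frac{1}{L}\sum_{i\neq l}\|\Theta^{(i)}\|_2^2\big)^{L/2}\leq \frac{1}{L^{L/2}}\|\Theta\|_2^{L}$, completing the proof.

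The main obstacle is the convolutional layer: unlike a dense layer, the convolution operator's matrix has repeated entries, so $\|\mathbf{w}^{(l)}\circledast\cdot\|_2$ is not literally bounded by $\|{\rm vec}(\mathbf{w}^{(l)})\|_2$ in general, and one must be careful about stride, padding, and channel conventions and whether pooling is taken in $\ell_2$ or $\ell_\infty$. The cleanest fix is to verify that under the paper's conventions each input coordinate of $\mathbf{z}^{(l-1)}$ contributes to each output channel through at most one position of the filter (or to absorb a fixed overlap multiplicity into the hidden constants consistent with the $\lesssim$ notation), and to note max/average pooling are both $1$-Lipschitz maps in $\ell_2$; once this bookkeeping is pinned down the rest is the routine induction and AM--GM above.
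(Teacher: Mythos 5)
Your treatment of the fully-connected layers (spectral norm $\leq$ Frobenius norm, $1$-Lipschitz homogeneous activation) and the final AM--GM step $\prod_l\|\Theta^{(l)}\|_2\leq (L+1)^{-(L+1)/2}\|\Theta\|_2^{L+1}$ coincide with the paper's argument, and the same is true of the chain-rule structure you describe for the gradient bound. The genuine gap is exactly at the point you flag and then wave away: the convolutional layer. Under the paper's model, the convolution is the stride-one sliding window $y_k^{(l)}=\sigma(\mathbf{w}^{(l)\top}\mathbf{z}^{(l-1)}_{[k:k+s_l-1]})$, so every coordinate of $\mathbf{z}^{(l-1)}$ appears in up to $s_l$ overlapping patches, and the correct estimate is $\|\mathbf{y}^{(l)}\|_2\leq\sqrt{s_l}\,\|\mathbf{w}^{(l)}\|_2\,\|\mathbf{z}^{(l-1)}\|_2$, not the inequality $\|\mathbf{y}^{(l)}\|_2\leq\|\mathbf{w}^{(l)}\|_2\|\mathbf{z}^{(l-1)}\|_2$ you propose. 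Neither of your suggested fixes is available: the proposition is stated with explicit constant $1$ (no $\lesssim$), so the overlap multiplicity $\prod_{l\le L_C}\sqrt{s_l}$ cannot be absorbed into hidden constants (it is dimension-dependent anyway), and assuming each input coordinate meets the filter in at most one position contradicts the model's overlapping-patch convolution. Likewise, ``pooling is $1$-Lipschitz in $\ell_2$'' is too weak to rescue the bound.

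The missing idea in the paper's proof is that the \emph{average} pooling map, written as the matrix $\mathbf{P}^{(l)}$ with entries $1/s_l$ on disjoint blocks, has $\ell_2$ operator norm exactly $\|\mathbf{P}^{(l)}\|_2=1/\sqrt{s_l}$, and this factor cancels the $\sqrt{s_l}$ produced by the overlapping convolution, yielding the clean product bound $\frac{1}{m^p}\prod_l\|\Theta^{(l)}\|_2$ with constant $1$; the same pairing of $1/\sqrt{s_l}$ (pooling) against $\sqrt{s_l}$ (convolution, or the pooling derivative $\|\partial\mathbf{z}^{(l)}/\partial w_i^{(l)}\|_2\leq\frac{1}{\sqrt{s_l}}\|\mathbf{z}^{(l-1)}\|_2$) is what makes the gradient bound go through for $l\in[L_C]$. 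Without this cancellation your induction only gives $|f|\leq\frac{1}{m^p}\bigl(\prod_{l\le L_C}\sqrt{s_l}\bigr)\prod_l\|\Theta^{(l)}\|_2$, which is strictly weaker than the stated proposition and would propagate extra width-type factors into Theorems \ref{continuous GD thm}--\ref{thm disc SGD 2NN}. So the FNN half of your proposal is fine, but the CNN half needs the average-pooling norm computation rather than generic nonexpansiveness.
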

\begin{remark}\rm
The average pooling layers play a role of regularization on $f$ and $\nabla f$ for model constructions. From the proof of Property \ref{property: high order upper bound}, we can see that the average pooling layers provide a reduction factor ${1}/{\prod_{i=1}^{L_C}\sqrt{s_i}}$ on the upper bounds, where $s_l$ is the size of convolutional kernel and average pooling on the layer $l$. And the property also holds for the active function $\sigma(\cdot)$ which satisfies $|\sigma'(z)|\leq1$.
\end{remark}

The proof sketch of Proposition \ref{multiplicative property} and \ref{property: high order upper bound} is the use of the chain rule and multiplicative structure.  We provide the details in appendix \ref{section: proof intrinsic properties}.

\subsection{\bfseries Norm-based Rademacher Complexity Estimate}\label{section: Rademacher estimation}
The Rademacher complexity is a classical tool to study the generalization of machine learning models. Here we list its definition and the related upper bound for the generalization error.

\begin{definition}[\bfseries Rademacher Complexity]\ \rm If we use $\mathcal{F}$ and $\{\mathbf{x}_i\}_{i=1}^n$ to denote the hypothesis
space and the training data respectively, the Rademacher complexity of $\mathcal{F}$ with
respect to the data is defined by
${\rm Rad}_n(\mathcal{F})=\frac{1}{n}\mathbb{E}_{\sigma_1,\cdots,\sigma_n}\Big[\sup\limits_{f\in\mathcal{F}}\sum\limits_{i=1}^n \sigma_i f(\mathbf{x}_i)\Big]$,
where $\{\sigma_i\}_{i=1}^n$are i.i.d Rademacher random variables with $\mathbb{P}(\sigma=1)=\mathbb{P}(\sigma=-1)=\frac{1}{2}$.
\end{definition}
\begin{lemma}[\citep{mohri2018foundations}]\label{lemma: gen and Rad}\ Assume that the loss function $\ell(\cdot, y)$ is $\rho$-Lipschitz
continuous and bounded in $[0,C]$. For any $\delta\in(0,1)$, with probability at least $1-\delta$ over the
random sampling of the training data, the following generalization bound hold for any $f\in\mathcal{F}:\ 
\mathcal{L}_{\mu}(f)\leq\mathcal{L}_{n} (f) + 2 \rho {\rm Rad}_n(\mathcal{F}) + 3C\sqrt{\frac{\log(2/\delta)}{2n}}$.
\end{lemma}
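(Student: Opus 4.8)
The target statement is Lemma~\ref{lemma: gen and Rad}, the standard symmetrization-plus-concentration bound relating the population risk to the empirical risk via the Rademacher complexity of the function class. The plan is the classical two-ingredient argument: a bounded-difference (McDiarmid) concentration step to control the deviation of the supremum of the empirical process around its mean, followed by a symmetrization step that replaces that mean by a Rademacher average, and finally a contraction (Talagrand / Ledoux--Talagrand) step to strip off the $\rho$-Lipschitz loss and pass from $\ell \circ \mathcal{F}$ to $\mathcal{F}$ itself. I would assume the reader is willing to cite \citep{mohri2018foundations} for these pieces, so the proof is really a matter of assembling them with the right constants.

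First I would define $\Phi(S) = \sup_{f \in \mathcal{F}} \big( \mathcal{L}_\mu(f) - \mathcal{L}_n(f) \big)$ as a function of the sample $S = \{(\mathbf{x}_i, y_i)\}_{i=1}^n$. Since $\ell(\cdot,y)$ takes values in $[0,C]$, replacing one sample point changes $\mathcal{L}_n(f)$ by at most $C/n$ uniformly in $f$, hence $\Phi$ has bounded differences $C/n$; McDiarmid's inequality then gives, with probability at least $1-\delta$, $\Phi(S) \le \mathbb{E}[\Phi(S)] + C\sqrt{\log(1/\delta)/(2n)}$. Next, the standard symmetrization lemma (introduce a ghost sample, insert Rademacher signs) yields $\mathbb{E}[\Phi(S)] \le 2\,\mathbb{E}_{S,\sigma}\big[ \sup_{f\in\mathcal{F}} \frac{1}{n}\sum_{i=1}^n \sigma_i\, \ell(\mathbf{x}_i, y_i; f)\big] = 2\,\mathrm{Rad}_n(\ell\circ\mathcal{F})$. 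Then the Ledoux--Talagrand contraction inequality, using that $\ell(\cdot,y)$ is $\rho$-Lipschitz in its first argument, gives $\mathrm{Rad}_n(\ell\circ\mathcal{F}) \le \rho\, \mathrm{Rad}_n(\mathcal{F})$. Combining these yields $\mathcal{L}_\mu(f) \le \mathcal{L}_n(f) + 2\rho\,\mathrm{Rad}_n(\mathcal{F}) + C\sqrt{\log(1/\delta)/(2n)}$ for all $f \in \mathcal{F}$.

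The only remaining gap is cosmetic: the claimed bound has $3C\sqrt{\log(2/\delta)/(2n)}$ rather than $C\sqrt{\log(1/\delta)/(2n)}$. This slack is the usual price for also allowing $\mathrm{Rad}_n$ to be replaced by its empirical (data-dependent) counterpart, which requires a second McDiarmid application (the empirical Rademacher average itself concentrates, again with bounded differences $C/n$) and a union bound over the two high-probability events, splitting $\delta$ into $\delta/2$ each — this is exactly what produces the factor $3$ and the $\log(2/\delta)$. Since the lemma as stated only references the population $\mathrm{Rad}_n(\mathcal{F})$, one can either invoke the textbook statement verbatim or absorb the extra concentration step; either way the constants are loose enough that the displayed inequality follows.

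I do not anticipate a genuine obstacle here — every step is a named, off-the-shelf inequality, and the result is quoted directly from \citep{mohri2018foundations}. The one point requiring a sentence of care is checking the bounded-difference constant $C/n$ (it uses only the range of $\ell$, not Lipschitzness), and noting that the contraction step is where the $\rho$-Lipschitz hypothesis is consumed; these are the two hypotheses of the lemma and each is used exactly once.
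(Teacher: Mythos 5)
Your proposal is correct and is exactly the standard McDiarmid--symmetrization--contraction argument from the cited textbook; the paper itself offers no proof of Lemma~\ref{lemma: gen and Rad}, quoting it directly from \citep{mohri2018foundations}, so there is nothing for your argument to diverge from. One small correction to your closing remark: by the paper's Definition of ${\rm Rad}_n(\mathcal{F})$ (expectation over the Rademacher signs only, data held fixed), the lemma is stated with the \emph{empirical} Rademacher complexity, so the constants $3C\sqrt{\log(2/\delta)/(2n)}$ are not slack but precisely the price of the second McDiarmid application and union bound that you sketch — since you describe that step correctly, this is a labeling issue rather than a gap.
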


Non-vacuous norm-dependent estimation of Rademacher complexity is crucial in our proof process. Theorem 1 of \citep{golowich2018size} has proposed effective estimation independent of the number of training parameters for FNNs. 
However, if we treat CNNs as a special case of FNNs and use theorem 1 of \citep{golowich2018size} directly, we can only get the estimation depending on the width of each convolutional layer $\prod_{l=1}^{L_C}\sqrt{m_l}$. To get rid of this width dependent term for CNNs, we conduct finer analysis by a peeling technique on the $(1,\infty)$-norm of convolution parameters and $F$-norm of fully-connected parameters. The following lemma gives the estimate of Rademacher complexity combining CNNs and FNNs. We provide the details in appendix \ref{section: proof Rademacher complexity}.
\begin{lemma}[\textbf {Rademacher Complexity Estimate}]\label{Estimation of Rademacher Complexity}\ Consider deep CNNs or FNNs \eqref{deep CNN}. If we let $\mathcal{F}_{\mathbf{Q}}=\big\{f(\cdot,\Theta):\left\|\Theta^{(l)}\right\|_2\leq Q_l,\forall l\in[L+1]\big\}$ where $\mathbf{Q}=(Q_1,\cdots,Q_{L+1})^\top$, then we have the Rademacher complexity estimation
${\rm Rad}_n(\mathcal{F}_{\mathbf{Q}})\leq
\frac{C_{L,d}}{m^p\sqrt{n}}\prod_{l=1}^{L+1}Q_l$,
where $C_{L,d}=2\sqrt{(L+2+\log d)d}$ for deep CNNs and $C_{L,d}=\sqrt{2(L+1)\log 2}+1$ for deep FNNs.
\end{lemma}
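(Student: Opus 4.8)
The plan is to control the Rademacher complexity of the norm ball $\mathcal{F}_{\mathbf{Q}}$ by a layer-wise peeling argument that strips off one layer at a time, starting from the output layer. First I would write $f(\mathbf{x};\Theta) = \frac{1}{m^p}\langle \Theta^{(L+1)}, \mathbf{z}^{(L)}\rangle$ where $\mathbf{z}^{(L)}$ is the (vectorized) last hidden representation, so that
\[
n\,{\rm Rad}_n(\mathcal{F}_{\mathbf{Q}}) = \frac{1}{m^p}\,\mathbb{E}_{\sigma}\sup_{\|\Theta^{(l)}\|_2\le Q_l}\Big\langle \Theta^{(L+1)}, \sum_{i=1}^n \sigma_i \mathbf{z}^{(L)}(\mathbf{x}_i)\Big\rangle
= \frac{Q_{L+1}}{m^p}\,\mathbb{E}_{\sigma}\sup_{\{\Theta^{(l)}\}_{l\le L}}\Big\|\sum_{i=1}^n \sigma_i \mathbf{z}^{(L)}(\mathbf{x}_i)\Big\|_2.
\]
Then, as in Golowich–Neyshabur–Srebro, I would introduce the exponential-moment trick: bound the supremum of the norm by $\frac{1}{\lambda}\log \mathbb{E}_\sigma \exp(\lambda \sup \|\cdots\|_2)$ for a free parameter $\lambda>0$, peel one layer, use the contraction (Ledoux–Talagrand / Lipschitz composition) lemma to remove the ReLU (which is $1$-Lipschitz and, crucially for the convolutional layers, the average/max pooling is also $1$-Lipschitz and in fact contractive), and recurse. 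Each fully-connected layer contributes its Frobenius-norm factor $Q_l$ just as in the FNN case; the point of novelty is the convolutional layers.

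For a convolutional layer, the key step is to observe that the pre-activation $\mathbf{y}^{(l)} = \sigma(\mathbf{w}^{(l)\top}\circledast \mathbf{z}^{(l-1)})$ is, entrywise, an inner product between the single kernel vector ${\rm vec}(\mathbf{w}^{(l)})$ and a patch of $\mathbf{z}^{(l-1)}$, so the relevant bound is on $\|{\rm vec}(\mathbf{w}^{(l)})\|_2$ (equivalently the $(1,\infty)$-type operator norm of the associated Toeplitz matrix acting on patches), not on a full $m_l\times m_{l-1}$ weight matrix. Concretely, after peeling the ReLU and pooling I would bound $\big\|\sum_i \sigma_i \mathbf{z}^{(l)}(\mathbf{x}_i)\big\|_2$ (a sum over all output spatial locations and channels) by a term of the form $Q_l$ times a quantity depending only on $\mathbf{z}^{(l-1)}$ and on the patch geometry — this is where the $\sqrt{d}$ and the kernel-size reduction factors enter, and it is why the final constant $C_{L,d}=2\sqrt{(L+2+\log d)d}$ carries the input dimension $d$ but no layer widths. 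I would then optimize over $\lambda$ at the end (the standard $\lambda\sim\sqrt{\cdot}$ choice) to turn the accumulated log-moment terms into the additive $\sqrt{(L+2+\log d)}$ factor rather than a multiplicative blow-up.

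The main obstacle I expect is handling the convolutional layers cleanly: one must make the peeling recursion go through with the kernel's $\ell_2$ norm as the complexity budget while correctly accounting for (i) the fact that a single kernel is reused across many spatial positions (so naive treatment as a generic linear map would reintroduce a $\prod_l \sqrt{m_l}$ width factor), and (ii) the pooling operation, which must be verified to be $1$-Lipschitz in $\ell_2$ so that the contraction lemma applies. I would also need to be careful that the base case of the recursion — the input layer — produces $\|{\rm vec}(\mathbf{x}_i)\|_2 \le 1$ and a clean $\sqrt{d}$ (or $\sqrt{n}$) factor, and that the Rademacher sum over channels and positions is bounded by Jensen / Khintchine in a way that does not leak width. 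Once the peeling inequality for one convolutional layer is established, the recursion and the final $\lambda$-optimization are routine and mirror the FNN argument of Golowich et al., yielding ${\rm Rad}_n(\mathcal{F}_{\mathbf{Q}}) \le \frac{C_{L,d}}{m^p\sqrt{n}}\prod_{l=1}^{L+1}Q_l$ with the stated constants.
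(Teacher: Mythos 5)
Your overall strategy---Golowich--Rakhlin--Shamir exponential-moment peeling, with fully-connected layers contributing their Frobenius/$\ell_2$ budgets and a special treatment of the convolutional layers so that weight sharing does not reintroduce $\prod_l\sqrt{m_l}$---is the same skeleton as the paper's proof, and your treatment of the output and fully-connected layers is fine. The gap is at exactly the step you flag as the novelty. You propose to keep peeling in the $\ell_2$ norm through the convolutional layers, bounding $\big\|\sum_i\sigma_i\mathbf{z}^{(l)}(\mathbf{x}_i)\big\|_2$ with the kernel's $\ell_2$ norm $\|\mathrm{vec}(\mathbf{w}^{(l)})\|_2$ as the complexity budget, but no such peeling lemma is available: the $\ell_2$ lemma of Golowich et al.\ peels a layer only under a Frobenius-norm constraint on the full weight matrix, and for the expanded convolution matrix $\tilde{\mathbf{W}}^{(l)}$ (whose $m_l s_l$ columns are shifted copies of $\mathbf{w}^{(l)}$) one has $\|\tilde{\mathbf{W}}^{(l)}\|_F=\sqrt{m_l s_l}\,\|\mathbf{w}^{(l)}\|_2$, which is precisely the width blow-up you want to avoid. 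Saying the bound will be ``$Q_l$ times a quantity depending only on $\mathbf{z}^{(l-1)}$ and the patch geometry'' names the desired conclusion rather than an argument, and your parenthetical ``equivalently the $(1,\infty)$-type operator norm'' is not correct: that norm of the convolution matrix is $\|\mathbf{w}^{(l)}\|_1$, not $\|\mathbf{w}^{(l)}\|_2$, and the discrepancy (a factor $\sqrt{s_l}$ per layer) matters for the final constant.

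What the paper actually does at this point is change norms rather than stay in $\ell_2$: after peeling the fully-connected layers in $\ell_2$, it pays a single factor $\sqrt{m_{L_C}}$ to pass from $\big\|\sum_i\epsilon_i\mathbf{z}^{(L_C)}(\mathbf{x}_i)\big\|_2$ to the $\ell_\infty$ norm, then peels each convolutional layer with the $(1,\infty)$-norm version of the Golowich lemma, using $\|\tilde{\mathbf{W}}^{(l)}\|_{1,\infty}=\|\mathbf{w}^{(l)}\|_1\le\sqrt{s_l}\,Q_l$ and the fact that average pooling is an $\ell_\infty$ contraction; the accumulated factor $\sqrt{m_{L_C}\prod_{l\le L_C}s_l}$ is then absorbed via the architectural constraint $m_{L_C}\prod_{l\le L_C}s_l\le d$, and the $\ell_\infty$ base case $\big\|\sum_i\epsilon_i\mathbf{x}_i\big\|_\infty$ with the optimized $\lambda$ is what produces $C_{L,d}=2\sqrt{(L+2+\log d)d}$. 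Without this $\ell_2\to\ell_\infty$ handoff (or a genuinely new structured $\ell_2$ peeling lemma for weight-shared layers, which you would have to prove), your recursion for the convolutional layers does not go through, so as written the proposal has a real gap at its central step; identifying the obstacle of weight sharing is not the same as resolving it.
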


\subsection{Continuous Time Analysis}\label{subsection: proof sketch: GF}
As a warm-up for discrete case, we give our proof sketch for GF (\ref{conti GD}) case.
As the application of the homogeneity property \ref{multiplicative property}, we can build the relationship between norm dynamics of each layer $l\in[L+1]$ and and the training term:
\begin{align*}
&\frac{\mathrm{d} \left\|\Theta^{(l)}(t)\right\|^2}{\mathrm{d} t}=-2\left<\Theta^{(l)}(t),\frac{\partial \mathcal{L}_n(\Theta(t))}{\partial \Theta^{(l)}}\right>
=-\frac{2}{n}\sum_{i=1}^n\Big(f(\mathbf{x}_i;\Theta(t))-y_i\Big)\left<\Theta^{(l)}(t),\frac{\partial f(\mathbf{x}_i;\Theta(t))}{\partial \Theta^{(l)}}\right>	\\=&-\frac{2}{n}\sum_{i=1}^n\Big(f(\mathbf{x}_i;\Theta{(t)})-y_i\Big)f(\mathbf{x}_i;\Theta{(t)})=-4\mathcal{L}_n(\Theta{(t)})-\frac{2}{n}\sum_{i=1}^n\Big(f(\mathbf{x}_i;\Theta{(t)})-y_i\Big)y_i
\\\leq&
-4\mathcal{L}_n(\Theta{(t)})+\frac{2}{n}\sqrt{\sum_{i=1}^n\Big(f(\mathbf{x}_i;\Theta{(t)})-y_i\Big)^2}\sqrt{\sum_{i=1}^n y_i^2}
\leq-4\mathcal{L}_n(\Theta{(t)})+2C_y\sqrt{2\mathcal{L}_n(\Theta{(t)})}.
\end{align*}
Integrating time, $\left\|\Theta^{(l)}(T)\right\|_2^2$ can be bounded by the initial scale $\kappa^2$ and the continuous cumulative loss ${\rm \sf CL}(T)=\int_{0}^T2\sqrt{2\mathcal{L}_n(\Theta{(t)})}\big(C_y-\sqrt{2\mathcal{L}_n(\Theta{(t)})}\big)\mathrm{d}t$ with high probability. We provide the details in appendix \ref{section: proof GF}.

\subsection{Fine-grained Discrete Time Analysis}
\textbf{GD.} First, we can decompose the time dependent parameter norm in each layer into three parts:
\[
\left\|\Theta^{(l)}(T+1)\right\|_2^2\notag
=
\underbrace{\left\|\Theta^{(l)}(0)\right\|_2^2}_{\rm I}
+
\underbrace{\sum\limits_{t=0}^T2\eta_t\left<\Theta^{(l)}(t),\frac{\partial \mathcal{L}_n(\Theta(t))}{\partial \Theta^{(l)}}\right>}_{\rm II}
+\underbrace{\sum\limits_{t=0}^{T}\eta_t^2\left\|\frac{\partial \mathcal{L}_n(\Theta(t))}{\partial \Theta^{(l)}}\right\|_2^2}_{\rm III}.\]
Term ${\rm I}=\left\|\Theta^{(l)}(0)\right\|_2^2$ can be bounded by the initial scale $\mathcal{O}(\kappa^2)$ with high probability. Term ${\rm II}$ can be bounded by discrete cumulative loss ${\rm\sf CL}(T)$ using the homogeneity property \ref{multiplicative property} like the GF case:
\[{\rm II}
    =-\sum_{t=0}^T2\eta_t\left<\Theta^{(l)}(t),\frac{\partial \mathcal{L}_n(\Theta(t))}{\partial \Theta^{(l)}}\right>
    =-\sum_{t=0}^T\eta_t\frac{2}{n}\sum_{i=1}^n(f(\mathbf{x}_i;\Theta{(t)})-y_i)f(\mathbf{x}_i;\Theta{(t)})
    \leq{\rm\sf CL}(T).\]
And we need more fine-grained analysis than GF due to the extra term ${\rm III}
    \leq\sum_{t=0}^{T}\eta_t^2(C_f+C_y)^2\sup_{\mathbf{x}}\left\|\frac{\partial f(\mathbf{x},\Theta(t))}{\partial \Theta^{(l)}}\right\|_2^2$ in discrete time.
From the gradient upper bound property \ref{property: high order upper bound}, we can see that the growth rate of gradients is close to $\mathcal{O}\big(\left\|\Theta^{(l)}(t)\right\|_2^{2L}\big)$, which may keep growing. So it seems that term III 
may not converge.
However, from the important properties \ref{multiplicative property}, \ref{property: high order upper bound} and estimation of term II, if we choose the popular learning rate $\eta_t=\eta/\lceil\frac{t+1}{T_0}\rceil^\alpha$,  $\alpha\in(\frac{L+1}{L+2},1]$, under proper selection of $\eta$, term III can be bounded by the tiny constant $\mathcal{O}(\kappa^2)$. Moreover, there exist $c>0$, $\epsilon\in(0,1)$ s.t.
\[{\rm III}\leq\mathcal{O}\big(\left\|\Theta^{(l)}(0)\right\|_2^2\big)-\frac{c}{(T+1)^{\epsilon}}\leq\mathcal{O}\big(\left\|\Theta^{(l)}(0)\right\|_2^2\big).\]

So with high probability over random initialization,
 GD norm dynamics of each layer can be bounded by $
\left\|\Theta^{(l)}(T+1)\right\|_2^2\leq\mathcal{O}(\kappa^2)+{\rm \sf CL}(T)$.
We provide the details in appendix \ref{section: proof GD}.

\textbf{SGD.} Let $\mathbb{E}_T:=\mathbb{E}_{\gamma^0,\cdots,\gamma^T}=\mathbb{E}_{(\gamma_1^0,\cdots,\gamma_B^0),\cdots,(\gamma_1^T,\cdots,\gamma_B^T)}$ for $T\geq 0$ and $\mathbb{E}_{-1}:=id$. We will control 
\[
\mathbb{E}_T\Big[\left\|\Theta^{(l)}(T+1)\right\|_2^2\Big]
=
\underbrace{\left\|\Theta^{(l)}(0)\right\|_2^2}_{\rm I}
+\underbrace{\sum\limits_{t=0}^T2\eta_t\mathbb{E}_t\left<\Theta^{(l)}(t),\frac{\partial \mathcal{L}_n(\Theta(t))}{\partial \Theta^{(l)}}\right>}_{\rm II}
+\underbrace{\sum\limits_{t=0}^{T}\eta_t^2\mathbb{E}_t\left\|\frac{\partial \mathcal{L}_n(\Theta(t))}{\partial \Theta^{(l)}}\right\|_2^2}_{\rm III}.
\]

Term I can be bounded by $\mathcal{O}(\kappa^2)$ as GD case. For each term in II, we can bound it by the homogeneity property on individual data and the conditional expectation formula:
\begin{align*}
&2\eta_t\mathbb{E}_t\left<\Theta^{(l)}(t),\frac{\partial \mathcal{L}_n(\Theta(t))}{\partial \Theta^{(l)}}\right>
=-2\eta_t\mathbb{E}_{t-1}\Bigg[\mathbb{E}_{\gamma^t}\Big[\frac{1}{B}\sum_{i=1}^B\Big(f(\mathbf{x}_{\gamma_i^t};\Theta{(t)}) -y_{\gamma_i^t}\Big)f(\mathbf{x}_{\gamma_i^t};\Theta{(t)})\Big|\gamma^0,\cdots,\gamma^{t-1}\Big]\Bigg]
\\
&=-2\eta_t\mathbb{E}_{t-1}\Big[\frac{1}{n}\sum_{i=1}^n\Big(f(\mathbf{x}_i;\Theta{(t)})  -y_i\Big)f(\mathbf{x}_i;\Theta{(t)})\Big]
\leq
2\eta_t\mathbb{E}_{t-1}\Big[\sqrt{2\mathcal{L}_n(\Theta(t))}(C_y-\sqrt{2\mathcal{L}_n(\Theta(t))})\Big].
\end{align*}

Then for term III, the analysis is similar to GD. We provide the details in appendix \ref{section: proof SGD}.

\section{Comparison with Uniform Stability Bounds}\label{section:versus}
In this section, we give a detailed comparison of our results with the uniform stability based bounds. We focus on the setting and condition required for the results, especially in the non-$L$-smooth scenario. First, we point out that the SGLD is the only algorithm that can be treated by the uniform stability theory in the non-$L$-smooth case.
Recalling the continuous form and discrete form of SGLD are:
\begin{gather}
\mathrm{d}\Theta{(t)}=-\nabla \mathcal{L}(\Theta{(t)})\mathrm{d}t+\sqrt{\frac{2}{\beta}}\mathcal{B}(t),
\label{conti SGLD}
\\
\Theta{(t+1)}=\Theta{(t)}-\eta_t\nabla\mathcal{L}(\Theta{(t)})+\sqrt{\frac{2\eta_t}{\beta}}\mathcal{N}(\mathbf{0},\mathbf{I}).
\label{disc SGLD}
\end{gather}
A notable uniform stability based result for the generalization performance of SGLD is given in~\citep{mou2018generalization}:

\begin{lemma}[\citep{mou2018generalization}]\label{uniform}\ Under boundedness assumption and Lipschitz assumption, 
let $\Theta{(t)}$ be trained by discrete SGLD $($algorithm $(\ref{disc SGLD}))$. Then we have the expectation of generalization error $\mathbb{E}\Big[\mathcal{E}_{\rm{gen}}(T)\Big]\leq MLip\sqrt{\frac{\beta}{8n}\sum_{t=0}^{T-1}\eta_t}$.
Similarly, let $\Theta{(t)}$ be trained by continuous SGLD $($algorithm $(\ref{conti SGLD}))$, then we have
$\mathbb{E}\Big[\mathcal{E}_{\rm{gen}}(T)\Big]\leq \frac{M Lip\sqrt{\beta T}}{\sqrt{2}n}$.
\end{lemma}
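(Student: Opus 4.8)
The plan is to prove both estimates through the information-theoretic form of algorithmic stability, which is the argument that survives the absence of $L$-smoothness and which is natural for SGLD because every update is an explicit Gaussian channel. Writing $S=\{(\mathbf{x}_i,y_i)\}_{i=1}^n$ for the training set, the first step is the information-theoretic generalization bound \citep{xu2017information}: under the boundedness assumption the per-sample loss $\ell(\mathbf{x},y;\cdot)$ takes values in a bounded interval (one may take $\ell\le M:=\tfrac{1}{2}(C_f+C_y)^2$ via Assumption~\ref{assumption bounded}), hence it is $\sigma$-sub-Gaussian with $\sigma=M/2$, and therefore
\[
\big|\mathbb{E}[\mathcal{E}_{\rm gen}(T)]\big|=\big|\mathbb{E}\big[\mathcal{L}_\mu(\Theta(T))-\mathcal{L}_n(\Theta(T))\big]\big|\le\sqrt{\frac{2\sigma^2\,I\big(S;\Theta(T)\big)}{n}}.
\]
(Equivalently, one may run the ``on-average'' stability computation of \citep{mou2018generalization}, which produces the same quantity through the KL divergence between the output laws on neighbouring data sets.) This reduces the lemma to an upper bound on the mutual information $I(S;\Theta(T))$ accumulated along the SGLD trajectory; everything after this is tracking that quantity.

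For discrete SGLD \eqref{disc SGLD} I would bound $I(S;\Theta(T))$ by decomposing it over the update steps. Because $\Theta(0)\perp S$ and, conditionally on $\Theta(t)$, the increment $\Theta(t+1)-\Theta(t)$ is Gaussian with mean $-\eta_t\nabla\mathcal{L}(\Theta(t))$ and covariance $\tfrac{2\eta_t}{\beta}\mathbf{I}$, the chain rule for mutual information together with the data-processing inequality, after comparing each step's transition against the $S$-independent pure-noise kernel, gives
\[
I\big(S;\Theta(T)\big)\le\sum_{t=0}^{T-1}\mathbb{E}\Big[\mathrm{KL}\big(\mathcal{N}(-\eta_t\nabla\mathcal{L}(\Theta(t)),\tfrac{2\eta_t}{\beta}\mathbf{I})\,\big\|\,\mathcal{N}(\mathbf{0},\tfrac{2\eta_t}{\beta}\mathbf{I})\big)\Big]=\sum_{t=0}^{T-1}\frac{\beta\eta_t}{4}\,\mathbb{E}\big\|\nabla\mathcal{L}(\Theta(t))\big\|_2^2.
\]
The Lipschitz assumption gives $\|\nabla\mathcal{L}\|_2\le Lip$, so $I(S;\Theta(T))\le\tfrac{\beta Lip^2}{4}\sum_{t=0}^{T-1}\eta_t$; substituting this and $\sigma=M/2$ into the previous display yields exactly $|\mathbb{E}[\mathcal{E}_{\rm gen}(T)]|\le M\,Lip\sqrt{\tfrac{\beta}{8n}\sum_{t=0}^{T-1}\eta_t}$. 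For continuous SGLD \eqref{conti SGLD} I would replace the step-wise chain rule by Girsanov's theorem: comparing the path law of the diffusion with the $S$-independent reference $\mathrm{d}\Theta(t)=\sqrt{2/\beta}\,\mathrm{d}\mathcal{B}(t)$ gives $\mathrm{KL}=\tfrac{\beta}{4}\mathbb{E}\int_0^T\|\nabla\mathcal{L}(\Theta(t))\|_2^2\,\mathrm{d}t\le\tfrac{\beta Lip^2T}{4}$, hence $I(S;\Theta(T))\le\tfrac{\beta Lip^2T}{4}$, and the same computation with $\sum_t\eta_t$ replaced by $T$ gives the stated continuous bound.

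The main obstacle is not a single inequality but making the decomposition step rigorous. It has to be phrased through the joint law of the whole trajectory $(\Theta(0),\dots,\Theta(T))$ and then pushed through the data-processing inequality (the returned iterate is a measurable function of that trajectory), and one must keep track that the Gaussian transition kernel at step $t$ is evaluated at the \emph{random} iterate $\Theta(t)$, so each per-step KL is itself a random variable whose expectation enters the bound. In the mini-batch variant there is the added point that the gradient at step $t$ is computed on a random subset of $S$: the pointwise bound $\|\nabla\mathcal{L}_{\rm batch}\|_2\le Lip$ still holds, but one should condition on the batch indices to keep the Markov structure before applying the chain rule. The continuous case additionally needs a Novikov-type integrability condition for Girsanov to apply, which is immediate here since the drift $\nabla\mathcal{L}$ is bounded; and a final bit of bookkeeping is the conversion ``$\ell\in[0,M]\Rightarrow$ sub-Gaussian parameter $M/2$'', which is precisely where the constant $1/(8n)$ in the statement comes from.
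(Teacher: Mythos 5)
You should first note that the paper does not prove Lemma~\ref{uniform} at all: it is quoted from \citep{mou2018generalization}, whose proofs proceed via two viewpoints (uniform stability and an information-theoretic/PAC-Bayes argument), so the comparison is with that source rather than with anything in this paper. Your treatment of the \emph{discrete} bound is essentially correct and corresponds to the information-theoretic viewpoint: the Xu--Raginsky inequality with sub-Gaussian parameter $M/2$, the chain rule plus data processing comparing each Gaussian transition against the $S$-independent zero-drift kernel (giving per-step KL $\tfrac{\beta\eta_t}{4}\mathbb{E}\|\nabla\mathcal{L}(\Theta(t))\|_2^2$), and the Lipschitz bound on the drift combine to give exactly $M\,Lip\sqrt{\tfrac{\beta}{8n}\sum_{t=0}^{T-1}\eta_t}$, constants included. (Since the paper's SGLD \eqref{disc SGLD} uses the full gradient plus noise, your mini-batch caveat is not even needed here.)

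The \emph{continuous} half, however, has a genuine gap. The stated bound $\tfrac{M\,Lip\sqrt{\beta T}}{\sqrt{2}\,n}$ decays like $1/n$, whereas any bound of the form $\sqrt{2\sigma^2 I(S;\Theta(T))/n}$ can decay only like $1/\sqrt{n}$, because your Girsanov estimate $I(S;\Theta(T))\le \tfrac{\beta\,Lip^2 T}{4}$ does not shrink with $n$: ``the same computation'' yields $M\,Lip\sqrt{\tfrac{\beta T}{8n}}$, which is weaker than the claimed inequality by a factor of order $\sqrt{n}$ and is not what the lemma asserts. To obtain the $1/n$ rate you must switch to the uniform-stability (neighbouring-dataset) argument: run the two diffusions \eqref{conti SGLD} driven by datasets $S$ and $S^{(i)}$ differing in one example, whose drifts differ pointwise by at most $\tfrac{2\,Lip}{n}$; Girsanov then bounds the KL divergence between the two path laws by $\tfrac{\beta}{4}\int_0^T\tfrac{4\,Lip^2}{n^2}\,\mathrm{d}t=\tfrac{\beta\,Lip^2 T}{n^2}$, Pinsker gives total variation at most $\tfrac{Lip\sqrt{\beta T}}{\sqrt{2}\,n}$, and multiplying by the loss bound $M$ and invoking the standard stability-to-generalization identity gives exactly $\tfrac{M\,Lip\sqrt{\beta T}}{\sqrt{2}\,n}$. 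So keep your argument for the discrete statement, but replace the whole-dataset mutual-information route by this coupling/stability route for the continuous one.
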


It is clear that GF and GD are the limits of SGLD while $\beta\to\infty$ in algorithm (\ref{conti SGLD}) and (\ref{disc SGLD}). Hence the generalization bounds for GF and GD based on Lemma~\ref{uniform} are vacuous ($+\infty$). And for SGD, so far there is no generalization result based on uniform stability for SGD beyond SGLD.

Now we can compare the effectiveness between our bounds (Theorem $\ref{continuous GD thm}$,
$\ref{thm disc GD 2NN}$,
$\ref{thm disc SGD 2NN}$) and uniform stability bounds (Lemma $\ref{uniform}$) in the non-$L$-smooth deep CNN and FNN scenario. Results are given in Table~\ref{sample-table}.
The table shows that our bounds can be used to analyze GD and SGD beyond SGLD. But per our knowledge, in the non-
L-smooth scenario, the uniform stability theory has only been applied to analyze GD with isotropic noise (SGLD) rather than GD or SGD.
Besides, our bounds only need the boundedness assumption without any other assumptions.

\section{Experiments}\label{section: exp}
\subsection{Experiments about our bounds}
\textbf{Regression and Classification.}
We test our generalization bounds on a function regression problem and classification problems with quadratic loss. For the function regression problem, we train a 3-depth FNN by SGD (batch=2000) on the target function $f^{*}(\mathbf{x})=(x_1+x_2^2+\sin(\pi x_3))/(1.25+\pi^2/4)$, where $\mathbf{x}\in[-1/\sqrt{3},1/\sqrt{3}]^3$. For the classification problem, we train a 4-depth FNN on the MNIST dataset \citep{lecun1998gradient} with label$=0,1$ 
by SGD (batch=2000). Figures \ref{figure: function regression bound} and \ref{figure: MNIST bound} show our bounds in the two experiments. It is clear that our bounds grow slowly, hold steady during most of the time, and keep close to initial bounds, especially for the function regression problem (Figure \ref{figure: function regression bound}). 

\textbf{Effect of Hyperparameters.}
Some generalization bounds are sensitive to hyperparameters of models and algorithms such as width, depth and learning rate \citep{zhou2018understanding, lin2019generalization}. We study the change of our generalization bounds under different network width and learning rate. Results shown in Figure~\ref{figure: bound-width} and \ref{figure: bound-lr} show that our bounds are not sensitive to the change of these hyperparameters.

\begin{figure}[ht]
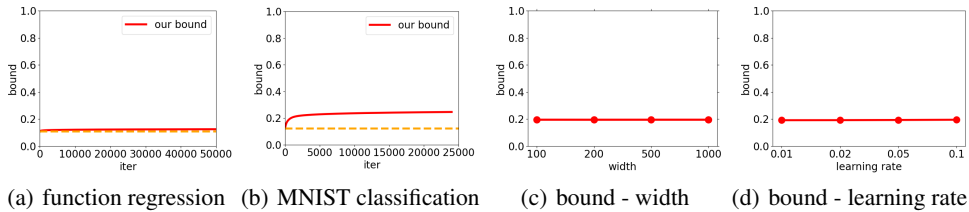

\begin{center}
\subfigure[function regression]{
\label{figure: function regression bound}
\includegraphics[width=0.22\textwidth]{figure/toy_model_bound.pdf}}
\subfigure[MNIST classification]{
\label{figure: MNIST bound}
\includegraphics[width=0.22\textwidth]{figure/MNIST_bound.pdf}}
\subfigure[bound - width]{
\label{figure: bound-width}
\includegraphics[width=0.22\textwidth]{figure/bound-width.pdf}}
\subfigure[bound - learning rate]{
\label{figure: bound-lr}
\includegraphics[width=0.22\textwidth]{figure/bound-lr.pdf}}
\end{center}
\caption{Fig \ref{figure: function regression bound}: Our generalization bound of function regression during training; Fig \ref{figure: MNIST bound}:
Our generalization bound of MNIST classification during training. Fig \ref{figure: bound-width}:  Our generalization bounds under different widths; Fig \ref{figure: bound-lr}:
Our generalization bounds under different learning rates.}
\label{MNIST exp}
\end{figure}

\subsection{Large-scale Experiments about CL}
In order to further understand the relationship between the generalization ability and the main component ${\rm\sf{CL}}$ (\ref{equ: disc CT}) of our generalization bounds, we conduct \textit{three} groups of large-scale experiments with different proportion of label noise, different number of class and different network sizes. We consider to classify the CIFAR-10 dataset~\citep{krizhevsky2009learning} with VGG networks~\citep{simonyan2014very}. All models (without batch normalization) are trained by SGD (batch size=100, learning rate=0.1) until for $10^6$ iterations, and the results are shown in Table \ref{table: large-scale} below.
\begin{table}[ht]
\caption{These three subtables show the results of the following experiments. 
{(I)} VGG-16; Cifar-10 (label=0,1) with different proportion of label noise $0$\%, $20$\%, $50$\%, $80$\%, $100$\%.
{(II)} VGG-16; subset of Cifar-10 with different number of class: $2$, $5$, $8$, $10$.
{(III)} Different network sizes VGG-11, VGG-13, VGG-16, VGG-19; on a subset of Cifar-10 (label=0, 1).}
\label{table: large-scale}
\begin{center}
\begin{tabular}{c|c|c|c|c|c}
      \hline
     label noise & 0\% & 20\% & 50\% & 80\% & 100\% \\  \hline
     ${\rm\sf{CL}}$ & $\mathbf{6.60}$ & $21.20$ & $37.25$ & $50.57$ & $62.65$ \\
	\hline			
\end{tabular}
\begin{tabular}{c|c|c|c|c}
      \hline    
     data class & 2 & 5 & 8 & 10  \\  \hline
     ${\rm\sf{CL}}$ & $\mathbf{6.60}$ & $19.75$ & $25.24$ & $26.08$ \\
	\hline			
\end{tabular}
\begin{tabular}{c|c|c|c|c}
      \hline
     network & VGG-11 & VGG-13 & VGG-16 & VGG-19 \\  
     \hline
     ${\rm\sf{CL}}$ & $15.51$ & $9.52$ & $\mathbf{6.60}$ & $7.88$ \\
	\hline	
\end{tabular}
\end{center}
\end{table}

From the results in Table \ref{table: large-scale}, data complexity introduced by label noise and number of classes makes the ${\rm\sf{CL}}$ larger, which shows that it is more difficult to learn generalizable representations for more complicated dataset. On the other hand, however, the network size has a negative correlation with the ${\rm\sf{CL}}$, which reflect the better generalization ability of larger networks.

More experiment details are provided in appendix \ref{section: appendix experiment}.

\section{Conclusion}
\label{section: conclusion}
In this paper, we derive novel algorithm-dependent generalization bounds for deep neural networks under weak conditions without smoothness assumption of the loss function. In the analysis, we combine parameter norm based bounds with an analysis of the optimization trajectory, and make use of two special properties of deep neural networks---the homogeneity property and the value/gradient upper bounds. Our generalization bounds explicitly depend on the training process, and work for a wide range of network architectures including general deep CNNs and FNNs. The bounds also apply to popular optimization algorithms such as GD and SGD.
As a comparison, uniform stability can only treat SGLD rather than GD/SGD in the non-$L$-smooth scenario.
Numerical experiments show that our bounds are non-vacuous and robust with the change of optimizer and network hyperparameters, such as the learning rate and width. Our analysis can also be extended to other power-type loss (see Appendix \ref{section: proof loss extension}). The analysis for more general loss functions, such as exponential-type loss and cross-entropy loss, may be a topic of future work.

\newpage

\appendix

\section{\bfseries Proof Details of Section \ref{section structure and complexity}}\label{section: proof intrinsic properties}
Recalling the deep CNN and FNN model (\ref{deep CNN}). For deep CNN case, the input $\mathbf{x}$ may be vectors (in $\mathbb{R}^d$) or matrixes (in $\mathbb{R}^{\sqrt{d}\times\sqrt{d}}$) . For matrix case, it is the same as the vector case by resizing the input and covolution kernels to vectors in the proof, and the same results can also be derived because the multiplicative structure of neural networks also holds. So without loss of generality, we only need to consider the vector case. And the specific form of deep CNN (\ref{deep CNN}) can be written as:
\[
\begin{aligned}
{\rm Output:}\ \ 	f(\mathbf{x};\Theta)&=\frac{1}{m^p}\sum_{k=1}^{m}a_k{z}_k^{(L)}, \\
{\rm FC:}\ \ \ \ \ \ \ \ \ 	\mathbf{z}^{(l)}&=\sigma(\mathbf{A}^{(l)^{\top}}\mathbf{z}^{(l-1)}),\ l\in[L]-[L_C], \\
{\rm Pool:}\ \ \ \ \ \ \ \ \ 	z_k^{(l)} &= \frac{1}{s_l}\sum\limits_{j=1}^{s_l}y_{(k-1)s_l+j}^{(l)} ,\ k\in[m_l],\ l\in[L_C] ,\\
{\rm Conv:}\ \ \ \ \ \ \ \ \ 	y_k^{(l)}&=\sigma(\mathbf{w}^{(l)^\top}\mathbf{z}_{[k:k+s_l-1]}^{(l-1)}),\ k\in[m_ls_l],\ l\in[L_C], \\
{\rm Input:}\ \ \ \ \ \ \ \  \mathbf{z}^{(0)}&=\mathbf{x}\in\mathbb{R}^d.
    \end{aligned}
\]
	
\begin{proof}[\bfseries Proof of Property \ref{multiplicative property}]\ 
\\ 
For the model \eqref{deep CNN} parameterized by $\Theta=({\Theta^{(1)})}^\top,\cdots,{\Theta^{(L+1)})}^\top)^\top)$, it holds that: for any layer $l\in[L+1]$,
\begin{align*}
    f\left(\mathbf{x};({\Theta^{(1)}}^\top,\cdots,c{\Theta^{(l)}}^\top,\cdots,{\Theta^{(L+1)}}^\top)^\top\right)=c f\left(\mathbf{x};\Theta\right),\forall c>0.
\end{align*}

Deriving the above formula with respect to $c$, and substituting $c=1$, we obtain:

\[
\left<\Theta^{(l)},\frac{\partial f}{\partial \Theta^{(l)}}\right>
=f(\mathbf{x};\Theta),\ \forall l\in [L+1].
\]
Then we have
\[
\left<\Theta, \nabla_{\Theta} f \right>
=(L+1)f(\mathbf{x};\Theta).
\]
\end{proof}
\begin{proof}[\bfseries Proof of Property \ref{property: high order upper bound}]\ \\
\noindent
We only need to prove the property for deep CNNs. As the beginning of the proof, we use some notations to simplify the forms.

For fully-connected layers, $\mathbf{A}^{(l)}=(\mathbf{A}_1^{(l)},\cdots,\mathbf{A}_{m_l}^{(l)})\in\mathbb{R}^{m_{l-1}\times m_l}$, $\mathbf{A}^{(L+1)}=(a_1,\cdots,a_m)^\top\in\mathbb{R}^{m}$ and $\Theta^{(l)}={\rm vec}(\mathbf{A}^{(l)})\ (l\in[L]-[L_C])$. Then the fully-connected layers can be writen as:
\[
\begin{gathered}
f(\mathbf{x};\Theta)=\frac{1}{m^p}{\mathbf{A}^{(L+1)}}^{\top}\mathbf{z}^{(L)}
\\
\mathbf{z}^{(l)} = \sigma\Big({\mathbf{A}^{(l)}}^{\top}\mathbf{z}^{(l-1)}\Big),\ l\in[L]-[L_C+1]
\end{gathered}
\]

For convolution layer $l\in[L_C]$,
we expand $\mathbf{w}^{(l)}\in\mathbb{R}^{s_l}$ to $\mathbf{\Tilde{w}}_{(k)}^{(l)}\in\mathbb{R}^{m_{l-1}}$ by:
\[
\Tilde{w}_{(k),j}^{l}=\begin{cases}w_{j-k}^{(l)},\ {\rm for}\  j:\ k\leq j\leq k+s_l-1\\0,\ {\rm otherwise}\end{cases}, k\in[m_ls_l],
\]
then we have $y_k^{(l)}=\sigma(\mathbf{\Tilde{w}}_{(k)}^{(l)^{\top}}\mathbf{z}^{(l-1)})$. If we denote:
\[
\begin{gathered}
 \mathbf{\Tilde{W}}^{(l)} = (\mathbf{\Tilde{w}}_{(1)}^{(l)},\cdots,\mathbf{\Tilde{w}}_{(m_ls_l)}^{(l)})\in\mathbb{R}^{m_{l-1}\times m_ls_l},
 \\
 \mathbf{P}^{(l)}=(p_{i j}^{(l)})_{m_l\times m_{l}s_l}, \ \ \ \ p_{i j}^{(l)}=\begin{cases}
\frac{1}{s_l},\ (i-1)s_l+1\leq j \leq is_l\\
0,\ {\rm otherwise}
\end{cases}
\end{gathered}
\]
the convolution layers of $l\in[L_C]$ can be written as:
\[
\begin{gathered}
    \mathbf{z}^{(l)}=\mathbf{P}^{(l)}\mathbf{y}^{(l)},\\
    \mathbf{y}^{(l)} =\sigma\Big( {\mathbf{\Tilde{W}}^{(l)^\top}}\mathbf{z}^{(l-1)}\Big).
\end{gathered}
\]
So the deep CNN can be writen as:
\[
 f(\mathbf{x};\Theta)=\frac{1}{m^p}{\mathbf{A}^{(L+1)^\top}}\sigma\Big({\mathbf{A}^{(L)}}^{\top}\sigma\Big(\cdots \mathbf{P}^{(L_C)}\sigma\Big({\mathbf{\Tilde{W}}^{(L_C-1)^\top}}\mathbf{P}^{(L_C-1)}\sigma\Big(\cdots{\mathbf{\Tilde{W}}^{(1)^\top}}\mathbf{x}\Big) \Big)\Big)\Big)\]

\noindent \textbf{(I) The bound of $f$.}
\[
\begin{aligned}
 |f(\mathbf{x};\Theta)|&=
 \frac{1}{m^p}\Big|{\mathbf{A}^{(L+1)}}^{\top}\sigma\Big({\mathbf{A}^{(L)}}^{\top}\sigma\Big(\cdots \mathbf{P}^{(L_C)}\sigma\Big({\mathbf{\Tilde{W}}^{(L_C)^\top}}\mathbf{P}^{(L_C-1)}\sigma\Big(\cdots{\mathbf{\Tilde{W}}^{(1)^\top}}\mathbf{x}\Big) \Big)\Big)\Big)\Big|
 \\&
 \leq
 \frac{1}{m^p}\left\|{\mathbf{A}^{(L+1)}}^{\top}\right\|_2\left\|\sigma\Big({\mathbf{A}^{(L)}}^{\top}\sigma\Big(\cdots \mathbf{P}^{(L_C)}\sigma\Big({\mathbf{\Tilde{W}}^{(L_C)^\top}}\mathbf{P}^{(L_C-1)}\sigma\Big(\cdots{\mathbf{\Tilde{W}}^{(1)^\top}}\mathbf{x}\Big) \Big)\Big)\Big)\right\|_2
 \\&
 \leq
 \frac{1}{m^p}\left\|\Theta^{(L+1)}\right\|_2\left\|{\mathbf{A}^{(L)}}^{\top}\sigma\Big(\cdots \mathbf{P}^{(L_C)}\sigma\Big({\mathbf{\Tilde{W}}^{(L_C)^\top}}\mathbf{P}^{(L_C-1)}\sigma\Big(\cdots{\mathbf{\Tilde{W}}^{(1)^\top}}\mathbf{x}\Big) \Big)\Big)\right\|_2
 \\&
 \leq
 \frac{1}{m^p}\left\|\Theta^{(L+1)}\right\|_2\left\|{\mathbf{A}^{(L)}}^{\top}\right\|_2\left\|\sigma\Big(\cdots \mathbf{P}^{(L_C)}\sigma\Big({\mathbf{\Tilde{W}}^{(L_C)^\top}}\mathbf{P}^{(L_C-1)}\sigma\Big(\cdots{\mathbf{\Tilde{W}}^{(1)^\top}}\mathbf{x}\Big) \Big)\Big)\right\|_2
  \\&
 \leq
 \frac{1}{m^p}\left\|\Theta^{(L+1)}\right\|_2\left\|\Theta^{(L)}\right\|_2\left\|\sigma\Big(\cdots \mathbf{P}^{(L_C)}\sigma\Big({\mathbf{\Tilde{W}}^{(L_C)^\top}}\mathbf{P}^{(L_C-1)}\sigma\Big(\cdots{\mathbf{\Tilde{W}}^{(1)^\top}}\mathbf{x}\Big) \Big)\Big)\right\|_2
  \\&
 \leq
 \frac{1}{m^p}\left\|\Theta^{(L+1)}\right\|_2\cdots\left\|\Theta^{(L_C+1)}\right\|_2\left\| \mathbf{P}^{(L_C)}\sigma\Big({\mathbf{\Tilde{W}}^{(L_C)^\top}}\mathbf{P}^{(L_C-1)}\sigma\Big(\cdots{\mathbf{\Tilde{W}}^{(1)^\top}}\mathbf{x}\Big) \Big)\right\|_2 ,
 \\&
 \leq
 \frac{1}{m^p}\left\|\Theta^{(L+1)}\right\|_2\cdots\left\|\Theta^{(L_C+1)}\right\|_2\left\| \mathbf{P}^{(L_C)}\right\|_2\left\|\sigma\Big({\mathbf{\Tilde{W}}^{(L_C)^\top}}\mathbf{P}^{(L_C-1)}\sigma\Big(\cdots{\mathbf{\Tilde{W}}^{(1)^\top}}\mathbf{x}\Big) \Big)\right\|_2,
\end{aligned}
\]
\[
{\rm where\ }
\left\| \mathbf{P}^{(l)}\right\|_2=\lambda_{\max}^{1/2}\Big(\mathbf{P}^{(l)}{\mathbf{P}^{(l)^\top}}\Big)
=\lambda_{\max}^{1/2}\Big(\frac{1}{s_{l}}\mathbf{I}\Big)=\frac{1}{\sqrt{s_{l}}},\ l\in[L_C],
\]
\[
\begin{aligned}
{\rm and\ }
&\left\|{\mathbf{\Tilde{W}}^{(l)^\top}}\mathbf{z}^{l-1}\right\|_2
\\=&
\left\|\Big(w_1^{(l)}z_1^{(l-1)}+\cdots+w_{s_l}^{(l)}z_{s_l}^{(l-1)},w_1^{(l)}z_2^{(l-1)}+\cdots+w_{s_l}^{(l)}z_{s_l+1}^{(l-1)},\cdots,
 w_1^{(l)}z_{m_{l-1}-s_l+1}^{(l-1)}+\cdots+w_{s_l}^{(l)}z_{m_{l-1}}^{(l-1)}
 \Big)^{\top}\right\|_2
 \\\leq&\sqrt{
 \left\|\mathbf{w}^{(l)}\right\|_2^2
 \Big(({z_1^{(l-1)}}^2+\cdots+{z_{s_l}^{(l-1)}}^2)+({z_2^{(l-1)}}^2+\cdots+{z_{s_l+1}^{(l-1)}}^2)+\cdots+(
 {z_{m_{l-1}-s_l+1}^{(l-1)}}^2+\cdots+{z_{m_{l-1}}^{(l-1)}}^2
 )\Big)}
 \\=&\sqrt{s_l}\left\|\mathbf{w}^{(l)}\right\|_2\left\|\mathbf{z}^{(l-1)}\right\|_2\ l\in[L_C].
\end{aligned}
\]
So we have the bound:
\[
\begin{aligned}
  &|f(\mathbf{x};\Theta)|
  \\\leq& \frac{1}{m^p}\left\|\Theta^{(L+1)}\right\|_2\cdots\left\|\Theta^{(L_C+1)}\right\|_2\left\| \mathbf{P}^{(L_C)}\right\|_2\left\|\sigma\Big({\mathbf{\Tilde{W}}^{(L_C-1)^\top}}\mathbf{P}^{(L_C-1)}\sigma\Big(\cdots{\mathbf{\Tilde{W}}^{(1)^\top}}\mathbf{x}\Big) \Big)\right\|_2
  \\\leq& \frac{1}{m^p}\left\|\Theta^{(L+1)}\right\|_2\cdots\left\|\Theta^{(L_C+1)}\right\|_2\frac{1}{\sqrt{s_{L_C}}}\left\|{\mathbf{\Tilde{W}}^{(L_C-1)^\top}}\mathbf{P}^{(L_C-1)}\sigma\Big(\cdots{\mathbf{\Tilde{W}}^{(1)^\top}}\mathbf{x}\Big)\right\|_2
\\\leq& \frac{1}{m^p}\left\|\Theta^{(L+1)}\right\|_2\cdots\left\|\Theta^{(L_C+1)}\right\|_2\frac{1}{\sqrt{s_{L_C}}}\sqrt{s_{L_C}}\left\|\mathbf{w}^{(L_C)}\right\|_2\left\|\mathbf{P}^{(L_C-1)}\sigma\Big(\cdots{\mathbf{\Tilde{W}}^{(1)^\top}}\mathbf{x}\Big)\right\|_2
\\=&\frac{1}{m^p}\left\|\Theta^{(L+1)}\right\|_2\cdots\left\|\Theta^{(L_C+1)}\right\|_2\left\|\Theta^{(L_C)}\right\|_2\left\|\mathbf{P}^{(L_C-1)}\sigma\Big(\cdots{\mathbf{\Tilde{W}}^{(1)^\top}}\mathbf{x}\Big)\right\|_2
\\\leq&\frac{1}{m^p}\left\|\Theta^{(L+1)}\right\|_2\cdots\left\|\Theta^{(L_C+1)}\right\|_2\left\|\Theta^{(L_C)}\right\|_2\cdots\left\|\Theta^{(1)}\right\|_2\left\|\mathbf{x}\right\|_2
\\\leq&
\frac{1}{m^p(L+1)^{\frac{L+1}{2}}}\left\|\mathbf{x}\right\|_2\left\|\Theta\right\|_2^{L+1}.
\end{aligned}
\]

\noindent \textbf{(II) The bound of $\frac{\partial f}{\partial\Theta^{(l)}}$.}

\noindent$\bullet$ For $l=L+1$, we have:
\[
 \left\|\frac{\partial f(\mathbf{x};\Theta)}{\partial \Theta^{(L+1)}}\right\|_2=\frac{1}{m^p}\left\|\mathbf{z}^{(L)}\right\|_2
 \leq\frac{1}{m^p}\left\|\Theta^{(L)}\right\|_2\cdots\left\|\Theta^{(1)}\right\|_2\left\|\mathbf{x}\right\|_2
\leq\frac{1}{m^p L^{\frac{L}{2}}}\left\|\mathbf{x}\right\|_2\left\|\Theta\right\|_2^L.
\]

\noindent$\bullet$ For $l=L$,
\[
 \Big|\frac{\partial f(\mathbf{x};\Theta)}{\partial {W}_{j i}^{(L)}}\Big|
 =\Big|\left<\frac{\partial f(\mathbf{x};\Theta)}{\partial {\mathbf{z}}^{(L)}},\frac{\partial {\mathbf{z}}^{(L)}}{\partial {W}_{j i}^{(L)}}\right>\Big|
 =\Big|\frac{\partial f(\mathbf{x};\Theta)}{\partial {z}_i^{(L)}}\frac{\partial z_i^{(L)}}{\partial {W}_{j i}^{(L)}}\Big|
 \leq\frac{1}{m^p}|a_i|\Big|z_j^{(L-1)}\Big|,
 \]
 \[
 \begin{aligned}
  &\left\|\frac{\partial f(\mathbf{x};\Theta)}{\partial \Theta^{(L)}}\right\|_2=\left\|\frac{\partial f(\mathbf{x};\Theta)}{\partial {\mathbf{W}}^{(L)}}\right\|_F=\Bigg(\sum_{i,j}\Big(\frac{\partial f(\mathbf{x};\Theta)}{\partial {W}_{j i}^{(L)}}\Big)^2\Bigg)^{\frac{1}{2}}\leq\frac{1}{m^p}\Big(\sum_{i,j}a_i^2{z_j^{(L-1)}}^2\Big)^{\frac{1}{2}}
 \\
 \leq&\frac{1}{m^p}\left\|\Theta^{(L+1)}\right\|_2\left\|\mathbf{z}^{(L-1)}\right\|_2=\frac{1}{m^p}\left\|\Theta^{(L+1)}\right\|_2\left\|\Theta^{(L-1)}\right\|_2\cdots\left\|\Theta^{(1)}\right\|_2\left\|\mathbf{x}\right\|_2
 \\\leq&\frac{1}{m^p L^{\frac{L}{2}}}\left\|\mathbf{x}\right\|_2\left\|\Theta\right\|_2^L.
 \end{aligned}
\]

\noindent$\bullet$ For $l\in[L-1]-[L_C]$, it is more complicated. In order to avoid vague amplification, we will use the chain rule carefully by a correct chain $f\to\mathbf{z}^{(L)}\to\mathbf{z}^{(L-1)}\cdots\to\mathbf{z}^{(l+1)}\to\mathbf{W}^{(l)}$, where $\mathbf{z}^{(l+1)}=\sigma\Big({\mathbf{W}^{(l+1)}}^{\top}\sigma\Big({\mathbf{W}^{(l)}}^{\top}\mathbf{z}^{(l-1)}\Big)\Big)$ provides a finer analysis than $\mathbf{z}^{(l)}\to\mathbf{W}^{(l)}$. 
\[
\begin{aligned}
&\Big|\frac{\partial f(\mathbf{x};\Theta)}{\partial {W}_{j i}^{(l)}}\Big|
=\Big|\left<\frac{\partial f(\mathbf{x};\Theta)}{\partial {\mathbf{z}}^{(L)}},\frac{\partial {\mathbf{z}}^{(L)}}{\partial {W}_{j i}^{(l)}}\right>\Big|
\leq\frac{1}{m^p}\left\|\Theta^{(L+1)}\right\|_2\left\|\frac{\partial {\mathbf{z}}^{(L)}}{\partial {W}_{j i}^{(l)}}\right\|_2
\\=&
\frac{1}{m^p}\left\|\Theta^{(L+1)}\right\|_2\left\|\frac{\partial {\mathbf{z}}^{(L)}}{\partial \mathbf{z}^{(L-1)}}\frac{\partial {\mathbf{z}}^{(L-1)}}{\partial {W}_{j i}^{(l)}}\right\|_2
\leq
\frac{1}{m^p}\left\|\Theta^{(L+1)}\right\|_2\left\|\frac{\partial {\mathbf{z}}^{(L)}}{\partial\mathbf{z}^{(L-1)}}\right\|_F\left\|\frac{\partial {\mathbf{z}}^{(L-1)}}{\partial {W}_{j i}^{(l)}}\right\|_2
\\\leq&
\frac{1}{m^p}\left\|\Theta^{(L+1)}\right\|_2
\left\|\mathbf{W}^{(L)}\right\|_F\left\|\frac{\partial {\mathbf{z}}^{(L-1)}}{\partial {W}_{j i}^{(l)}}\right\|_2
=\frac{1}{m^p}\left\|\Theta^{(L+1)}\right\|_2
\left\|\Theta^{(L)}\right\|_2\left\|\frac{\partial {\mathbf{z}}^{(L-1)}}{\partial {W}_{j i}^{(l)}}\right\|_2
\\\leq&\cdots
\leq
\frac{1}{m^p}\left\|\Theta^{(L+1)}\right\|_2\cdots
\left\|\Theta^{(l+2)}\right\|_2\left\|\frac{\partial {\mathbf{z}}^{(l+1)}}{\partial {W}_{j i}^{(l)}}\right\|_2
\\=&
\frac{1}{m^p}\left\|\Theta^{(L+1)}\right\|_2\cdots
\left\|\Theta^{(l+2)}\right\|_2\Bigg(\sum_{k}\Big(\frac{\partial z_k^{(l+1)}}{\partial {W}_{j i}^{(l)}}\Big)^2\Bigg)^{\frac{1}{2}}.
\end{aligned}
\]
\[
\begin{gathered}
 {\rm From\ }
 z_k^{(l+1)}=\sigma\Big({\mathbf{W}_{:,k}^{(l+1)}}^{\top}\mathbf{z}^{(l)}\Big)
 =\sigma\Big(\sum_{s}W_{s k}^{(l+1)}\sigma\Big({\mathbf{W}_{:,s}^{(l)}}^{\top}\mathbf{z}^{(l-1)}\Big)\Big),{\ \rm we\ have:}\\
 \sum_{k}\Big(\frac{\partial z_k^{(l+1)}}{\partial {W}_{j i}^{(l)}}\Big)^2\leq\sum_{k}\Big(W_{i k}^{(l+1)}z_{j}^{(l-1)}\Big)^2=\left\|\mathbf{W}_{i,:}^{(l+1)}\right\|_2^2{z_j^{(l-1)}}^2.
 \end{gathered}
\]
\[
\begin{aligned}
 {\rm Thus,\ }
 &\left\|\frac{\partial f(\mathbf{x};\Theta)}{\partial \Theta^{(l)}}\right\|_2
 =\left\|\frac{\partial f(\mathbf{x};\Theta)}{\partial \mathbf{W}^{(l)}}\right\|_F
 =\Bigg(\sum_{i,j}\Big(\frac{\partial f(\mathbf{x};\Theta)}{\partial {W}_{j i}^{(l)}}\Big)^2\Bigg)^{\frac{1}{2}}
 \\\leq&
 \frac{1}{m^p}\left\|\Theta^{(L+1)}\right\|_2\cdots
\left\|\Theta^{(l+2)}\right\|_2\Bigg(\sum_{i,j}\sum_{k}\Big(\frac{\partial z_k^{(l+1)}}{\partial {W}_{j i}^{(l)}}\Big)^2\Bigg)^{\frac{1}{2}}
\\\leq&
 \frac{1}{m^p}\left\|\Theta^{(L+1)}\right\|_2\cdots
\left\|\Theta^{(l+2)}\right\|_2\Bigg(\sum_{i,j}\left\|\mathbf{W}_{i,:}^{(l+1)}\right\|_2^2{z_j^{(l-1)}}^2\Bigg)^{\frac{1}{2}}
\\=&
\frac{1}{m^p}\left\|\Theta^{(L+1)}\right\|_2\cdots
\left\|\Theta^{(l+2)}\right\|_2\left\|\mathbf{W}^{(l+1)}\right\|_F\left\|\mathbf{z}^{(l-1)}\right\|_2
\\\leq&
\frac{1}{m^p}\left\|\Theta^{(L+1)}\right\|_2\cdots
\left\|\Theta^{(l+1)}\right\|_2\left\|\Theta^{(l-1)}\right\|_2\cdots\left\|\Theta^{(1)}\right\|_2\left\|\mathbf{x}\right\|_2
\leq\frac{1}{m^p L^{\frac{L}{2}}}\left\|\mathbf{x}\right\|_2\left\|\Theta\right\|_2^L.
\end{aligned}\]

$\bullet$ For $l\in[L_C]$, we will use the chain rule carefully by a correct chain $f\to\mathbf{z}^{(L_C)}\to\mathbf{z}^{(L_C-1)}\cdots\to\mathbf{z}^{(l)}\to\mathbf{w}^{(l)}$, where $\mathbf{z}^{(l)}=\mathbf{P}^{(l)}\sigma\Big(\mathbf{\Tilde{W}}^{(l)^{\top}}\mathbf{z}^{(l-1)}\Big)$ provides a fine analysis. 

\[
\begin{aligned}
&\Big|\frac{\partial f(\mathbf{x};\Theta)}{\partial {w}_{ i}^{(l)}}\Big|
=\Big|\left<\frac{\partial f(\mathbf{x};\Theta)}{\partial {\mathbf{z}}^{(L)}},\frac{\partial {\mathbf{z}}^{(L)}}{\partial {w}_{ i}^{(l)}}\right>\Big|
\leq\frac{1}{m^p}\left\|\Theta^{(L+1)}\right\|_2\left\|\frac{\partial {\mathbf{z}}^{(L)}}{\partial {w}_{ i}^{(l)}}\right\|_2
\\=&
\frac{1}{m^p}\left\|\Theta^{(L+1)}\right\|_2\left\|\frac{\partial {\mathbf{z}}^{(L)}}{\partial \mathbf{z}^{(L-1)}}\frac{\partial {\mathbf{z}}^{(L-1)}}{\partial {w}_{ i}^{(l)}}\right\|_2
\leq
\frac{1}{m^p}\left\|\Theta^{(L+1)}\right\|_2\left\|\frac{\partial {\mathbf{z}}^{(L)}}{\partial\mathbf{z}^{(L-1)}}\right\|_F\left\|\frac{\partial {\mathbf{z}}^{(L-1)}}{\partial {w}_{ i}^{(l)}}\right\|_2
\\\leq&
\frac{1}{m^p}\left\|\Theta^{(L+1)}\right\|_2
\left\|\Theta^{(L)}\right\|_2\left\|\frac{\partial {\mathbf{z}}^{(L-1)}}{\partial {w}_{ i}^{(l)}}\right\|_2\leq\cdots
\leq
\frac{1}{m^p}\left\|\Theta^{(L+1)}\right\|_2\cdots
\left\|\Theta^{(L_C+1)}\right\|_2\left\|\frac{\partial {\mathbf{z}}^{(L_C)}}{\partial {w}_{ i}^{(l)}}\right\|_2
\\\leq&
\frac{1}{m^p}\left\|\Theta^{(L+1)}\right\|_2\cdots
\left\|\Theta^{(L_C+1)}\right\|_2\left\|\frac{\partial {\mathbf{z}}^{(L_C)}}{\partial \mathbf{z}^{(L_C-1)}}\right\|_2\left\|\frac{\partial {\mathbf{z}}^{(L_C-1)}}{\partial {w}_{ i}^{(l)}}\right\|_2
\\\leq&\frac{1}{m^p}\left\|\Theta^{(L+1)}\right\|_2\cdots
\left\|\Theta^{(L_C+1)}\right\|_2\left\|\mathbf{P}^{(L_C)}\right\|_2\left\|\mathbf{\Tilde{W}}^{(L_C)}\right\|_2\left\|\frac{\partial {\mathbf{z}}^{(L_C-1)}}{\partial {w}_{ i}^{(l)}}\right\|_2
\\\leq&
\frac{1}{m^p}\left\|\Theta^{(L+1)}\right\|_2\cdots
\left\|\Theta^{(L_C+1)}\right\|_2\frac{1}{\sqrt{s_{L_C}}}\sqrt{s_{L_C}}\left\|\mathbf{w}^{(L_C)}\right\|_2\left\|\frac{\partial {\mathbf{z}}^{(L_C-1)}}{\partial {w}_{ i}^{(l)}}\right\|_2
\\=&\frac{1}{m^p}\left\|\Theta^{(L+1)}\right\|_2\cdots
\left\|\Theta^{(L_C+1)}\right\|_2\left\|\Theta^{(L_C)}\right\|_2\left\|\frac{\partial {\mathbf{z}}^{(L_C-1)}}{\partial {w}_{ i}^{(l)}}\right\|_2\leq\cdots
\\\leq&
\frac{1}{m^p}\left\|\Theta^{(L+1)}\right\|_2\cdots
\left\|\Theta^{(l+1)}\right\|_2\left\|\frac{\partial {\mathbf{z}}^{(l)}}{\partial {w}_{ i}^{(l)}}\right\|_2
.
\end{aligned}
\]
\[
\begin{gathered}
 {\rm From\ }
 z_k^{(l)}=\frac{1}{s_l}\Big(\sigma({\mathbf{w}^{(l)}}^{\top}\mathbf{z}_{[(k-1)s_l+1,\cdots,ks_l]}^{(l-1)})+\cdots+
 \sigma({\mathbf{w}^{(l)}}^{\top}\mathbf{z}_{[ks_l,\cdots,(k+1)s_l-1]}^{(l-1)})\Big)
 ,{\ \rm we\ have:}\\
 \left\|\frac{\partial {\mathbf{z}}^{(l)}}{\partial {w}_{ i}^{(l)}}\right\|_2^2=\sum_{k}\Big(\frac{\partial z_k^{(l)}}{\partial {w}_{ i}^{(l)}}\Big)^2\leq
 \sum_{k}\frac{1}{s_l^2}s_l\Big({z_{(k-1)s_l+i}^{(l-1)}}^2+\cdots+{z_{ks_l+i-1}^{(l-1)}}^2\Big)^2=\frac{1}{s_l}\left\|\mathbf{z}^{(l-1)}\right\|_2^2.
 \end{gathered}
\]
\[
\begin{aligned}
 {\rm Thus,\ }
 &\left\|\frac{\partial f(\mathbf{x};\Theta)}{\partial \Theta^{(l)}}\right\|_2
 =\left\|\frac{\partial f(\mathbf{x};\Theta)}{\partial \mathbf{w}^{(l)}}\right\|_2
 =\Bigg(\sum_{i}\Big(\frac{\partial f(\mathbf{x};\Theta)}{\partial {w}_{ i}^{(l)}}\Big)^2\Bigg)^{\frac{1}{2}}
 \\\leq&
 \frac{1}{m^p}\left\|\Theta^{(L+1)}\right\|_2\cdots
\left\|\Theta^{(l+1)}\right\|_2\Bigg(\sum_{i}\left\|\frac{\partial {\mathbf{z}}^{(l)}}{\partial {w}_{ i}^{(l)}}\right\|_2^2\Bigg)^{\frac{1}{2}}
\\\leq&
 \frac{1}{m^p}\left\|\Theta^{(L+1)}\right\|_2\cdots
\left\|\Theta^{(l+1)}\right\|_2\Bigg(\sum_{i}s_l\frac{1}{s_l}\left\|\mathbf{z}^{(l-1)}\right\|_2^2\Bigg)^{\frac{1}{2}}
\\=&
\frac{1}{m^p}\left\|\Theta^{(L+1)}\right\|_2\cdots
\left\|\Theta^{(l+1)}\right\|_2\left\|\mathbf{z}^{(l-1)}\right\|_2
\\\leq&
\frac{1}{m^p}\left\|\Theta^{(L+1)}\right\|_2\cdots
\left\|\Theta^{(l+1)}\right\|_2\left\|\Theta^{(l-1)}\right\|_2\cdots\left\|\Theta^{(1)}\right\|_2\left\|\mathbf{x}\right\|_2
\leq\frac{1}{m^p L^{\frac{L}{2}}}\left\|\mathbf{x}\right\|_2\left\|\Theta\right\|_2^L.
\end{aligned}\]

\end{proof}

\newpage

\section{\textbf{Proof Details of Section \ref{section: Rademacher estimation} }}\label{section: proof Rademacher complexity}
We derive our basic Rademacher compelexity estimation for deep CNN model with the help of Lemma 1 and Lemma 2 in \citep{golowich2018size} which   performs the peeling technique inside the exp function. For completeness, we write the two lemmas in the follwing lemma.
\begin{lemma}[\rm Lemma 1, Lemma 2 in \citep{golowich2018size}]\label{norm appendix}\ \\
Let $\sigma(\cdot)$ be a $1$-Lipschitz, positive-homogeneous activation function which is applied element-wise. Let $\epsilon=(\epsilon_1,\cdots,\epsilon_n)\sim\mathbb{U}(\{\pm 1\}^n)$. Then for any class of vector-valued function $\mathcal{F}$, and any convex and monotonically increasing function $g:\mathbb{R}\to[0,\infty)$, we have:
\begin{equation}\label{2-norm lemma appendix}
    \mathbb{E}_\epsilon\sup\limits_{\mathbf{f}\in\mathcal{F},\left\|\mathbf{W}\right\|_F\leq R}g\Big(\left\|\sum\limits_{i=1}^n\epsilon_i\sigma(\mathbf{W}\mathbf{f}(\mathbf{x}_i))\right\|_2\Big)\leq2\mathbb{E}_\epsilon\sup\limits_{\mathbf{f}\in\mathcal{F}}g\Big(R\left\|\sum\limits_{i=1}^n\epsilon_i\mathbf{f}(\mathbf{x}_i)\right\|_2\Big),
\end{equation}
\begin{equation}\label{infty-norm lemma appendix}
    \mathbb{E}_\epsilon\sup\limits_{\mathbf{f}\in\mathcal{F},\left\|\mathbf{W}\right\|_{1,\infty}\leq R}g\Big(\left\|\sum\limits_{i=1}^n\epsilon_i\sigma(\mathbf{W}\mathbf{f}(\mathbf{x}_i))\right\|_\infty\Big)\leq2\mathbb{E}_\epsilon\sup\limits_{\mathbf{f}\in\mathcal{F}}g\Big(R\left\|\sum\limits_{i=1}^n\epsilon_i\mathbf{f}(\mathbf{x}_i)\right\|_\infty\Big).
\end{equation} 
\end{lemma}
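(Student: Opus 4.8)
The plan is to prove both inequalities by the same three-move peeling argument, treating the Frobenius/$\ell_2$ case \eqref{2-norm lemma appendix} and the $(1,\infty)$/$\ell_\infty$ case \eqref{infty-norm lemma appendix} in parallel, since they differ only in which matrix norm is imposed and which dual norm appears at the end.

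First I would reduce the supremum over the weight matrix $\mathbf{W}$ to a supremum over a single unit vector. Writing $\mathbf{W}$ row-wise with rows $\mathbf{w}_j=r_j\mathbf{u}_j$, $\|\mathbf{u}_j\|=1$, positive homogeneity of $\sigma$ gives $\sigma(\langle\mathbf{w}_j,\mathbf{z}\rangle)=r_j\sigma(\langle\mathbf{u}_j,\mathbf{z}\rangle)$ for each $j$, so the $j$-th coordinate of $\sum_i\epsilon_i\sigma(\mathbf{W}\mathbf{f}(\mathbf{x}_i))$ scales linearly in $r_j$. For the $\ell_2$ case the squared norm becomes $\sum_j r_j^2\,\big(\sum_i\epsilon_i\sigma(\langle\mathbf{u}_j,\mathbf{f}(\mathbf{x}_i)\rangle)\big)^2$ under $\sum_j r_j^2\le R^2$, which is maximized by placing all mass on the single best row; for the $\ell_\infty$ case the norm is already a maximum over rows whose constraint decouples, so again the optimum is a single row. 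In both cases the inner supremum collapses to $R\sup_{\|\mathbf{u}\|\le1}\big|\sum_i\epsilon_i\sigma(\langle\mathbf{u},\mathbf{f}(\mathbf{x}_i)\rangle)\big|$, with $\|\cdot\|$ the $\ell_2$ (resp.\ $\ell_1$) ball dual to the target norm.

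Second I would remove the absolute value at the cost of the factor $2$. Since $g$ is nondecreasing, the outer $g(R\sup_\mathbf{u}|\cdot|)$ equals $\sup_\mathbf{u}g(R|\cdot|)$, and pointwise $g(R|a|)=\max(g(Ra),g(-Ra))\le g(Ra)+g(-Ra)$. Taking the supremum over $\mathbf{u}$ (and over $\mathbf{f}\in\mathcal{F}$) and then the expectation, the two resulting terms have equal value because $\epsilon\mapsto-\epsilon$ preserves the distribution of the Rademacher vector; this yields $2\,\mathbb{E}_\epsilon\sup_{\mathbf{f},\|\mathbf{u}\|\le1}g\big(R\sum_i\epsilon_i\sigma(\langle\mathbf{u},\mathbf{f}(\mathbf{x}_i)\rangle)\big)$. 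Third I would strip $\sigma$ by the convex Ledoux--Talagrand contraction principle: with $\tilde g(\cdot)=g(R\,\cdot)$ still convex and nondecreasing and $\sigma$ being $1$-Lipschitz with $\sigma(0)=0$, the comparison inequality replaces each $\sigma(\langle\mathbf{u},\mathbf{f}(\mathbf{x}_i)\rangle)$ by $\langle\mathbf{u},\mathbf{f}(\mathbf{x}_i)\rangle$ inside the supremized sum without increasing the expectation. Finally $\sum_i\epsilon_i\langle\mathbf{u},\mathbf{f}(\mathbf{x}_i)\rangle=\langle\mathbf{u},\sum_i\epsilon_i\mathbf{f}(\mathbf{x}_i)\rangle$, and taking $\sup_{\|\mathbf{u}\|\le1}$ produces exactly the dual norm $\|\sum_i\epsilon_i\mathbf{f}(\mathbf{x}_i)\|_2$ (resp.\ $\|\cdot\|_\infty$), giving the right-hand sides.

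The main obstacle is the contraction step. The textbook Ledoux--Talagrand principle is usually stated either without the outer function $g$ or for a sup over a finite index set, whereas here I need the version in which a convex nondecreasing $g$ wraps the supremum and the $1$-Lipschitz map acts coordinatewise inside a sup taken over the abstract set $\{(\langle\mathbf{u},\mathbf{f}(\mathbf{x}_i)\rangle)_{i=1}^n:\|\mathbf{u}\|\le1,\ \mathbf{f}\in\mathcal{F}\}\subseteq\mathbb{R}^n$. Verifying that this comparison holds (crucially using $\sigma(0)=0$) and that the factor-$2$ symmetry argument is applied in the correct order of supremum and expectation are the delicate points; the homogeneity reduction and the dual-norm evaluation are routine once the matrix norm is unwound.
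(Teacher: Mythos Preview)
The paper does not supply its own proof of this lemma; it simply records Lemma~1 and Lemma~2 of \citep{golowich2018size} for later use in the proof of Lemma~\ref{Estimation of Rademacher Complexity}. So there is no in-paper proof to compare against.

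Your three-move argument---(i) use positive homogeneity of $\sigma$ to collapse the matrix supremum to $R\sup_{\|\mathbf{u}\|\le1}\bigl|\sum_i\epsilon_i\sigma(\langle\mathbf{u},\mathbf{f}(\mathbf{x}_i)\rangle)\bigr|$, (ii) bound $g(R|a|)\le g(Ra)+g(-Ra)$ and use $\epsilon\overset{d}{=}-\epsilon$ to obtain the factor $2$, (iii) apply the convex Ledoux--Talagrand contraction to drop $\sigma$ and then take the dual norm---is exactly the argument of \citep{golowich2018size}, so your proposal is correct and aligned with the original source. You are also right that the only nontrivial step is the contraction principle in the form that allows a convex nondecreasing $g$ outside the supremum; this is precisely the version Golowich--Rakhlin--Shamir invoke (Ledoux--Talagrand, Theorem~4.12 with Eq.~(4.20)), and it does require $\sigma(0)=0$, which holds for a $1$-Lipschitz positive-homogeneous activation. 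One small point worth making explicit in your write-up: the inequality $g(|a|)=\max\{g(a),g(-a)\}\le g(a)+g(-a)$ uses both monotonicity and nonnegativity of $g$, and the supremum over $(\mathbf{f},\mathbf{u})$ must be taken \emph{before} splitting into the two terms so that the subsequent symmetry-in-$\epsilon$ argument applies to each term separately.
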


\begin{proof}[\bfseries Proof of Lemma \ref{Estimation of Rademacher Complexity}]\ \\
\textbf{Deep FNNs.}
Theorem 1 in \citep{golowich2018size} has analyzed deep FNN model using Lemma \ref{norm appendix}:
\[
{\rm Rad}_n(\mathcal{F}_{\mathbf{Q}})
\leq
   \frac{\sqrt{2(L+1)\log2}+1}{m^p\sqrt{n}}\Big(\prod\limits_{l=1}^{L+1}Q_l\Big).
\]

\textbf{Deep CNNs.}
As an extension of it, we use Lemma \ref{norm appendix} to analyze deep CNN model especially for the convolution layers. 

Let $\mathcal{F}_{\mathbf{Q}}=\Big\{f(\cdot,\Theta):\left\|\Theta^{(l)}\right\|_2\leq Q_l,\forall l\in[L+1]\Big\}$ where $\mathbf{Q}=(Q_1,\cdots,Q_{L+1})^\top$.

Fix $\lambda>0$ to be chosen later, and we use  the similar technique in the proof of Theorem 1 and Theorem 2 in \citep{golowich2018size}. 

$\bullet$ \textbf{Fully-connected layers.}  By using (\ref{2-norm lemma appendix}) in Lemma \ref{norm appendix}, for fully-connected layer $l\in[L]-[L_C+1]$ we have the estimation for adjacent layers:
\[
\begin{aligned}
\mathbb{E}_\epsilon\sup\limits_{\mathbf{z}^{(l)}}\exp\Big(\lambda\left\|\sum\limits_{i=1}^n\epsilon_i\mathbf{z}^{(l)}(\mathbf{x}_i)\right\|_2\Big)&\leq
    \mathbb{E}_\epsilon\sup\limits_{\left\|\Theta^{(l)}\right\|_2\leq Q_l,\mathbf{z}^{(l-1)}}\exp\Big(\lambda\left\|\sum\limits_{i=1}^n\epsilon_i\sigma(\mathbf{A}^{(l)}\mathbf{z}^{(l-1)}(\mathbf{x}_i))\right\|_2\Big)
    \\&=
    \mathbb{E}_\epsilon\sup\limits_{\left\|\mathbf{A}^{(l)}\right\|_F\leq Q_l,\mathbf{z}^{(l-1)}}\exp\Big(\lambda\left\|\sum\limits_{i=1}^n\epsilon_i\sigma(\mathbf{A}^{(l)}\mathbf{z}^{(l-1)}(\mathbf{x}_i))\right\|_2\Big)
    \\&\leq
    2\mathbb{E}_\epsilon\sup\limits_{\mathbf{z}^{(l-1)}}\exp\Big(\lambda Q_l\left\|\sum\limits_{i=1}^n\epsilon_i\mathbf{z}^{(l-1)}(\mathbf{x}_i)\right\|_2\Big).
\end{aligned}
\]
So we have the estimation for fully-connected layers:
\begin{equation}\label{fully connected layers in proof of Rademacher}
\begin{aligned}
    &\mathbb{E}_\epsilon\sup\limits_{f(\cdot,\Theta)\in\mathcal{F_\mathbf{Q}}}\exp\Big(\lambda\left|\sum\limits_{i=1}^n\epsilon_i f(\mathbf{x}_i;\Theta)\right|\Big)\\
    \leq&2\mathbb{E}_\epsilon\sup\limits_{\mathbf{z}^{(L)}}\exp\Big(\lambda\frac{1}{m^p}Q_{L+1}\left\|\sum\limits_{i=1}^n\epsilon_i\mathbf{z}^{(L)}(\mathbf{x}_i)\right\|_2\Big)\leq\cdots
    \\\leq&2^{L+1-L_C}\mathbb{E}_\epsilon\sup\limits_{\mathbf{z}^{(L_C)}}\exp\Big(\lambda\frac{1}{m^p}\Big(\prod\limits_{l=L_C+1}^{L+1}Q_l\Big)\left\|\sum\limits_{i=1}^n\epsilon_i\mathbf{z}^{(L_C)}(\mathbf{x}_i)\right\|_2\Big)
    .
\end{aligned}
\end{equation}

$\bullet$ \textbf{Convolutional layers.}\  We use the same notations in the proof of Property \ref{property: high order upper bound}, then we have the norm inequations:
\begin{equation*}
\begin{gathered}
 \left\|\Tilde{\mathbf{W}}^{(l)}\right\|_{1,\infty}=\left\|\mathbf{w}^{(l)}\right\|_1=\left\|\Theta^{(l)}\right\|_1\leq\sqrt{s_l}\left\|\Theta^{(l)}\right\|_2,
 \\
 \left\|\mathbf{z}^{(l)}\right\|_{\infty}\leq\left\|\mathbf{y}^{(l)}\right\|_{\infty}.
\end{gathered}
\end{equation*}
We define $\lambda'=\lambda\frac{1}{m^p}\big(\prod\limits_{l=L_C+1}^{L+1}Q_l\big)$ for convenience. Then for any convolution layer $l\in[L_C]$, we have the estimation between two adjacent layers by (\ref{infty-norm lemma appendix}) in Lemma \ref{norm appendix}:
\begin{equation*}
\begin{aligned}
    \mathbb{E}_\epsilon\sup\limits_{\mathbf{z}^{(l)}}\exp\Big(\lambda'\left\|\sum\limits_{i=1}^n\epsilon_i\mathbf{z}^{(l)}(\mathbf{x}_i)\right\|_\infty\Big)
    &=        \mathbb{E}_\epsilon\sup\limits_{\mathbf{y}^{(l)}}\exp\Big(\lambda'\left\|\sum\limits_{i=1}^n\epsilon_i\mathbf{P}^{(l)}\mathbf{y}^{(l)}(\mathbf{x}_i)\right\|_\infty\Big)
    \\
    &\leq
        \mathbb{E}_\epsilon\sup\limits_{\mathbf{y}^{(l)}}\exp\Big(\lambda'\left\|\sum\limits_{i=1}^n\epsilon_i\mathbf{y}^{(l)}(\mathbf{x}_i)\right\|_\infty\Big)
        \\&=
  \mathbb{E}_\epsilon\sup\limits_{\left\|\Theta^{(l)}\right\|_2\leq Q_l,\mathbf{z}^{(l-1)}}\exp\Big(\lambda'\left\|\sum\limits_{i=1}^n\epsilon_i\sigma(\Tilde{\mathbf{W}}^{(l)}\mathbf{z}^{(l-1)}(\mathbf{x}_i))\right\|_\infty\Big)
  \\&\leq
    \mathbb{E}_\epsilon\sup\limits_{\left\|\Tilde{\mathbf{W}}^{(l)}\right\|_{1,\infty}\leq \sqrt{s_l}Q_l,\mathbf{z}^{(l-1)}}\exp\Big(\lambda'\left\|\sum\limits_{i=1}^n\epsilon_i\sigma(\Tilde{\mathbf{W}}^{(l)}\mathbf{z}^{(l-1)}(\mathbf{x}_i))\right\|_\infty\Big)
      \\&\leq
    2\mathbb{E}_\epsilon\sup\limits_{\mathbf{z}^{(l-1)}}\exp\Big(\lambda'\sqrt{s_l}Q_l\left\|\sum\limits_{i=1}^n\epsilon_i\mathbf{z}^{(l-1)}(\mathbf{x}_i)\right\|_\infty\Big).
\end{aligned}
\end{equation*}
So we have the estimation for convolution layers:
\begin{equation}\label{convolution layers in proof of Rademacher}
    \begin{aligned}
    &\mathbb{E}_\epsilon\sup\limits_{\mathbf{z}^{(L_C)}}\exp\Big(\lambda'\left\|\sum\limits_{i=1}^n\epsilon_i\mathbf{z}^{(L_C)}(\mathbf{x}_i)\right\|_\infty\Big)\\\leq&
    2\mathbb{E}_\epsilon\sup\limits_{\mathbf{z}^{(L_C-1)}}\exp\Big(\lambda'\left\|\sum\limits_{i=1}^n\epsilon_i\mathbf{z}^{(L_C-1)}(\mathbf{x}_i)\right\|_\infty\Big)\leq\cdots\\
    \leq&
    2^{L_C}\mathbb{E}_\epsilon\exp\Big(\lambda'\Big(\prod\limits_{l=1}^{L_C}\sqrt{s_l}Q_l\Big)\left\|\sum\limits_{i=1}^n\epsilon_i\mathbf{x}_i\right\|_\infty\Big)
    \end{aligned}.
\end{equation}
Combining (\ref{fully connected layers in proof of Rademacher}) and (\ref{convolution layers in proof of Rademacher}), we obtain the estimation for all layers:
\begin{equation}\label{all layers in proof of Rademacher}
    \begin{aligned}
        &\mathbb{E}_\epsilon\sup\limits_{f(\cdot,\Theta)\in\mathcal{F_\mathbf{Q}}}\exp\Big(\lambda\left|\sum\limits_{i=1}^n\epsilon_i f(\mathbf{x}_i;\Theta)\right|\Big)
        \\\leq&
        2^{L+1-L_C}\mathbb{E}_\epsilon\sup\limits_{\mathbf{z}^{(L_C)}}\exp\Big(\lambda\frac{1}{m^p}\Big(\prod\limits_{l=L_C+1}^{L+1}Q_l\Big)\left\|\sum\limits_{i=1}^n\epsilon_i\mathbf{z}^{(L_C)}(\mathbf{x}_i)\right\|_2\Big)
        \\\leq& 2^{L+1-L_C}\mathbb{E}_\epsilon\sup\limits_{\mathbf{z}^{(L_C)}}\exp\Big(\lambda\frac{1}{m^p}\Big(\prod\limits_{l=L_C+1}^{L+1}Q_l\Big)\sqrt{m_{L_C}}\left\|\sum\limits_{i=1}^n\epsilon_i\mathbf{z}^{(L_C)}(\mathbf{x}_i)\right\|_\infty\Big)
        \\\leq&
        2^{L+1}\mathbb{E}_\epsilon\exp\Big(\lambda\Big(\prod\limits_{l=1}^{L_C}\sqrt{s_l}\Big)\Big(\prod\limits_{l=1}^{L+1}Q_L\Big)\frac{\sqrt{m_{L_C}}}{m^p}\left\|\sum\limits_{i=1}^n\epsilon_i\mathbf{x}_i\right\|_\infty\Big).
    \end{aligned}
\end{equation}
With (\ref{all layers in proof of Rademacher}), we have the estimation of the Rademacher complexity:
\[
\begin{aligned}
    n{\rm Rad}_n(\mathcal{F}_{\mathbf{Q}})
    &=\mathbb{E}_\epsilon\sup\limits_{f(\cdot,\Theta)\in\mathcal{F_\mathbf{Q}}}\sum\limits_{i=1}^n\epsilon_i f(\mathbf{x}_i;\Theta)
    \\&\leq\frac{1}{\lambda}\log\mathbb{E}_\epsilon\sup\limits_{f(\cdot,\Theta)\in\mathcal{F_\mathbf{Q}}}\exp\Big(\lambda\sum\limits_{i=1}^n\epsilon_i f(\mathbf{x}_i;\Theta)\Big)
    \\&\leq\frac{1}{\lambda}\log\mathbb{E}_\epsilon\sup\limits_{f(\cdot,\Theta)\in\mathcal{F_\mathbf{Q}}}\exp\Big(\lambda\left|\sum\limits_{i=1}^n\epsilon_i f(\mathbf{x}_i;\Theta)\right|\Big)\\&\leq\frac{1}{\lambda}\log        2^{L+1}\mathbb{E}_\epsilon\exp\Big(\lambda\Big(\prod\limits_{l=1}^{L_C}\sqrt{s_l}\Big)\Big(\prod\limits_{l=1}^{L+1}Q_L\Big)\frac{\sqrt{m_{L_C}}}{m^p}\left\|\sum\limits_{i=1}^n\epsilon_i\mathbf{x}_i\right\|_\infty\Big).
\end{aligned}\]
Now we define $Z=\Big(\prod\limits_{l=1}^{L_C}\sqrt{s_l}\Big)\Big(\prod\limits_{l=1}^{L+1}Q_L\Big)\frac{\sqrt{m_{L_C}}}{m^p}\left\|\sum\limits_{i=1}^n\epsilon_i\mathbf{x}_i\right\|_\infty\Big)$ 
and $\mathbf{X}=(\mathbf{x}_1,\cdots,\mathbf{x}_n)^\top=(x_{i j})_{n\times d}$. If we choose
\[
\lambda =\frac{\sqrt{L+2+\log d}}{\Big(\prod\limits_{l=1}^{L_C}\sqrt{s_l}\Big)\Big(\prod\limits_{l=1}^{L+1}Q_L\Big)\frac{\sqrt{m_{L_C}}}{m^p}\sqrt{\max\limits_{j\in[d]}\Big(\sum\limits_{i=1}^n\left|x_{i j}\right|^2}\Big)},
\]
we have the Rademacher complexity result by the proof of Theorem 2 in \citep{golowich2018size}:
	\[
	   {\rm Rad}_n(\mathcal{F}_{\mathbf{Q}})
	   \leq\frac{2}{n m^p}{\sqrt{\Big(L+2+\log d\Big)\max\limits_{j\in[d]}\Big(\sum\limits_{i=1}^n\left|x_{i j}\right|^2}\Big)}\Big(\sqrt{m_{L_C}}\prod\limits_{l=1}^{L_C}\sqrt{s_{l}}\Big)\Big(\prod\limits_{l=1}^{L+1}Q_l\Big).
	\]
Considering the convolutional scale condition
\[
m_{L_C}\Big(\prod_{l=1}^{L_C}s_l\Big)\leq m_0=d,
\]
we have:
\[
\begin{aligned}
    d\max\limits_{j\in[d]}\sum\limits_{i=1}^n x_{i j}^2
    &\leq d\sum_{i=1}^n \max_{j\in[d]}x_{i j}^2=d\sum_{i=1}^n\max_{j\in[d]}\mathbf{e}_j^\top\mathbf{xx^\top}\mathbf{e}_j\\
    &\leq d\sum_{i=1}^n \lambda_{\max}(\mathbf{xx}^\top)=d n{\rm tr}(\mathbf{xx}^\top)\\
    &=d n\left\|\mathbf{x}\right\|_2\leq d n.
\end{aligned}
\]
Now we can simplify the Rademacher bound:
\[
\begin{aligned}
    {\rm Rad}_n(\mathcal{F}_{\mathbf{Q}})
    &\leq\frac{2\Big(\prod\limits_{l=1}^{L+1}Q_l\Big)}{n m^p}{\sqrt{\Big(L+2+\log d\Big)d\max\limits_{j\in[d]}\Big(\sum\limits_{i=1}^n\left|x_{i j}\right|^2}\Big)}
    \\&\leq
    \frac{2\Big(\prod\limits_{l=1}^{L+1}Q_l\Big)}{n m^p}{\sqrt{\Big(L+2+\log d\Big)n d}}
    \\&=
   \frac{2\sqrt{(L+2+\log d)d}}{m^p\sqrt{n}}\Big(\prod\limits_{l=1}^{L+1}Q_l\Big).
\end{aligned}
\]

Given all of that, we obtain the Radmacher complexity estimation:
\[
	{\rm Rad}_n(\mathcal{F}_{\mathbf{Q}})\leq
	\frac{C_{L,d}}{m^p\sqrt{n}}\Big(\prod\limits_{l=1}^{L+1}Q_l\Big),
\]
where $C_{L,d}=\sqrt{2(L+1)\log 2}+1$ for deep FNNs and $C_{L,d}=2\sqrt{(L+2+\log d)d}$ for deep CNNs.

\end{proof}

\newpage

\section{Proof Details of GF}\label{section: proof GF}
The first lemma is about the random initializtaion (\ref{random initialization}) for small initial norms with high probability.
\begin{lemma}\label{lemma: random initialization}
Let $\Theta(0)$ be obtained by random initialization (\ref{random initialization}), then with probability at least $1-\delta$, we have:
\[
\left\|\Theta^{(l)}(0)\right\|_2^2\leq\kappa^2+\kappa^2\max\Big\{\frac{4}{q(l)}\log(\frac{1}{\delta}),\sqrt{\frac{8}{q(l)}\log(\frac{1}{\delta})\big)}\Big\},\  \forall l\in[L+1].
\]
\end{lemma}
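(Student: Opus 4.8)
The statement is, after unpacking the initialization \eqref{random initialization}, a standard high-probability bound on the squared norm of a Gaussian vector, so the plan is: reduce $\left\|\Theta^{(l)}(0)\right\|_2^2$ to a rescaled chi-square variable, apply an off-the-shelf chi-square tail bound, rewrite the resulting deviation in the $\max\{\cdot,\cdot\}$ form of the statement, and union-bound over the $L+1$ layers. Concretely, for each fixed $l$ I would write $\Theta^{(l)}(0)=\frac{\kappa}{\sqrt{q(l)}}\,\xi_l$ with $\xi_l\sim\mathcal N(\mathbf 0,\mathbf I_{q(l)})$, so that
\[
\left\|\Theta^{(l)}(0)\right\|_2^2=\frac{\kappa^2}{q(l)}\left\|\xi_l\right\|_2^2,\qquad \left\|\xi_l\right\|_2^2\sim\chi^2_{q(l)},\qquad \mathbb{E}\left\|\Theta^{(l)}(0)\right\|_2^2=\kappa^2,
\]
which reduces the claim to a one-sided concentration bound for $\chi^2_{q(l)}$ above its mean.

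Next I would invoke a standard chi-square deviation inequality (equivalently, Gaussian-Lipschitz concentration of $\|\xi_l\|_2$ around its mean $\le\sqrt{q(l)}$ followed by squaring): for $k=q(l)$ and any $t>0$,
\[
\mathbb{P}\!\left[\chi^2_k\ge k+2\sqrt{kt}+2t\right]\le e^{-t}.
\]
Setting $t=\log(1/\delta)$, then dividing by $q(l)$ and multiplying by $\kappa^2$, gives with probability at least $1-\delta$
\[
\left\|\Theta^{(l)}(0)\right\|_2^2\le\kappa^2+2\kappa^2\sqrt{\frac{\log(1/\delta)}{q(l)}}+\frac{2\kappa^2\log(1/\delta)}{q(l)}.
\]

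To land on the exact form in the statement I would then split on the size of $\log(1/\delta)$ relative to $q(l)$: when $\log(1/\delta)\lesssim q(l)$ the linear term is of lower order and is absorbed into the square-root term, leaving a bound of the form $\kappa^2+\kappa^2\sqrt{\tfrac{8}{q(l)}\log(1/\delta)}$; when $\log(1/\delta)\gtrsim q(l)$ the square-root term is dominated by the linear one, leaving $\kappa^2+\kappa^2\tfrac{4}{q(l)}\log(1/\delta)$; taking the maximum covers all cases. Finally, to obtain the ``$\forall l\in[L+1]$'' version as a single event, I would apply this fixed-$l$ estimate with failure probability $\delta/(L+1)$ for each $l$ and take a union bound, which merely replaces $\log(1/\delta)$ by $\log((L+1)/\delta)$ --- of the same order, and harmless for Theorems~\ref{continuous GD thm}, \ref{thm disc GD 2NN} and \ref{thm disc SGD 2NN}, where this quantity only enters the $\mathcal O(\kappa^2)$ term.

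There is no genuine difficulty here: this is a textbook concentration computation. The only place needing care is the collapse into the $\max$ --- picking the tail inequality and the split precisely enough to reproduce exactly the constants $4$ and $8$ (a cruder tail bound yields the same shape with slightly larger constants) --- together with the routine union-bound bookkeeping over the $L+1$ layers.
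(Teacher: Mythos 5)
Your route is in substance the paper's: reduce $\left\|\Theta^{(l)}(0)\right\|_2^2$ to a rescaled $\chi^2_{q(l)}$ variable, apply a one-sided upper tail bound, invert at confidence level $\delta$, and union bound over the $L+1$ layers (the paper likewise substitutes $\delta\mapsto\delta/(L+1)$ and ends up with $\log((L+1)/\delta)$ in the exponent, so your remark that this is harmless matches its own bookkeeping). The only real difference is the tail inequality used. The paper observes that the variables $X_i^2-1$ (with $X_i$ standard Gaussian) are $(2,4)$ sub-exponential and applies Bernstein, obtaining a tail of the form $\exp\big(-\tfrac{q(l)}{2}\min\{t/2,\,t^2/4\}\big)$; inverting this $\min$ at level $\delta$ is what produces \emph{exactly} the stated $\max\big\{\tfrac{4}{q(l)}\log\tfrac1\delta,\sqrt{\tfrac{8}{q(l)}\log\tfrac1\delta}\big\}$. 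You instead invoke Laurent--Massart, $\mathbb{P}[\chi^2_k\ge k+2\sqrt{kt}+2t]\le e^{-t}$, and then try to fold the deviation $2\sqrt{t/k}+2t/k$ (with $t=\log(1/\delta)$, $k=q(l)$) into that same max.

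That final collapse is the one step that does not work as claimed: $2\sqrt{t/k}+2t/k\le\max\{4t/k,\sqrt{8t/k}\}$ fails in an intermediate regime. For instance at $t=k/2$ your deviation is $(\sqrt2+1)\kappa^2\approx2.41\,\kappa^2$ while the claimed bound is $\kappa^2\max\{4t/k,\sqrt{8t/k}\}=2\kappa^2$; more generally the inequality breaks for $t/k$ roughly between $0.2$ and $1$. What your split does give is $\max\{4t/k,\,4\sqrt{t/k}\}$, i.e.\ the same statement with the constant $8$ replaced by $16$, which is perfectly adequate for every downstream use, since the lemma only contributes an $\mathcal{O}(\kappa^2)$ term to Theorems~\ref{continuous GD thm}, \ref{thm disc GD 2NN} and \ref{thm disc SGD 2NN}. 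So: correct in approach and essentially the paper's argument with a different off-the-shelf concentration bound; the concrete flaw is only the assertion that the exact constants $4$ and $8$ are recovered, which your choice of tail inequality cannot deliver, whereas the paper's sub-exponential Bernstein form yields them directly.
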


\begin{proof}[\bfseries Proof of Lemma
\ref{lemma: random initialization}]
\ \\
Recalling the notation in model (table \ref{deep CNN}), we have $\Theta^{(l)}={\rm vec}(\mathbf{A}^{(l)})$ for $l\in[L]-[L_C]$ and $\Theta^{(l)}=\mathbf{w}^{(l)}$ for $l\in [L_C]$.
	
	For fixed $l\in[L]$, we define random variable $X_1,\cdots\,X_{q(l)}$ as $(X_1,\cdots,X_{q(l)})^\top=\sqrt{\frac{q(l)}{\kappa^2}}\Theta^{(l)}(0)$.

	It is easy to verify that $\mathbb{E}X_{i}=1$ and $X_i-\mathbb{E}X_{i}$ is $S(2,4)$ sub exponential, i.e.
	\[
	\mathbb{E}\exp(\lambda (X_i-1))\leq\exp(\frac{2^2\lambda^2}{2}),\ \forall |\lambda|<\frac{1}{4}.\]
	By Bernstein Inequation and $\sum\limits_{i=1}^{q(l)} X_i^2=\frac{q(l)}{\kappa^2}\left\|\Theta^{(l)}(0)\right\|_2^2$, we have:
	\[
	\begin{aligned}
	 \mathbb{P}\Big(\left\|\Theta^{(l)}(0)\right\|_2^2-\kappa^2>\kappa^2 t\Big)
	 &=\mathbb{P}\Big(\frac{\kappa^2}{q(l)}\sum\limits_{i=1}^{q(l)} X_{i}^2-\kappa^2>\kappa^2 t\Big)\\
	 &=\mathbb{P}\Big(\frac{1}{q(l)}\sum\limits_{i=1}^{q(l)} X_{i}^2-1> t\Big)\\
	 &\leq
	 \exp\Big(-\frac{q(l)}{2}\min\Big\{\frac{t}{2},\frac{t^2}{4}\Big\}\Big).
	\end{aligned}
	\]
	So we have
	\[
	\begin{gathered}
	 \mathbb{P}\Big(\left\|\Theta^{(l)}(0)\right\|_2^2>\kappa^2+\kappa^2\frac{4}{q(l)}\log(\frac{1}{\delta})\Big)
	 \leq \delta,\ \forall\delta<\exp\Big(-\frac{q(l)}{2}\Big);\\
	 \mathbb{P}\Big(\left\|\Theta^{(l)}(0)\right\|_2^2>\kappa^2+\kappa^2\sqrt{\frac{8}{q(l)}\log(\frac{1}{\delta})}\Big)
	 \leq \delta,\ \forall\delta\geq\exp\Big(-\frac{m_{l-1}m_l}{2}\Big).
	\end{gathered}
	\]
So we obtain the bound with high probability:
	\[
	\mathbb{P}\Big(\left\|\Theta^{(l)}(0)\right\|_2^2>\kappa^2+\kappa^2\max\Big\{\frac{4}{q(l)}\log(\frac{1}{\delta}),\sqrt{\frac{8}{q(l)}\log(\frac{1}{\delta})\big)}\Big\}\Big)
	\leq \delta,\ \forall\delta\in(0,1),\ \forall l\in[L+1].
	\]
So with probability at least $1-L\delta$, we have:
\[
\left\|\Theta^{(l)}(0)\right\|_2^2\leq\kappa^2+\kappa^2\max\Big\{\frac{4}{q(l)}\log(\frac{1}{\delta}),\sqrt{\frac{8}{q(l)}\log(\frac{1}{\delta})\big)}\Big\},\  \forall l\in[L+1].
\]
Substituting $\delta$ with $\delta/(L+1)$, with probability at least $1-\delta$ we have:
\[
\left\|\Theta^{(l)}(0)\right\|_2^2\leq \kappa^2+\kappa^2\max\Big\{\frac{4}{q(l)}\log(\frac{L}{\delta}),\sqrt{\frac{8}{q(l)}\log(\frac{L}{\delta})\big)}\Big\},\  \forall l\in[L+1].
\]
\end{proof}
\begin{definition} We analyze generalization error through dynamic hypothesis space, which is defined as:
\[
\mathcal{F}(T)=\bigcup\limits_{t=1}^T\Big\{f(\cdot,\Theta(t)): \{\Theta(s)\}_{s=0}^t {\rm\ are\  trained\  by\  optimization\  algorithm}\Big\},
\]
and we also use $\mathcal{F}=\bigcup\limits_{T}\mathcal{F}{(T)}$ to denote the total hypothesis space.
\end{definition}

The crucial step in the proof of Theorem \ref{continuous GD thm} is to estimate $l_2$ norm dynamics of parameters in each layer, now we give the detailed proof.

\begin{proof}[\bfseries Proof of theorem \ref{continuous GD thm}]\ \\
First, we will estimate the norm dynamics of each layer: 
\[
	\left\|\Theta^{(l)}{(t)}\right\|_2^2\leq\left\|\Theta^{(l)}{(0)}\right\|_2^2+\int_{0}^t2\sqrt{2\mathcal{L}_n(\Theta{(t)})}\Big(C_y-\sqrt{2\mathcal{L}_n(\Theta{(t)})}\Big)\mathrm{d}t,\ \forall l\in[L+1].
\]

For any $l\in[L+1]$, it's easy to verify the following dynamics by Property \ref{multiplicative property}:
\begin{equation}\label{equ: extension discussion}
\begin{aligned}
	\frac{\mathrm{d} \left\|\Theta^{(l)}(t)\right\|^2}{\mathrm{d} t}=&
	-2\left<\Theta^{(l)}(t),\frac{\partial \mathcal{L}_n(\Theta(t))}{\partial \Theta^{(l)}}\right>
	\\=&	-\frac{2}{n}\sum_{i=1}^n\Big(f(\mathbf{x}_i;\Theta(t))-y_i\Big)\left<\Theta^{(l)}(t),\frac{\partial f(\mathbf{x}_i;\Theta(t))}{\partial \Theta^{(l)}}\right>
	\\=&-\frac{2}{n}\sum_{i=1}^n\Big(f(\mathbf{x}_i;\Theta{(t)})-y_i\Big)f(\mathbf{x}_i;\Theta{(t)})
	\\=&-4\mathcal{L}_n(\Theta{(t)})-\frac{2}{n}\sum_{i=1}^n\Big(f(\mathbf{x}_i;\Theta{(t)})-y_i\Big)y_i
	\\\leq&
	-4\mathcal{L}_n(\Theta{(t)})+\frac{2}{n}\sqrt{\sum_{i=1}^n\Big(f(\mathbf{x}_i;\Theta{(t)})-y_i\Big)^2}\sqrt{\sum_{i=1}^n y_i^2}
	\\\leq&-4\mathcal{L}_n(\Theta{(t)})+2C_y\sqrt{2\mathcal{L}_n(\Theta{(t)})}
	\\=&2\sqrt{2\mathcal{L}_n(\Theta{(t)})}\Big(C_y-\sqrt{2\mathcal{L}_n(\Theta{(t)})}\Big),\ \forall l\in[L+1].
\end{aligned}
\end{equation}
Integrating the above formula, we obtain:
\[
\left\|\Theta^{(l)}(t)\right\|^2\leq\left\|\Theta^{(l)}(0)\right\|^2+\int_{0}^t2\sqrt{2\mathcal{L}_n(\Theta{(s)})}\Big(C_y-\sqrt{2\mathcal{L}_n(\Theta{(s)})}\Big)\mathrm{d}s,\ \forall l\in[L+1].
\]
Then with Lemma \ref{Estimation of Rademacher Complexity}, we have the Rademacher complexity estimation:
\[
\begin{aligned}
    &{\rm Rad}_n(\mathcal{F}(T))
    \\\leq&
    \frac{C_{L,d}}{m^p\sqrt{n}}\prod\limits_{l=1}^{L+1}\left\|\Theta^{(l)}(T)\right\|_2
    \\\leq&
     \frac{C_{L,d}}{m^p\sqrt{n}}\prod\limits_{l=1}^{L+1}\Bigg(\left\|\Theta^{(l)}(0)\right\|_2^2 +\int_{0}^T2\sqrt{2\mathcal{L}_n(\Theta{(t)})}\Big(C_y-\sqrt{2\mathcal{L}_n(\Theta{(t)})}\Big)\mathrm{d}t\Bigg)^{\frac{1}{2}}.
\end{aligned}
\]

Combining the formulation above with Lemma \ref{lemma: random initialization}, with probability at least $1-\delta$ we have:
\[
\begin{aligned}
    {\rm Rad}_n(\mathcal{F}(T))
    \leq
     \frac{C_{L,d}}{m^p\sqrt{n}}\Bigg(&\kappa^2+\kappa^2\max\Big\{\frac{4}{q(l)}\log(\frac{L}{\delta}),\sqrt{\frac{8}{q(l)}\log(\frac{L}{\delta})\big)}\Big\} \\&+\int_{0}^T2\sqrt{2\mathcal{L}_n(\Theta{(t)})}\Big(C_y-\sqrt{2\mathcal{L}_n(\Theta{(t)})}\Big)\mathrm{d}t\Bigg)^{\frac{L+1}{2}}.
\end{aligned}
\]

So from Lemma \ref{lemma: gen and Rad}, with probability at least $1-2\delta$, we obtain:
\[
\begin{aligned}
\mathcal{E}_{\rm gen}(T)
    \lesssim\frac{C_{L,d}}{m^p\sqrt{n}}\Bigg(&\kappa^2+\kappa^2\max\Big\{\frac{4}{q(l)}\log(\frac{L}{\delta}),\sqrt{\frac{8}{q(l)}\log(\frac{L}{\delta})\big)}\Big\}
    \\&+\int_{0}^T2\sqrt{2\mathcal{L}_n(\Theta{(t)})}\Big(C_y-\sqrt{2\mathcal{L}_n(\Theta{(t)})}\Big)\mathrm{d}t\Bigg)^{\frac{L+1}{2}}
    +\sqrt{\frac{\log(1/\delta)}{n}}
        \\=
    \frac{C_{L,d}}{m^p\sqrt{n}}\Bigg(&\mathcal{O}\Big(\kappa^2\Big)+\int_{t=0}^T2\eta_t\sqrt{2\mathcal{L}_n(\Theta{(t)})}\Big(C_y-\sqrt{2\mathcal{L}_n(\Theta{(t)})}\Big)\mathrm{d}t\Bigg)^{\frac{L+1}{2}}
    \\+&\sqrt{\frac{\log(1/\delta)}{n}}.
\end{aligned}
\]
Substituting $\delta$ with $\delta/2$, we obtain this theorem.
\end{proof}

\newpage

\section{Proof Details of GD}\label{section: proof GD}
\begin{lemma}\label{lemma: log}
Let $x\geq 1$ and $L\in\mathbb{N}_{+}$, then for any $\epsilon\in(0,1)$ we have:
\[
1\leq\log x+1\leq \frac{L}{1-\epsilon}x^{\frac{1-\epsilon}{L
}}.
\]
\end{lemma}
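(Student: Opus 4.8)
The left inequality is immediate: since $x\ge 1$ we have $\log x\ge 0$, hence $\log x+1\ge 1$. So the entire content is the right inequality, and the plan is to reduce it to the elementary bound $\log s\le s-1$, valid for all $s>0$.

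First I would set $\alpha:=\frac{1-\epsilon}{L}>0$ and apply $\log s\le s-1$ with $s=x^{\alpha}$ (legitimate since $x\ge 1>0$). Writing $\log x=\tfrac1\alpha\log(x^\alpha)\le \tfrac1\alpha\bigl(x^\alpha-1\bigr)$ gives
\[
\log x\le \frac{L}{1-\epsilon}\Bigl(x^{\frac{1-\epsilon}{L}}-1\Bigr)=\frac{L}{1-\epsilon}\,x^{\frac{1-\epsilon}{L}}-\frac{L}{1-\epsilon}.
\]
Adding $1$ to both sides yields
\[
\log x+1\le \frac{L}{1-\epsilon}\,x^{\frac{1-\epsilon}{L}}+\Bigl(1-\frac{L}{1-\epsilon}\Bigr),
\]
so it remains only to check that the bracketed constant is nonpositive, i.e. that $\frac{L}{1-\epsilon}\ge 1$. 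This holds because $L\ge 1$ and $\epsilon\in(0,1)$ force $1-\epsilon<1\le L$, hence $\frac{L}{1-\epsilon}>1$. Discarding this nonpositive term gives exactly $\log x+1\le \frac{L}{1-\epsilon}x^{(1-\epsilon)/L}$, completing the argument.

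There is essentially no obstacle here: the only substantive point is choosing the exponent $\alpha=(1-\epsilon)/L$ so that the $\frac{L}{1-\epsilon}x^{(1-\epsilon)/L}$ term appears on the right, and then observing that the leftover additive constant has a favorable sign thanks to $L\ge 1$. If one wanted to avoid the $\log s\le s-1$ step entirely, an alternative is to verify the inequality at $x=1$ (where it reads $1\le \frac{L}{1-\epsilon}$, true) and compare derivatives in $x$ for $x\ge 1$, but the one-line convexity argument above is cleaner and is the route I would take.
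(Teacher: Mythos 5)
Your proof is correct and follows essentially the same route as the paper: rewrite $\log x=\frac{L}{1-\epsilon}\log x^{(1-\epsilon)/L}$, apply $\log s\le s-1$, and absorb the leftover constant using $\frac{L}{1-\epsilon}\ge 1$. No gaps.
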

\begin{proof}[Proof of Lemma \ref{lemma: log}]\ \\
For any $x\geq 1$ and $L\geq1$, we have:
\[
\begin{aligned}
    1\leq\log x+1={\frac{L}{1-\epsilon}}\log x^{\frac{1-\epsilon}{L}}+1
    \leq
    {\frac{L}{1-\epsilon}}x^ {\frac{1-\epsilon}{L}}-\frac{L}{1-\epsilon}+1\leq 
    {\frac{L}{1-\epsilon}}x^ {\frac{1-\epsilon}{L}}.
\end{aligned}\]
\end{proof}

\begin{lemma}\label{lemma: T sum}
For any $T\in\mathbb{N}$, $n\in\mathbb{N}_{+}$ and $\alpha\in(0,1)$, we have:
\[
\begin{gathered}
    \sum_{k=0}^{T}\frac{1-\alpha}{(t+1)^\alpha}\leq(T+1)^{1-\alpha},\\
    \sum_{k=0}^{T}\frac{1}{k+1}\leq\log(T+1) +1,\\
    \frac{\alpha}{(n+1)^{\alpha+1}}\leq\frac{1}{n^{\alpha}}-\frac{1}{(n+1)^{\alpha}}.
\end{gathered}
\]
\end{lemma}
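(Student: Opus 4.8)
The plan is to prove all three inequalities by comparing the sums with integrals of monotone functions, which is the standard device for estimates of this kind. For the first inequality (where the summation index should read $t$ rather than $k$, a minor typo), I would use that $x \mapsto x^{-\alpha}$ is decreasing on $(0,\infty)$, so that $(t+1)^{-\alpha} \le \int_{t}^{t+1} x^{-\alpha}\,\mathrm{d}x$ for every integer $t \ge 0$. Summing over $t = 0, \dots, T$ gives $\sum_{t=0}^{T} (t+1)^{-\alpha} \le \int_{0}^{T+1} x^{-\alpha}\,\mathrm{d}x = (T+1)^{1-\alpha}/(1-\alpha)$, and multiplying through by $1 - \alpha > 0$ yields the claim.

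For the second inequality I would apply the same idea to $x \mapsto 1/x$, but since $\int_{0}^{1} \mathrm{d}x/x$ diverges I first peel off the $k = 0$ term: $\sum_{k=0}^{T} \frac{1}{k+1} = 1 + \sum_{k=1}^{T} \frac{1}{k+1} \le 1 + \int_{1}^{T+1} \frac{\mathrm{d}x}{x} = 1 + \log(T+1)$, using $(k+1)^{-1} \le \int_{k}^{k+1} x^{-1}\,\mathrm{d}x$ for $k \ge 1$.

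For the third inequality I would invoke the fundamental theorem of calculus: since the derivative of $-x^{-\alpha}$ is $\alpha x^{-(\alpha+1)}$, we have $\frac{1}{n^\alpha} - \frac{1}{(n+1)^\alpha} = \int_{n}^{n+1} \alpha x^{-(\alpha+1)}\,\mathrm{d}x$; because $x \mapsto x^{-(\alpha+1)}$ is decreasing, this integral is bounded below by $\alpha (n+1)^{-(\alpha+1)}$, which is precisely the asserted lower bound. (Equivalently, $g(x) = x^{-\alpha}$ is convex, so $g(n) - g(n+1) \ge -g'(n+1) = \alpha(n+1)^{-(\alpha+1)}$.)

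I do not expect any genuine obstacle here: the only care required is orienting each Riemann-sum comparison correctly (lower versus upper sum) and isolating the divergent boundary term in the harmonic sum. These three elementary bounds are the building blocks for the later GD analysis, where they are used to control the step-size schedule $\eta_t = \eta / \lceil (t+1)/T_0 \rceil^{\alpha}$ and to show that term III in the decomposition of $\left\|\Theta^{(l)}(T+1)\right\|_2^2$ stays of order $\mathcal{O}(\kappa^2)$.
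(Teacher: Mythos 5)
Your proof is correct in all three parts: the Riemann-sum comparisons are oriented the right way (for the first bound the $t=0$ term is harmless since $\int_0^1 x^{-\alpha}\,\mathrm{d}x$ converges for $\alpha<1$, and for the harmonic sum you rightly peel off the $k=0$ term), and the fundamental-theorem/convexity argument for the third inequality is valid. Note that the paper states Lemma \ref{lemma: T sum} without any proof, treating it as an elementary fact, so there is no authorial argument to compare against; your integral-comparison derivation is exactly the standard route and correctly supplies the omitted justification (and you also correctly identified the $k$ versus $t$ index typo in the first display).
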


 The crucial step in the proof of Theorem \ref{thm disc GD 2NN}  and Theorem \ref{thm disc SGD 2NN} is also to estimate $l_2$ norm dynamics of parameters in each layer, but we need fine-grained analysis in this more complex case.

\begin{proof}[\bfseries 
Proof of Theorem \ref{thm disc GD 2NN}]\ \\
First, we estimate the norm dynamics of each layer: 
	\[
	\begin{aligned}		&\left\|\Theta^{(l)}{(t+1)}\right\|_2^2
		\\=&\left\|\Theta^{(l)}{(t)}\right\|_2^2+\left\|\Theta^{(l)}{(t+1)}-\Theta^{(l)}{(t)}\right\|_2^2+2\left<\Theta^{(l)}{(t)},\Theta^{(l)}{(t+1)}-\Theta^{(l)}{(t)}\right>
		\\=&\left\|\Theta^{(l)}{(t)}\right\|_2^2+\eta_t^2\left\|\frac{\partial\mathcal{L}(\Theta{(t)})}{\partial \Theta^{(l)}}\right\|_2^2-2\eta_t\left<\Theta^{(l)}{(t)},\frac{\partial\mathcal{L}(\Theta{(t)})}{\partial \Theta^{(l)}}\right>
		\\\leq&\left\|\Theta^{(l)}{(t)}\right\|_2^2+\eta_t^2(C_f+C_y)^2\sup\limits_{\mathbf{x}}\left\|\frac{ \partial f(\mathbf{x};\Theta{(t)})}{\partial \Theta^{(l)}}\right\|_2^2-\eta_t\frac{2}{n}\sum_{i=1}^n(f(\mathbf{x}_i;\Theta{(t)})-y_i)\left<\Theta^{(l)}{(t)},\frac{ \partial f(\mathbf{x};\Theta{(t)})}{\partial \Theta^{(l)}}\right>\\
		=&\left\|\Theta^{(l)}{(t)}\right\|_2^2+\eta_t^2(C_f+C_y
		)^2\sup\limits_{\mathbf{x}}\left\|\frac{ \partial f(\mathbf{x};\Theta{(t)})}{\partial \Theta^{(l)}}\right\|_2^2-\eta_t\frac{2}{n}\sum_{i=1}^n(f(\mathbf{x}_i;\Theta{(t)})-y_i)f(\mathbf{x}_i;\Theta{(t)})
		\\
		\leq&
		\left\|\Theta^{(l)}{(t)}\right\|_2^2+\frac{\eta_t^2(C_f+C_y)^2}{m^{2p}L^{L-1}}\left\|\Theta^{(l)}{(t)}\right\|_2^{2L}-\eta_t\frac{2}{n}\sum_{i=1}^n(f(\mathbf{x}_i;\Theta{(t)})-y_i)f(\mathbf{x}_i;\Theta{(t)})
		.
	\end{aligned}
	\]
	Recalling the homogeneity property \ref{multiplicative property}, we have:
	\[
	\begin{aligned}
		&-\frac{2}{n}\sum_{i=1}^n(f(\mathbf{x}_i;\Theta{(t)})-y_i)f(\mathbf{x}_i;\Theta{(t)})\\
		=&-4\mathcal{L}(\Theta{(t)})-\frac{2}{n}\sum_{i=1}^n(f(\mathbf{x}_i;\Theta{(t)})-y_i)y_i\\
		\leq&-4\mathcal{L}(\Theta{(t)}+\frac{2}{n}\sqrt{\sum_{i=1}^n(f(\mathbf{x}_i;\Theta{(t)})-y_i)^2}\sqrt{\sum_{i=1}^n y_i^2}\\
		\leq&2\sqrt{2\mathcal{L}_n(\Theta{(t)})}\Big(C_y-\sqrt{2\mathcal{L}_n(\Theta{(t)})}\Big).
	\end{aligned}
	\]
	Combining the two formulation above, we obtain:
	\begin{equation}\label{fine grained norm inequation}
	\left\|\Theta^{(l)}{(t+1)}\right\|_2^2\leq
	\left\|\Theta^{(l)}{(t)}\right\|_2^2+\frac{\eta_t^2(C_f+C_y)^2}{m^{2p}L^{L-1}}\left\|\Theta^{(l)}{(t)}\right\|_2^{2L}+2\eta_t\sqrt{2\mathcal{L}_n(\Theta{(t)})}\Big(C_y-\sqrt{2\mathcal{L}_n(\Theta{(t)})}\Big).
	\end{equation}
Then we will prove this theorem for two cases respectively.

\textbf{I. The case $\alpha\in(\frac{L+1}{L+2},1).$}

For any $\lambda\in(0,1/\sqrt{3})$, we choose $\eta$:
\begin{equation}\label{eta proof case < 1}
\begin{aligned}
    \eta\leq\min_{l\in[L+1]}\Bigg\{&\frac{\lambda m^{p}L^{\frac{L-1}{2}}}{(C_f+C_y)\left\|\Theta^{(l)}(0)\right\|_2^{L-1}},
    \frac{2(1-\alpha)\lambda^2\left\|\Theta^{(l)}(0)\right\|_2^{2}}{C_y^2 T_0},
    \\&\frac{\lambda m^p L^{\frac{L-1}{2}}\sqrt{(L+2)\alpha-(L+1)}}{(C_f+C_y)T_0^{\frac{(L+2)\alpha-L}{2}}(1+3\lambda^2)^{\frac{L}{2}}\left\|\Theta^{(l)}(0)\right\|_2^{L-1}}
    \Bigg\}.
\end{aligned}
\end{equation}

For the sake of brevity, for any $t\geq 1$, we define:
	\[
	\begin{gathered}
\mathcal{V}_l=\Big(1+2\lambda^2\Big)\left\|\Theta^{(l)}{(0)}\right\|_2^2,\\
\phi(t)=        	    \frac{\lambda^2\left\|\Theta^{(l)}{(0)}\right\|_2^2}{(t+1)^{(L+2)\alpha-(L+1)}},\\ \psi(t)=\sqrt{2\mathcal{L}_n(\Theta{(t)})}\Big(C_y-\sqrt{2\mathcal{L}_n(\Theta{(t)})}\Big).
\end{gathered}
	\]
	It's easy to verify $\psi(t)\leq\frac{1}{4}C_y^2\leq\frac{1}{4},\forall t\geq1.$
	
	Under the above marks, we prove the following estimation by induction:
    \[
	\left\|\Theta^{(l)}{(T)}\right\|_2^2\leq
	\mathcal{V}_l-\phi(T)+\sum_{t=0}^{T-1}2\eta_t\psi(t),\ \forall T\in\mathbb{N}_{+}.
	\]
	For $T=1$, with ($\ref{fine grained norm inequation}$) we have:
	\[
	\begin{aligned}
		\left\|\Theta^{(l)}{(1)}\right\|_2^2&\leq
		\left\|\Theta^{(l)}{(0)}\right\|_2^2+\frac{\eta^2(C_f+C_y)^2}{m^{2p}L^{L-1}}\left\|\Theta^{(l)}{(0)}\right\|_2^{2L}+2\eta\sqrt{2\mathcal{L}_n(\Theta{(0)})}\Big(C_y-\sqrt{2\mathcal{L}_n(\Theta{(0)})}\Big)
		\\&\overset{(\ref{eta proof case < 1})}{\leq}
		(1+\lambda^2)\left\|\Theta^{(l)}{(0)}\right\|_2^2+2\eta\sqrt{2\mathcal{L}_n(\Theta{(0)})}\Big(C_y-\sqrt{2\mathcal{L}_n(\Theta{(0)})}\Big)
		\\&=
		\mathcal{V}_l-\phi(1)+2\eta\psi(0).
	\end{aligned}
	\]
	
		Assume the inequation holds for $1,\cdots,T$, then for $T+1$ we have:
		\[
	\begin{aligned}
		\left\|\Theta^{(l)}{(T+1)}\right\|_2^2\leq&
		\phi(T+1)-\phi(T)+
		\Big(\mathcal{V}_l-\phi(T+1)
		+\sum_{t=0}^{T}2\eta_t\psi(t)\Big)\\&
		+\frac{\eta_T^2(C_f+C_y)^2}{m^{2p}L^{L-1}}\Big(\mathcal{V}_l-\phi(T)
		+\sum_{t=0}^{T-1}2\eta_t\psi(t)\Big)^L.
	\end{aligned}
	\]

where
	\[
	\begin{aligned}
		&\frac{\eta_T^2(C_f+C_y
		)^2}{m^{2p}L^{L-1}}\Big(\mathcal{V}_l-\phi(T)+\sum_{t=0}^{T-1}2\eta_t\psi(t)\Big)^L
		\\\leq&
		\frac{\eta_T^2(C_f+C_y)^2}{m^{2p}L^{L-1}}\Big(\mathcal{V}_l+\frac{C_y^2}{2}\sum_{t=0}^{T-1}\eta_t\Big)^L
		\\=&
		\eta^2\frac{(C_f+C_y)^2}{m^{2p}L^{L-1}\lceil \frac{T+1}{T_0}\rceil^{2\alpha}}\Big(\mathcal{V}_l+\frac{\eta C_y^2}{2}\sum_{t=0}^{T-1}\frac{1}{\lceil\frac{t+1}{T_0}\rceil^\alpha}\Big)^L
		\\\leq&
   	    \eta^2\frac{(C_f+C_y)^2}{m^{2p}L^{L-1}\lceil\frac{T+1}{T_0}\rceil^{2\alpha}}\Big(\mathcal{V}_l+\frac{\eta C_y^2}{2}T_0\sum_{k=1}^{\lceil\frac{T}{T_0}\rceil}\frac{1}{k^\alpha}\Big)^L
   	    \\\overset{\rm Lemma\ \ref{lemma: T sum}}{\leq}&
   	    \eta^2\frac{(C_f+C_y)^2}{m^{2p}L^{L-1}\lceil\frac{T+1}{T_0}\rceil^{2\alpha}}\Big(\mathcal{V}_l+\frac{\eta C_y^2 T_0}{2(1-\alpha)}\lceil\frac{T}{T_0}\rceil^{1-\alpha}\Big)^L
	\\\leq&
		\eta^2\frac{(C_f+C_y)^2\lceil\frac{T}{T_0}\rceil^{L(1-\alpha)}}{m^{2p}L^{L-1}\lceil\frac{T+1}{T_0}\rceil^{2\alpha}}\Big(\mathcal{V}_l+\frac{\eta C_y^2 T_0}{2(1-\alpha)}\Big)^L
	\\\overset{\rm{Lemma}\ \ref{lemma: log}}{\leq}&
		\eta^2\frac{(C_f+C_y)^2 }{m^{2p}L^{L-1}\lceil\frac{T+1}{T_0}\rceil^{(L+2)\alpha-L}}\Big(\mathcal{V}_l
		+\frac{\eta C_y^2 T_0}{2(1-\alpha)}\Big)^L
	\\\overset{(\ref{eta proof case < 1})}{\leq}&
		\eta^2\frac{(C_f+C_y)^2 T_0^{(L+2)\alpha-L} }{m^{2p}L^{L-1}{(T+1)}^{(L+2)\alpha-L}}\Big((1+2\lambda^2)\left\|\Theta^{(l)}(0)\right\|_2^{2}+\lambda^2\left\|\Theta^{(l)}(0)\right\|_2^{2}\Big)^L
		\\
		\overset{\rm Lemma\ \ref{lemma: T sum}}{\leq}&
			\eta^2\frac{(C_f+C_y)^2 T_0^{(L+2)\alpha-L} 
			(1+3\lambda^2)^L\left\|\Theta^{(l)}(0)\right\|_2^{2L}
			}{m^{2p}L^{L-1}\Big((L+2)\alpha-(L+1)\Big)}\Big(\frac{1}{T^{(L+2)\alpha-(L+1)}}-\frac{1}{(T+1)^{(L+2)\alpha-(L+1)}}\Big)
		\\\overset{(\ref{eta proof case < 1})}{\leq}&\lambda^2\left\|\Theta^{(l)}(0)\right\|_2^{2}\Big(\frac{1}{T^{(L+2)\alpha-(L+1)}}-\frac{1}{(T+1)^{(L+2)\alpha-(L+1)}}\Big)
		\\=&\phi(T)-\phi(T+1).
	\end{aligned}
	\]

	So we have
	\[
	\left\|\Theta^{(l)}{(T+1)}\right\|_2^2\leq
	\mathcal{V}_l-\phi(T+1)+\sum_{t=0}^{T}2\eta_t\psi(t).
	\]
By induction, we obtain
	\begin{equation}\label{lemma proof case I}
	\left\|\Theta^{(l)}{(T)}\right\|_2^2\leq
	\mathcal{V}_l+\sum_{t=0}^{T-1}2\eta_t\sqrt{2\mathcal{L}_n(\Theta{(t)})}\Big(C_y-\sqrt{2\mathcal{L}_n(\Theta{(t)})}\Big).
	\end{equation}
\\\hspace*{\fill}\\
\textbf{II. The case $\alpha=1.$}\ \\
For any $\lambda\in(0,1/\sqrt{3})$ and $\epsilon\in(0,1)$, we choose $\eta$:
\begin{equation}\label{eta proof case = 1}
\begin{aligned}
    \eta\leq\min_{l\in[L+1]}\Bigg\{&\frac{\lambda m^{p}L^{\frac{L-1}{2}}}{(C_f+C_y)\left\|\Theta^{(l)}(0)\right\|_2^{L-1}},
    \frac{2(1-\epsilon)\lambda^2\left\|\Theta^{(l)}(0)\right\|_2^{2}}{C_y^2 T_0},
    \\&\frac{\lambda m^p L^{\frac{L-1}{2}}\sqrt{\epsilon}}{(C_f+C_y)T_0^{\frac{1+\epsilon}{2}}(1+3\lambda^2)^{\frac{L}{2}}\left\|\Theta^{(l)}(0)\right\|_2^{L-1}}
    \Bigg\}.
\end{aligned}
\end{equation}

For the sake of brevity, for any $t\geq 0$, we define:	
\begin{gather*}
\mathcal{V}_l=\Big(1+2\lambda^2\Big)\left\|\Theta^{(l)}{(0)}\right\|_2^2,\\
\phi(t)=        	    \frac{\lambda^2\left\|\Theta^{(l)}{(0)}\right\|_2^2}{(t+1)^{\frac{1}{L+1}}},\\ \psi(t)=\sqrt{2\mathcal{L}_n(\Theta{(t)})}\Big(C_y-\sqrt{2\mathcal{L}_n(\Theta{(t)})}\Big).
\end{gather*}

It's easy to verify $\psi(t)\leq\frac{1}{4}C_y^2\leq\frac{1}{4},\forall t\geq1.$

	With the marks above, we prove the following estimation by induction:
    \[
	\left\|\Theta^{(l)}{(T)}\right\|_2^2\leq
	\mathcal{V}_l-\phi(T)+\sum_{t=0}^{T-1}2\eta_t\psi(t),\ \forall T\in\mathbb{N}_{+}.
	\]
	For $T=1$, with ($\ref{fine grained norm inequation}$) we have:
	\begin{align*}
		\left\|\Theta^{(l)}{(1)}\right\|_2^2&\leq
		\left\|\Theta^{(l)}{(0)}\right\|_2^2+\frac{\eta^2(C_f+C_y)^2}{m^{2p}L^{L-1}}\left\|\Theta^{(l)}{(0)}\right\|_2^{2L}+2\eta\sqrt{2\mathcal{L}_n(\Theta{(0)})}\Big(C_y-\sqrt{2\mathcal{L}_n(\Theta{(0)})}\Big)
		\\&\overset{(\ref{eta proof case = 1})}{\leq}
		(1+\lambda^2)\left\|\Theta^{(l)}{(0)}\right\|_2^2+2\eta\sqrt{2\mathcal{L}_n(\Theta{(0)})}\Big(C_y-\sqrt{2\mathcal{L}_n(\Theta{(0)})}\Big)
		\\&=
		\mathcal{V}_l-\phi(1)+2\eta\psi(0).
	\end{align*}
	
	Assume the inequation holds for $1,\cdots,T$, then for $T+1$ we have:
	\begin{align*}
		\left\|\Theta^{(l)}{(T+1)}\right\|_2^2\leq&
		\phi(T+1)-\phi(T)+
		\Big(\mathcal{V}_l-\phi(T+1)
		+\sum_{t=0}^{T}2\eta_t\psi(t)\Big)\\&
		+\frac{\eta_T^2(C_f+C_y)^2}{m^{2p}L^{L-1}}\Big(\mathcal{V}_l-\phi(T)
		+\sum_{t=0}^{T-1}2\eta_t\psi(t)\Big)^L.
	\end{align*}

where
	\begin{align*}
		&\frac{\eta_T^2(C_f+C_y
		)^2}{m^{2p}L^{L-1}}\Big(\mathcal{V}_l-\phi(T)+\sum_{t=0}^{T-1}2\eta_t\psi(t)\Big)^L
		\\\leq&
		\frac{\eta_T^2(C_f+C_y)^2}{m^{2p}L^{L-1}}\Big(\mathcal{V}_l+\frac{C_y^2}{2}\sum_{t=0}^{T-1}\eta_t\Big)^L
		\\=&
		\eta^2\frac{(C_f+C_y)^2}{m^{2p}L^{L-1}\lceil \frac{T+1}{T_0}\rceil^{2}}\Big(\mathcal{V}_l+\frac{\eta C_y^2}{2}\sum_{t=0}^{T-1}\frac{1}{\lceil\frac{t+1}{T_0}\rceil}\Big)^L
		\\\leq&
   	    \eta^2\frac{(C_f+C_y)^2}{m^{2p}L^{L-1}\lceil\frac{T+1}{T_0}\rceil^{2}}\Big(\mathcal{V}_l+\frac{\eta C_y^2}{2}T_0\sum_{k=1}^{\lceil\frac{T}{T_0}\rceil}\frac{1}{k}\Big)^L
   	    \\
       	    \overset{\rm{Lemma\ }\ref{lemma: T sum}}{\leq}&
   	    \eta^2\frac{(C_f+C_y)^2}{m^{2p}L^{L-1}\lceil\frac{T+1}{T_0}\rceil^{2}}\Big(\mathcal{V}_l+\frac{\eta C_y^2}{2}T_0\big(\log\lceil\frac{T}{T_0}\rceil +1\big)\Big)^L
	\\\overset{\rm{Lemma}\ \ref{lemma: log}}{\leq}&
		\eta^2\frac{(C_f+C_y)^2 }{m^{2p}L^{L-1}\lceil\frac{T+1}{T_0}\rceil^{2}}\Big(\mathcal{V}_l
		+\frac{\eta C_y^2 T_0 L}{2(1-\epsilon)} \lceil\frac{T}{T_0}\rceil^{\frac{1-\epsilon}{L}}\Big)^L
		\\\leq&
			\eta^2\frac{(C_f+C_y)^2 }{m^{2p}L^{L-1}\lceil\frac{T+1}{T_0}\rceil^{1+\epsilon}}\Big(\mathcal{V}_l
		+\frac{\eta C_y^2 T_0 L}{2(1-\epsilon)}\Big)^L	
	\\\overset{(\ref{eta proof case = 1})}{\leq}&
		\eta^2\frac{(C_f+C_y)^2 T_0^{1+\epsilon} }{m^{2p}L^{L-1}{(T+1)}^{1+\epsilon}}\Big((1+2\lambda^2)\left\|\Theta^{(l)}(0)\right\|_2^{2}+\lambda^2\left\|\Theta^{(l)}(0)\right\|_2^{2}\Big)^L
		\\\overset{\rm Lemma\ \ref{lemma: T sum}}{\leq}&
			\eta^2\frac{(C_f+C_y)^2 T_0^{1+\epsilon} 
			(1+3\lambda^2)^L\left\|\Theta^{(l)}(0)\right\|_2^{2L}
			}{m^{2p}L^{L-1}\epsilon}\Big(\frac{1}{T^\epsilon}-\frac{1}{(T+1)^\epsilon}\Big)
		\\\overset{{\rm set\ }\epsilon=\frac{1}{L+1}}{\leq}&
		\eta^2\frac{(C_f+C_y)^2 T_0^{1+\epsilon} 
		(1+3\lambda^2)^L\left\|\Theta^{(l)}(0)\right\|_2^{2L}
		}{m^{2p}L^{L-1}\epsilon}\Big(\frac{1}{T^\epsilon}-\frac{1}{(T+1)^\epsilon}\Big)
		\\\overset{(\ref{eta proof case = 1})}{\leq}&\lambda^2\left\|\Theta^{(l)}(0)\right\|_2^{2}\Big(\frac{1}{T^{\epsilon}}-\frac{1}{(T+1)^{\epsilon}}\Big)
		\\=&\phi(T)-\phi(T+1).
	\end{align*}
So we have
	\[
	\left\|\Theta^{(l)}{(T+1)}\right\|_2^2\leq
	\mathcal{V}_l-\phi(T+1)+\sum_{t=0}^{T}2\eta_t\psi(t).
	\]
By induction, we obtain
	\begin{equation}\label{lemma proof case II}
	\left\|\Theta^{(l)}{(T)}\right\|_2^2\leq
	\mathcal{V}_l+\sum_{t=0}^{T-1}2\eta_t\sqrt{2\mathcal{L}_n(\Theta{(T)})}\Big(C_y-\sqrt{2\mathcal{L}_n(\Theta{(T)})}\Big).
	\end{equation}
\\\hspace*{\fill}\\
Combing (\ref{lemma proof case I}) and (\ref{lemma proof case II}), we have:
\[
	\left\|\Theta^{(l)}{(T)}\right\|_2^2\leq
	\mathcal{V}_l+\sum_{t=0}^{T-1}2\eta_t\sqrt{2\mathcal{L}_n(\Theta{(t)})}\Big(C_y-\sqrt{2\mathcal{L}_n(\Theta{(t)})}\Big),\ \forall l\in[L+1].
\]
Then with Lemma \ref{Estimation of Rademacher Complexity}, we have the Rademacher complexity estimation:
\[
\begin{aligned}
    &{\rm Rad}_n(\mathcal{F}(T))
    \\\leq&
    \frac{C_{L,d}}{m^p\sqrt{n}}\prod\limits_{l=1}^{L+1}\left\|\Theta^{(l)}(T)\right\|_2
    \\\leq&
     \frac{C_{L,d}}{m^p\sqrt{n}}\prod_{l=1}^{L+1}\Bigg((1+3\lambda^2)\left\|\Theta^{(l)}(0)\right\|_2^2+\sum_{t=0}^{T-1}2\eta_t\sqrt{2\mathcal{L}_n(\Theta{(t)})}\Big(C_y-\sqrt{2\mathcal{L}_n(\Theta{(t)})}\Big)\Bigg)^{\frac{1}{2}}.
\end{aligned}
\]

Recalling Lemma \ref{lemma: random initialization}, with probability at least $1-\delta$ we have:
\[
\left\|\Theta^{(l)}(0)\right\|_2^2\leq\kappa^2+\kappa^2\max\Big\{\frac{4}{q(l)}\log(\frac{1}{\delta}),\sqrt{\frac{8}{q(l)}\log(\frac{1}{\delta})\big)}\Big\}=\mathcal{O}\Big(\kappa^2\Big),\  \forall l\in[L+1].
\]

Integrated the proofs above, we have our result:

From (\ref{eta proof case < 1}) and (\ref{eta proof case = 1}), we can choose the initial step size $\eta$ s.t.
\[
\begin{aligned}
    \eta=&\mathcal{O}\Bigg(\frac{ m^{p}L^{\frac{L-1}{2}}}{(C_f+C_y)\kappa^{L-1}},
    \frac{\kappa^2(1-\epsilon)}{C_y^2 T_0},\frac{ m^p L^{\frac{L-1}{2}}\sqrt{\epsilon}}{(C_f+C_y)T_0^{\frac{1+\epsilon}{2}}\kappa^{L-1}}
    \Bigg)
    \\=&
    \mathcal{O}\Bigg(\frac{m^p L^{\frac{L-1}{2}}\sqrt{\epsilon}}{(C_f+C_y)T_0^{\frac{1+\epsilon}{2}}\kappa^{L-1}},
    \frac{\kappa^2(1-\epsilon)}{C_y^2 T_0}
    \Bigg),
\end{aligned}
\]
where $\epsilon\in(0,1)$. 

Then with probability at least $1-\delta$ we have:
\[
    {\rm Rad}_n(\mathcal{F}(T))
    \leq\frac{C_{L,d}}{m^p\sqrt{n}}\Bigg(\mathcal{O}\Big(\kappa^2\Big)+\sum_{t=0}^{T-1}2\eta_t\sqrt{2\mathcal{L}_n(\Theta{(t)})}\Big(C_y-\sqrt{2\mathcal{L}_n(\Theta{(t)})}\Big)\Bigg)^{\frac{L+1}{2}},
\]
so with probability at least $1-2\delta$ we have:
\[
\begin{aligned}
    \mathcal{E}_{\rm gen}(\mathcal{F}(T))
    \lesssim
    \frac{C_{L,d}}{m^p\sqrt{n}}\Bigg(\mathcal{O}\Big(\kappa^2\Big)+\sum_{t=0}^{T-1}2\eta_t\sqrt{2\mathcal{L}_n(\Theta{(t)})}\Big(C_y-\sqrt{2\mathcal{L}_n(\Theta{(t)})}\Big)\Bigg)^{\frac{L+1}{2}}+\sqrt{\frac{\log(1/\delta)}{n}}.
\end{aligned}
\]
Substituting $\delta$ with $\delta/2$, we obtain this theorem.

\end{proof}

\section{Proof Details of SGD}\label{section: proof SGD}
\begin{proof}[\bfseries Proof of Theorem \ref{thm disc SGD 2NN}]\ \\
	We define  $\mathbb{E}_t:=\mathbb{E}_{\gamma^0,\cdots,\gamma^t}=\mathbb{E}_{(\gamma_1^0,\cdots,\gamma_B^0),\cdots,(\gamma_1^t,\cdots,\gamma_B^t)}$ for $t\geq 0$ and $\mathbb{E}_{-1}:=id$. Then we have:
	\[
	\begin{aligned}
		&\mathbb{E}_t\left\|\Theta^{(l)}{(t+1)}\right\|_2^2
		\\=&
		\mathbb{E}_{t-1}\left\|\Theta^{(l)}{(t)}\right\|_2^2+\mathbb{E}_t\left\|\Theta^{(l)}{(t+1)}-\Theta^{(l)}{(t)}\right\|_2^2+2\mathbb{E}_t\left<\Theta^{(l)}{(t)},\Theta^{(l)}{(t+1)}-\Theta^{(l)}{(t)}\right>
		\\=&
		\mathbb{E}_{t-1}\left\|\Theta^{(l)}{(t)}\right\|_2^2+\eta_t^2\mathbb{E}_t\left\|\frac{1}{B}\sum_{i=1}^B\Big(f(\mathbf{x}_{\gamma_i^t};\Theta{(t)})-y_{\gamma_i^t}\Big)\frac{\partial f(\mathbf{x}_{\gamma_i^t};\Theta{(t)})}{\partial \Theta^{(l)}}\right\|_2^2
		\\&\ \ \ \ \ \ \ \ \ \ \ \  -2\eta_t\mathbb{E}_{t}\left<\Theta^{(l)}{(t)},\frac{1}{B}\sum_{i=1}^B\Big(f(\mathbf{x}_{\gamma_i^t};\Theta{(t)})-y_{\gamma_i^t}\Big)\frac{\partial f(\mathbf{x}_{\gamma_i^t};\Theta{(t)})}{\partial \Theta^{(l)}}\right>
		\\\leq&
		\mathbb{E}_{t-1}\left\|\Theta^{(l)}{(t)}\right\|_2^2+\eta_t^2(C_f+C_y)^2\mathbb{E}_{t-1}\sup\limits_{\mathbf{x}}\left\|\frac{\partial f(\mathbf{x};\Theta{(t)})}{\partial \Theta^{(l)}}\right\|_2^2
		\\&\ \ \ \ \ \ \ \ \ \ \ \ \ \ \ -2\eta_t\mathbb{E}_t\Big[\frac{1}{B}\sum_{i=1}^B\Big(f(\mathbf{x}_{\gamma_i^t};\Theta{(t)})-y_{\gamma_i^t}\Big)f(\mathbf{x}_{\gamma_i^t};\Theta{(t)})\Big]
		\\
		\leq&\mathbb{E}_{t-1}
		\left\|\Theta^{(l)}{(t)}\right\|_2^2+\frac{\eta_t^2(C_f+C_y)^2}{m^{2p}L^{L-1}}\mathbb{E}_{t-1}\left\|\Theta^{(l)}{(t)}\right\|_2^{2L}
		\\&\ \ \ \ \ \ \ \ \ \ \ \ \ \ \ -2\eta_t\mathbb{E}_t\Big[\frac{1}{B}\sum_{i=1}^B\Big(f(\mathbf{x}_{\gamma_i^t};\Theta{(t)})-y_{\gamma_i^t}\Big)f(\mathbf{x}_{\gamma_i^t};\Theta{(t)})\Big]
		.
	\end{aligned}
	\]
	Recalling the homogeneity property \ref{multiplicative property}, we have:
	\[
	\begin{aligned}
		&-2\eta_t\mathbb{E}_t\Big[\frac{1}{B}\sum_{i=1}^B\Big(f(\mathbf{x}_{\gamma_i^t};\Theta{(t)})-y_{\gamma_i^t}\Big)f(\mathbf{x}_{\gamma_i^t};\Theta{(t)})\Big]
		\\
		=&-2\eta_t\mathbb{E}_{t-1}\Bigg[\mathbb{E}_{\gamma^t}\Big[\frac{1}{B}\sum_{i=1}^B\Big(f(\mathbf{x}_{\gamma_i^t};\Theta{(t)})-y_{\gamma_i^t}\Big)f(\mathbf{x}_{\gamma_i^t};\Theta{(t)})\Big|\gamma^0,\cdots,\gamma^{t-1}\Big]\Bigg]
		\\
		=&\eta_t\mathbb{E}_{t-1}\Bigg[\mathbb{E}_{\gamma^t}\Big[\frac{1}{B}\sum_{i=1}^B\Big(-4\ell(\mathbf{x}_{\gamma_i^t},y_{\gamma_i^t};\Theta(t))+2(f(\mathbf{x}_{\gamma_i^t};\Theta{(t)})-y_{\gamma_i^t})y_{\gamma_i^t}\Big)\Big|\gamma^0,\cdots,\gamma^{t-1}\Big]\Bigg]
		\\
		=&\eta_t\mathbb{E}_{t-1}\Big[-4\mathcal{L}_n(\Theta(t))+\frac{2}{n}\sum_{i=1}^n\Big(f(\mathbf{x}_i;\Theta{(t)})-y_i\Big)y_i\Big]
		\\
		\leq
		&\eta_t\mathbb{E}_{t-1}\Big[-4\mathcal{L}_n(\Theta(t))+2C_y\sqrt{2\mathcal{L}_n(\Theta(t))}\Big]
		\\
		=&
		2\eta_t\mathbb{E}_{t-1}\Big[\sqrt{2\mathcal{L}_n(\Theta{(t)})}\Big(C_y-\sqrt{2\mathcal{L}_n(\Theta{(t)})}\Big)\Big].
	\end{aligned}
	\]
	Combining the two formulations above, we obtain:
	\[
	\begin{aligned}
	\mathbb{E}_{t}\Big[\left\|\Theta^{(l)}{(t+1)}\right\|_2^2\Big]
	\leq&
	\mathbb{E}_{t-1}\Big[\left\|\Theta^{(l)}{(t)}\right\|_2^2\Big]+\frac{\eta_t^2(C_f+C_y)^2}{m^{2p}L^{L-1}}\mathbb{E}_{t-1}\Big[\left\|\Theta^{(l)}{(t)}\right\|_2^{2L}\Big]
	\\&+2\eta_t\mathbb{E}_{t-1}\Big[\sqrt{2\mathcal{L}_n(\Theta{(t)})}\Big(C_y-\sqrt{2\mathcal{L}_n(\Theta{(t)})}\Big)\Big].
	\end{aligned}
	\]
The remaining proof method is close to the proof of Theorem \ref{thm disc GD 2NN}, we only need to replace some constants with their expectations. We give the framework of the proof below.
\\\hspace*{\fill}\\
\textbf{I. The case $\alpha\in(\frac{L+1}{L+2},1).$}

For any $\lambda\in(0,1/\sqrt{3})$, we choose $\eta$:
\[
\begin{aligned}
    \eta\leq\min_{l\in[L+1]}\Bigg\{&\frac{\lambda m^{p}L^{\frac{L-1}{2}}}{(C_f+C_y)\left\|\Theta^{(l)}(0)\right\|_2^{L-1}},
    \frac{2(1-\alpha)\lambda^2\left\|\Theta^{(l)}(0)\right\|_2^{2}}{C_y^2 T_0},
    \\&\frac{\lambda m^p L^{\frac{L-1}{2}}\sqrt{(L+2)\alpha-(L+1)}}{(C_f+C_y)T_0^{\frac{(L+2)\alpha-L}{2}}(1+3\lambda^2)^{\frac{L}{2}}\left\|\Theta^{(l)}(0)\right\|_2^{L-1}}
    \Bigg\}.
\end{aligned}
\]

For the sake of brevity, for any $t\geq 1$, we define:
	\[
	\begin{gathered}
\mathcal{V}_l=\Big(1+2\lambda^2\Big)\left\|\Theta^{(l)}{(0)}\right\|_2^2,\\
\phi(t)=        	    \frac{\lambda^2\left\|\Theta^{(l)}{(0)}\right\|_2^2}{(t+1)^{(L+2)\alpha-(L+1)}},\\ \psi(t)=\mathbb{E}_{t-1}\Big[\sqrt{2\mathcal{L}_n(\Theta{(t)})}\Big(C_y-\sqrt{2\mathcal{L}_n(\Theta{(t)})}\Big)\Big].
\end{gathered}
\]

	Then we have the following estimation by induction, same as the proof of Theorem \ref{thm disc GD 2NN}.
	\[
	\mathbb{E}_{T-1}\Big[\left\|\Theta^{(l)}{(T)}\right\|_2^2\Big]\leq
	\mathcal{V}_l-\phi(T)+\sum_{t=0}^{T-1}2\eta_t\psi(t),\ \forall T\in\mathbb{N}_{+}.
	\]
	\\\hspace*{\fill}\\
\textbf{II. The case $\alpha=1.$}\ 

For any $\lambda\in(0,1/\sqrt{3})$ and $\epsilon\in(0,1)$, we choose $\eta$:
\[
\begin{aligned}
    \eta\leq\min_{l\in[L+1]}\Bigg\{&\frac{\lambda m^{p}L^{\frac{L-1}{2}}}{(C_f+C_y)\left\|\Theta^{(l)}(0)\right\|_2^{L-1}},
    \frac{2(1-\epsilon)\lambda^2\left\|\Theta^{(l)}(0)\right\|_2^{2}}{C_y^2 T_0},
    \\&\frac{\lambda m^p L^{\frac{L-1}{2}}\sqrt{\epsilon}}{(C_f+C_y)T_0^{\frac{1+\epsilon}{2}}(1+3\lambda^2)^{\frac{L}{2}}\left\|\Theta^{(l)}(0)\right\|_2^{L-1}}
    \Bigg\}.
\end{aligned}
\]

For the sake of brevity, for any $t\geq 0$, we define:
	\[
	\begin{gathered}
\mathcal{V}_l=\Big(1+2\lambda^2\Big)\left\|\Theta^{(l)}{(0)}\right\|_2^2,\\
\phi(t)=        	 \frac{\lambda^2\left\|\Theta^{(l)}{(0)}\right\|_2^2}{(t+1)^{\frac{1}{L+1}}},\\ \psi(t)=
\mathbb{E}_{t-1}\Big[\sqrt{2\mathcal{L}_n(\Theta{(t)})}\Big(C_y-\sqrt{2\mathcal{L}_n(\Theta{(t)})}\Big)\Big].
\end{gathered}
	\]
	Then we have the following estimation by induction, same as the proof of Theorem \ref{thm disc GD 2NN}.
	\[
	\mathbb{E}_{T-1}\Big[\left\|\Theta^{(l)}{(T)}\right\|_2^2\Big]\leq
	\mathcal{V}_l-\phi(T)+\sum_{t=0}^{T-1}2\eta_t\psi(t),\ \forall T\in\mathbb{N}_{+}.
	\]
Combining the two cases above, we obtain the result:
	\[
 	\mathbb{E}_{T-1}\Big[\left\|\Theta^{(l)}(T)\right\|_2^2\Big]\leq
	\mathcal{V}_l+\sum_{t=0}^{T-1}2\eta_t\psi(t),\ \forall T\in\mathbb{N}_{+}.
	\]
From Markov Inequation, we have
    \[
    \begin{aligned}
        \mathbb{P}\Big(\frac{1}{L+1}\sum\limits_{l=1}^{L+1}
        \left\|\Theta^{(l)}(T)\right\|_2^2\geq\frac{1+\rho}{L+1}\sum\limits_{l=1}^{L+1}\mathbb{E}\left\|\Theta^{(l)}(T)\right\|_2^2\Big)\leq
        \frac{1}{1+\rho}.
    \end{aligned}
    \]
    So with probability at least $\frac{\rho}{1+\rho}$ we have:
\[
\left\|\Theta^{(l)}(0)\right\|_2^2\leq (1+\rho)\mathbb{E}\Big[\left\|\Theta^{(l)}(0)\right\|_2^2\Big],\  \forall l\in[L+1].
\]
So we can choose the initial step size $\eta$ s.t.
\[
\begin{aligned}
    \eta=&\mathcal{O}\Bigg(\frac{ m^{p}L^{\frac{L-1}{2}}}{(C_f+C_y)\kappa^{L-1}},
    \frac{\kappa^2(1-\epsilon)}{C_y^2 T_0},\frac{ m^p L^{\frac{L-1}{2}}\sqrt{\epsilon}}{(C_f+C_y)T_0^{\frac{1+\epsilon}{2}}\kappa^{L-1}}
    \Bigg)
    \\=&
    \mathcal{O}\Bigg(\frac{m^p L^{\frac{L-1}{2}}\sqrt{\epsilon}}{(C_f+C_y)T_0^{\frac{1+\epsilon}{2}}\kappa^{L-1}},
    \frac{\kappa^2(1-\epsilon)}{C_y^2 T_0}
    \Bigg),
\end{aligned}
\]
where $\epsilon\in(0,1)$. 

Then with Lemma \ref{Estimation of Rademacher Complexity}, with probability $\frac{\rho}{1+\rho}-\delta$ we have the Rademacher complexity estimation:
\[
\begin{aligned}
    &{\rm Rad}_n(\mathcal{F}(T))
    \\\leq&
    \frac{C_{L,d}}{m^p\sqrt{n}}\prod\limits_{l=1}^{L+1}\left\|\Theta^{(l)}(T)\right\|_2
    \\\leq&
     \frac{C_{L,d}}{m^p\sqrt{n}}\prod_{l=1}^{L+1}\Bigg(\mathcal{O}\Big((1+\rho)\kappa^2\Big) +\sum_{t=0}^{T-1}2(1+\rho)\eta_t\mathbb{E}\Big[\sqrt{2\mathcal{L}_n(\Theta{(t)})}\Big(C_y-\sqrt{2\mathcal{L}_n(\Theta{(t)})}\Big)\Big]\Bigg)^{\frac{1}{2}}.
\end{aligned}
\]

So from Lemma \ref{lemma: gen and Rad}, with probability at least $\frac{\rho}{1+\rho}-2\delta$, we obtain:
\[
\begin{aligned}
    \mathcal{E}_{\rm gen}(\mathcal{F}(T))
    \lesssim
    \frac{C_{L,d}}{m^p\sqrt{n}}\Bigg(&\mathcal{O}\Big((1+\rho)\kappa^2\Big)+\sum_{t=0}^{T-1}2(1+\rho)\eta_t\mathbb{E}\Big[\sqrt{2\mathcal{L}_n(\Theta{(t)})}\Big(C_y-\sqrt{2\mathcal{L}_n(\Theta{(t)})}\Big)\Big]\Bigg)^{\frac{L+1}{2}}
    \\+&\sqrt{\frac{\log(1/\delta)}{n}}.
\end{aligned}
\]
Substituting $\delta$ with $\delta/2$, we obtain this theorem.
\end{proof}

\newpage

\section{Extension to Other Power-type Loss}\label{section: proof loss extension}

Our results apply directly to the following power-type loss:
\[\ell(f,g)=\frac{|f-g|^{\alpha}}{\alpha},\ (\alpha\geq2 \text{ and } \alpha\in\mathbb{N}).\]

For Theorem \ref{continuous GD thm} (GF),
we only need to replace 
${\rm\sf{CL}}(T)=\int_{0}^T2\sqrt{2\mathcal{L}_n(\Theta{(t)})}\Big(C_y-\sqrt{2\mathcal{L}_n(\Theta{(t)})}\Big)\mathrm{d}t$ with ${\rm\sf CL}(T):=\int_{0}^T 2\Big(\alpha\mathcal{L}_n(\Theta(t))\Big)^{\frac{\alpha-1}{\alpha}}\Big(C_y-(\alpha\mathcal{L}_n(\Theta(t)))^{\frac{1}{\alpha}}\Big)\mathrm{d}t$.
Because we only need to substitute the estimate \eqref{equ: extension discussion} with
\begin{align*}
	\frac{\mathrm{d} \left\|\Theta^{(l)}(t)\right\|^2}{\mathrm{d} t}=&
	-2\left<\Theta^{(l)}(t),\frac{\partial \mathcal{L}_n(\Theta(t))}{\partial \Theta^{(l)}}\right>
	\\=&
	-\frac{2}{n}\sum_{i=1}^n\left|f(\mathbf{x}_i;\Theta(t))-y_i\right|^{\alpha-1}\text{sgn}(f(\mathbf{x}_i;\Theta(t))-y_i)\left<\Theta^{(l)}(t),\frac{\partial f(\mathbf{x}_i;\Theta(t))}{\partial \Theta^{(l)}}\right>
	\\=&
	-\frac{2}{n}\sum_{i=1}^n\left|f(\mathbf{x}_i;\Theta(t))-y_i\right|^{\alpha-1}\text{sgn}(f(\mathbf{x}_i;\Theta(t))-y_i)f(\mathbf{x}_i;\Theta{(t)})
	\\=&
	-2\alpha\mathcal{L}_n(\Theta{(t)})-\frac{2}{n}\sum_{i=1}^n\left|f(\mathbf{x}_i;\Theta(t))-y_i\right|^{\alpha-1}\text{sgn}(f(\mathbf{x}_i;\Theta(t))-y_i)y_i
	\\
	\overset{\text{Holder inequality}}{\leq}&
	-2\alpha\mathcal{L}_n(\Theta{(t)})+\frac{2}{n}\Big(\sum_{i=1}^n\left|f(\mathbf{x}_i;\Theta{(t)})-y_i\right|^\alpha\Big)^{\frac{\alpha-1}{\alpha}}\Big(\sum_{i=1}^n y_i^\alpha\Big)^{\frac{1}{\alpha}}
	\\\leq&-2\alpha\mathcal{L}_n(\Theta{(t)})+2C_y\Big(\alpha\mathcal{L}_n(\Theta{(t)})\Big)^{\frac{\alpha-1}{\alpha}}
	\\=&
	2\Big(\alpha\mathcal{L}_n(\Theta(t))\Big)^{\frac{\alpha-1}{\alpha}}\Big(C_y-(\alpha\mathcal{L}_n(\Theta(t)))^{\frac{1}{\alpha}}\Big),\ \forall l\in[L+1],
\end{align*}
then we can derive similar bounds.

the Cauchy inequality (in the estimate of the second part) with Holder inequality $(p=\frac{\alpha}{\alpha-1},\ q=\alpha)$. 

In the same way, for Theorem \ref{thm disc GD 2NN} (GD) and \ref{thm disc SGD 2NN} (SGD), we only need to replace ${\rm\sf{CL}}(T)$ \eqref{equ: disc CT} with ${\rm\sf CL}(T):=\sum\limits_{t=0}^T 2\eta_t\Big(\alpha\mathcal{L}_n(\Theta(t))\Big)^{\frac{\alpha-1}{\alpha}}\Big(C_y-(\alpha\mathcal{L}_n(\Theta(t)))^{\frac{1}{\alpha}}\Big)$.

\newpage

\section{Experiment Details}\label{section: appendix experiment}
\begin{itemize}
    \item For the function regression problem (Fig \ref{figure: function regression bound}), we use the following settings. Model: 3-depth FNN $3\to1500\to1500\to1$ with normalization $1/m^p=1/4$. Dataset: $f^{*}(\mathbf{x})=(x_1+x_2^2+\sin(\pi x_3))/(1.25+\pi^2/4)$  $(\mathbf{x}\in[-1/\sqrt{3},1/\sqrt{3}]^3)$, $n_{tr} = 200000$ and $n_{te}=100000$. Algorithm: SGD, batch size=$2000$, the learning rate is chosen as Theorem \ref{thm disc SGD 2NN}: $\eta_t=\eta/\lceil\frac{t+1}{T_0}\rceil^\alpha$, $\alpha=0.67$, $T_0=1$ and $\eta = 0.1$, the scale of random initialization: $\kappa=4$.
    \item
For MNIST classification problem (Fig \ref{figure: MNIST bound}), we use the following settings. Model: 4-depth FNN $28\times28\to256\to64\to32\to2$ with normalization $1/m^p=1/4$. Dataset with normalization: MNIST (label=0,1) with normalizing $\left\|\mathbf{x}\right\|_2\leq1$, $\left\| y\right\|_2\leq C_y={1}/{4}$. Algorithm: SGD, batch size=$2000$, the learning rate is chosen as Theorem \ref{thm disc SGD 2NN}: $\eta_t=\eta/\lceil\frac{t+1}{T_0}\rceil^\alpha$, $\alpha=1$, $T_0=120$ and $\eta = 0.2$, the scale of random initialization: $\kappa=2$.

\item For the experiments about effect of hyperparamerters (Fig \ref{figure: bound-lr} and \ref{figure: bound-width}), we consider the function regression problem mentioned in the first experiment with two-layer neural network $1\to {\rm width} \to 1$ with normalization $1/m^p=1/4$. We adopt single variable method to research effects on our bounds with different widths and learning rates separately. Algorithm: SGD, batch size=$2000$, the learning rate is chosen as Theorem \ref{thm disc SGD 2NN}: $\eta_t=\eta/\lceil\frac{t+1}{T_0}\rceil^\alpha$, $\alpha=0.67$, $T_0=1$ and $\eta = 0.1$, the scale of random initialization: $\kappa=4$.

\item For large-scale experiments (Table \ref{table: large-scale}), we use the following settings. Dataset: CIFAR-10. Model: standard VGG networks (without batch normalization). Algorithm: SGD, batch size=100, the learning rate is chosen as Theorem \ref{thm disc SGD 2NN}: $\eta_t=\eta/\lceil\frac{t+1}{T_0}\rceil^\alpha$, $\alpha=1$, $T_0=100$, and stop criterion=$10^6$ iterations.
    {(I)} VGG-16; Cifar-10 (label=0,1) with different proportion of label noise $0$\%, $20$\%, $50$\%, $80$\%, $100$\%.
{(II)} VGG-16; subset of Cifar-10 with different number of class: $2$, $5$, $8$, $10$.
{(III)} Different network sizes VGG-11, VGG-13, VGG-16, VGG-19; on a subset of Cifar-10 (label=0, 1).
\end{itemize}

\end{document}